\newcommand{\mumon}{\mu^{\mathrm{separated}}}
\newcommand{\mubig}{\mu^{\mathrm{S_n}}}
\newcommand{\muTwo}{\mu^{SO(2)}}
\newcommand{\muThree}{\mu^{SO(3)}}
\newcommand{\muThreeStable}{\mu^{SO(3) \mathrm{ stable}}}
\newcommand{\muF}{\mu_{[\cdot]}}
\newcommand{\CWW}{robust~} 
\newcommand{\cCWW}{Robust~}
\newcommand{\stab}[1]{G_{#1}}
\newcommand{\ZZ}{\mathbb{Z}}
\newcommand{\ty}{\tilde{y}}
\newtheorem{thm}{Theorem}[section]
\newtheorem{cor}[thm]{Corollary}
\newtheorem{prop}[thm]{Proposition}
\theoremstyle{definition}
\newtheorem{remark}[thm]{Remark}
\newtheorem{defn}[thm]{Definition}
\newtheorem{example}[thm]{Example}
\newcommand{\I}{\mathcal{I}}
\newcommand{\N}{\mathcal{N}}
\newcommand{\F}{\mathcal{F}}
\newcommand{\E}{\mathcal{E}}
\newcommand{\RR}{\mathbb{R}}
\newcommand{\CC}{\mathbb{C}}
\newcommand{\MM}{\mathbb{M}}
\newcommand{\twopartdef}[4]
{
	\left\{
		\begin{array}{ll}
			#1 & \mbox{if } #2 \\
			#3 & \mbox{if } #4
		\end{array}
	\right.
}
\newcommand{\Iavg}{\I_{\mathrm{avg}}}
\newcommand{\Eavg}{\E_{\mathrm{avg}}}
\newcommand{\Ican}{\I_{\mathrm{can}}}
\newcommand{\Ecan}{\E_{\mathrm{can}}}
\newcommand{\Iframe}{\I_{\mathrm{frame}}}
\newcommand{\Eframe}{\E_{\mathrm{frame}}}
\newcommand{\Iw}{\I_{\mathrm{weighted}}}
\newcommand{\Ew}{\E_{\mathrm{weighted}}}
\newcommand{\Rdistinct}{\RR^{d\times n}_{\mathrm{distinct}} }
\newcommand{\Cequi}{C_{\mathrm{equi}}}
\newcommand{\Fequi}{F_{\mathrm{equi}}}
\renewcommand{\Finv}{F_{\mathrm{inv}}}
\newcommand{\Cinv}{C_{\mathrm{inv}}}
\icmltitlerunning{Equivariant Frames and the Impossibility of Continuous Canonicalization} 
\begin{document}

\twocolumn[
\icmltitle{Equivariant Frames and the Impossibility of Continuous Canonicalization}




\icmlsetsymbol{equal}{*}

\begin{icmlauthorlist}
\icmlauthor{Nadav Dym}{equal,aa}
\icmlauthor{Hannah Lawrence}{equal,bb}
\icmlauthor{Jonathan W Siegel}{equal,cc}
\end{icmlauthorlist}

\icmlaffiliation{aa}{Faculty of Mathematics, Faculty of Computer Science, Technion, Israel}
\icmlaffiliation{bb}{Department of Electrical Engineering and Computer Science, MIT, MA, USA}
\icmlaffiliation{cc}{Department of Mathematics, Texas A\&M University, TX, USA}

\icmlcorrespondingauthor{Nadav Dym}{nadavdym@technion.ac.il}

\icmlkeywords{Machine Learning, ICML}

\vskip 0.3in
]



\printAffiliationsAndNotice{\icmlEqualContribution} 

\begin{abstract}

Canonicalization provides an architecture-agnostic method for enforcing equivariance, with generalizations such as frame-averaging recently gaining prominence as a lightweight and flexible alternative to equivariant architectures. Recent works have found an empirical benefit to using probabilistic frames instead, which learn weighted distributions over group elements. 
In this work, we provide strong theoretical justification for this phenomenon: for commonly-used groups, there is no efficiently computable choice of frame that preserves continuity of the function being averaged. In other words, unweighted frame-averaging can turn a smooth, non-symmetric function into a discontinuous, symmetric function. To address this fundamental robustness problem, we formally define and construct \emph{weighted} frames, which provably preserve continuity, and demonstrate their utility by constructing efficient and continuous weighted frames for the actions of $SO(d)$, $O(d)$, and $S_n$ on point clouds. 



\end{abstract}

\section{Introduction}

Equivariance has emerged in recent years as a cornerstone of geometric deep learning, with widespread adoption in domains including biology, chemistry, and graphs \citep{jumper2021highly, corso2023diffdock, liao2023equiformerv2, satorras2021en, frasca2022subgraph}. This fundamental idea --- incorporating known data symmetries into a learning pipeline --- often enables improved generalization and sample complexity, both in theory \citep{petrache2024approximation,mei2021learning, bietti2021on, elesedy2021provably} and in practice \citep{batzner2022e3, liao2023equiformerv2}.

The genesis of equivariant learning focused on equivariant \emph{architectures}, i.e.  custom parametric function families containing only functions with the desired symmetries. However, equivariant architectures must be custom-designed for each group action, which reduces the transferability of 
engineering best practices. 
Moreover, the building blocks of many equivariant architectures (such as tensor products) are computationally intensive \citep{passaro2023reducing}. 

In light of these difficulties, the more lightweight and modular approach of frame-averaging has received renewed attention. Frame-averaging \citep{puny2021frame} 
harnesses a generic neural network $f$ to create an equivariant framework by averaging the network's output over input transformations. It is a direct extension of group-averaging (also known as the Reynolds operator), whereby $f$ is made invariant by averaging over all input transformations from a group $G$:
\begin{equation}\label{eq:invar_renolds}
    \Iavg[f](v) = \int_G f(g^{-1}v)dg.
\end{equation}
However, while group averaging scales with $|G|$ (which can be large or infinite), frame-averaging can enjoy computational advantages by 
averaging over only an input-dependent subset of $G$. 
A notable special case of frame-averaging is canonicalization (Figure \ref{fig:canon_and_venn}), which ``averages'' over a \emph{single, canonical} group transformation per point. 
Intuitively, canonicalization is 
a standardization of the input data, such as centering a point cloud with respect to translations. 
Frame-averaging methods are universal (so long as $f$ is universal), in the sense that they can approximate all continuous equivariant functions, and are projections\footnote{Recall that a projection $\mathcal{P}$ need only satisfy $\mathcal{P}(\mathcal{P}f)=\mathcal{P}f$.} 
onto the space of invariant functions. 
Recent approaches include using both fixed \citep{duval2023fae, du2022complete, du2023new},  and learned \citep{zhang2019permutations, 
kaba2022learned, luo2022pointcloud} frames and canonicalizations. 
Despite their simplicity, however, it seems that such approaches have generally not yet supplanted 
popular equivariant architectures in applications. 

In this work, we unearth an insidious problem with frames, which may shed light on their slow adoption in applications: they very often induce discontinuity. The absence of a continuous canonicalization for permutations was already observed by \citet{zhang2020fspool}. We prove that there are no continuous canonicalization for rotations, either. Moreover, we  show that the only continuity preserving frame for permutations is the $(n!)$-sized Reynolds operator, and that there is no continuity preserving frame with finite size for rotations in two dimensions.
In other words, even if the generic network $f$ is continuous, the frame-averaged function may not be! This constitutes a significant lack of robustness, in which one can slightly perturb an input, and have the predicted output entirely change.

To address this issue, we generalize \citet{puny2021frame}'s  definition of frames 
in two ways. First, we define \emph{weighted frames}, where group elements are assigned non-uniform, input dependent weights. Second, we observe that their definition of frame equivariance necessitates very large frames at points with large stabilizer. To avoid this, we define weak equivariance, which relaxes the notion of equivariance at points with non-trivial stabilizers. Finally, we define a natural notion of continuity for these generalized frames, and name the resulting frames \textbf{\CWW frames}. We show that invariant projection operators induced by robust frames \emph{always preserve continuity}. Finally, we show that robust frames of  moderate size can be constructed for group actions of interest, including the action of permutations (where small unweighted frames cannot preserve continuity). 

  Serendipitously, our results 
  coincide with several quite recent works, which demonstrate a significant benefit to using probabilistic or weighted frames quite similar to ours \citep{kim2023probabilistic, mondal2023adaptation, pozdnyakov2023smooth}. Thus, our results 
provide both a theoretical framework and strong justification for empirically successful approaches which learn distributions over the group, while also suggesting future avenues in practice. 

We visualize canonicalization and frame-averaging on the left of Figure \ref{fig:canon_and_venn}, while the right 
shows the 
relation 
between canonicalization, averaging, frames, and weighted frames.

  \begin{figure*}
     \centering
     \includegraphics[width=0.75\linewidth]{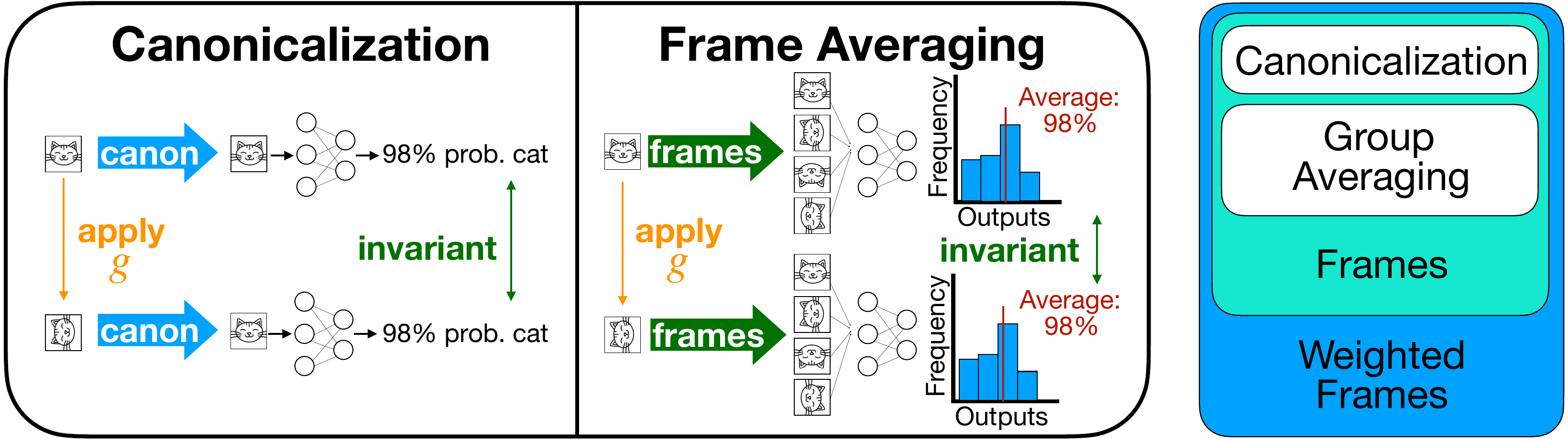}
     \caption{\textbf{Left:} Canonicalization and its generalization to frame averaging. Under group transformation of the input, both its canonicalization and the set of inputs transformed by the frame are invariant. \textbf{Right:} 
     Group averaging and group canonicalization are special cases of frames of maximal and minimal size, respectively. Frames, in turn, are a special case of weighted frames.}
        \label{fig:canon_and_venn}
     \end{figure*}

\subsection{Our Contributions} 
To summarize, 
our main contributions are: 
\begin{enumerate}
\item \textbf{Limitations of canonicalization:} Using tools from algebraic topology, we show that 
\emph{a continuous canonicalization does not exist} for $S_n$, $SO(d)$, and $O(d)$. 
\item \textbf{Limitations of frames:}
We show that for finite groups acting freely on a connected space, the only  frame which preserves continuity is the 
Reynolds operator. In particular, this is the case for the group of permutations $S_n$ acting on $d\geq 2$ dimensional point clouds. For the infinite group $SO(2)$ we show that there is no continuity preserving frame of finite cardinality. 
\item \textbf{\cCWW frames:} We define weighted frames as an alternative to standard (unweighted) frames, and introduce notions of weak equivariance  and continuity. We call weighted frames satisfying both criteria \emph{\CWW frames}. 
Unlike unweighted frames, the invariant projection operators 
induced by \CWW frames \emph{always} preserve continuity. With some care, this can be generalized to the equivariant case as well. 
\item \textbf{Examples of \CWW frames:} We give constructions of \CWW frames of moderate, polynomial size for $S_n$, $SO(d)$, and $O(d)$. 
For $S_n$, we also provide complementary lower bounds on the size of any \CWW frame, which is directly related to the computational efficiency of implementation.
\end{enumerate}

To give a more concrete picture, we summarize our results  restricted to a large family of examples: the various group actions of ``geometric groups'' on point clouds. A point cloud is a matrix $X\in \RR^{d\times n}$, with columns denoted $(x_1,\ldots,x_n) $. Point clouds arise in many applications, ranging from graphics and computer vision (as representations of objects and scenes in the physical world), to chemistry and biology (as representations of molecular systems). Group actions of interest here include the application of translation, rotations, orthogonal transformations, or permutations. 

Our results for point clouds and these group actions are described in  Table \ref{tab:main_results}. The action of translations has trivial continuous canonicalizations. When $n,d>1$, the action of permutations admits no continuous canonicalization, nor any unweighted frame which preserves continuity besides the full Reynolds frame. In contrast, robust frames whose complexity is linear in the input dimension can be constructed, at least for 
the subset $\Rdistinct$
 (on which the permutation group acts freely). When considering robust frames defined on the whole space, we require a larger cardinality, but it is still significantly smaller than the full $n!$ cardinality required by unweighted frames.

 For the action of rotations on $\RR^{d\times n}$, we prove that there is no continuous canonicalization when $n \geq d$. Moreover, when $d=2$, we show that no continuity preserving frame of finite cardinality exists, either. In contrast, robust frames exist with a cardinality which (when $d$ is small and $n\rightarrow \infty$) scales like $ \sim n^{d-1}$. 

\begin{table*}[t]
    \centering
    \begin{tabular}{|c|c|c|c|c|c|} \hline 
         \textbf{Action}&  Translation&    Permutation & Permutation&  $SO(d)$ & $O(d)$\\ \hline 
         \textbf{Domain}&  $\RR^{d\times n}$ &  $\Rdistinct$ &$\RR^{d\times n}$&  $\RR^{d\times n} $& $\RR^{d\times n}$\\ \hline 
         Canonicalization&  yes&  no &no&  no if $n\geq d$& no if $n >  d$\\ \hline 
         Frame&  1&    $N=n!$&$N=n!$&  $N = \infty$ if $n \geq d$ & $N > 1$ if $n > d$\\ \hline 
 Weighted frame& 1&  $N\leq (n-1)d$&$\frac{n}{2}< N\leq  n^{2(d-1)} $& $N\leq (d-1)!\binom{n}{d-1}$ & $N\leq 2(d-1)!\binom{n}{d-1}$\\ \hline
    \end{tabular}
    \caption{Summary of main results. For various group actions, we show lower and upper bounds on the minimal cardinality $N$ for which a continuity-preserving frame or weighted frame exists, and  whether a continuous canonicalization exists (in which case $N=1$). The $N=\infty$ result for unweighted $SO(d)$ frames is proven only when $d=2$ (although we conjecture it holds for general $d \geq 2$ as well).  }
    \label{tab:main_results}
\end{table*}

\subsection{Paper structure}
In Section \ref{sec:projections_canonicalization_and_frames}, we 
introduce useful mathematical preliminaries, including criteria for an equivariant projection operator to preserve continuity. 
We then establish impossibility results for continuous canonicalizations and frames. As a solution, Section \ref{sec:weighted_frames} defines weighted frames, and establishes criteria under which they are \CWW (i.e. preserve continuity). The following two sections, Section \ref{sec:weighted_frames_for_permutations} and Section \ref{sec:weighted_frames_for_rotations}, give explicit continuity-preserving \CWW frame constructions for invariance under $S_n$ and $SO(d)$, respectively. Finally, Section \ref{sec:inv2eq} discusses extensions to equivariance.

\section{Projections, canonicalization, and frames}\label{sec:projections_canonicalization_and_frames}
\paragraph{Preliminaries} Unless stated otherwise, throughout the paper we consider compact groups $G$ acting linearly and continuously on (typically) finite dimensional real vector spaces $V$ and $W$, or else on subsets of $V$ and $W$  closed under the action of $G$ (see Appendix \ref{sec:additional_background} for a formal definition) . We will work with the groups $S_n$ of $n$-dimensional permutations, $O(d)$ of orthogonal matrices ($M\in\mathbb{R}^{d\times d }\: s.t. \: M^TM = I$), and $SO(d)$ of rotations ($M\in O(d)\: s.t. \: \text{det}(M)=1$).

A function $f:V \rightarrow W$ is equivariant if $f(gv)=gf(v) \: \forall v \in V, g \in G$, and invariant (a special case where $G$ acts trivially on $W$) if $f(gv)=f(v) \forall v \in V, g \in G$. A compact group $G$ admits a unique Borel probability measure which is both left and right invariant. As in \eqref{eq:invar_renolds},  we denote integration according to this measure by $\int dg $. The orbit of $v\in V$ is 
$[v] := \{ gv: \: g \in G  \}$, and the stabilizer is $\stab{v}=\{g\in G| \, gv=v \}$. 
We say that $v$ has a trivial stabilizer if $\stab{v}=\{e\}$, where $e \in G$ denotes the identity element. $G$ acts on $V$ \emph{freely} if $\stab{v}=\{e\} \: \forall v \in V$. 
For $H$ a subgroup of $G$, $G / H$ denotes the left cosets of $H$ in $G$. 

For $G$ acting on spaces $V$ and $W$ as described, let $F(V,W)$ denote the space of functions from $V$ to $W$, and let $C(V,W)$ denote the subset of these functions which are also continuous. Let $\Fequi(V,W)$ (and $\Cequi(V,W)$) denote (continuous) functions which are also \emph{equivariant}. Notions of denseness and boundedness in $C(V,W)$,  described in this paper, are with respect to the topology of uniform convergence on compact subsets of $V$. 
Full proofs of all claims in the paper are given in the Appendix. 
\subsection{Projection operators} 
Group averaging, as well as the cheaper alternatives summarized in Figure \ref{fig:canon_and_venn}, achieve equivariant models via \emph{equivariant projection operators}, a notion we now define formally.

\begin{defn}[BEC operator]\label{def:proj}
Let $\E:F(V,W)\to F(V,W) $ be a linear operator. We say that $\E$ is a
\begin{enumerate}
	\item \textbf{Bounded operator} If for every compact $K$,  there exists a positive constant $C_K$ such that  
	$$\max_{v\in K} \|\E[f](v)\|\leq C_K\max_{v\in K}\|f(v)\| $$
	\item \textbf{Equivariant projection operator} for every $f\in F(V,W)$,  $\E(f)$ is equivariant, and moreover if $f$ is equivariant then $\E[f]=f$.
	\item \textbf{Preserves continuity} If $f:V\to W$ is continuous,  $\E[f]$ will also continuous. 
\end{enumerate}
\end{defn}


If $\E$ satisfies all three conditions, we call it a BEC 
operator. BEC operators can be used to define universal, equivariant models which preserve continuity. 

\begin{restatable}{prop}{projuniv}\label{prop:proj2univ}
Let 
$\E:F(V,W)\to F(V,W)$ be a BEC operator, and  $Q\subseteq C(V,W) $ a dense subset . Then 
$\E(Q)=\{\E(q)| ~ q \in Q\}$
contains only continuous equivariant functions, and is dense in $\Cequi(V,W) $ .
\end{restatable}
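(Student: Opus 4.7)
The plan is to verify the two claims in sequence, with both essentially falling out of the three properties in the definition of a BEC operator combined with linearity.

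First, I would show that $\E(Q)$ consists only of continuous equivariant functions. Pick any $q \in Q$. Since $Q \subseteq C(V,W)$, $q$ is continuous, so by the continuity-preservation property of $\E$, the function $\E(q)$ is continuous. By the equivariant projection property, $\E(q)$ is equivariant. Hence $\E(q) \in \Cequi(V,W)$, which gives the first claim.

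Second, I would prove density in $\Cequi(V,W)$. Fix $f \in \Cequi(V,W)$ and a compact set $K \subseteq V$; I need to show $f$ can be uniformly approximated on $K$ by elements of $\E(Q)$. Since $Q$ is dense in $C(V,W)$ (in the topology of uniform convergence on compacta), there exists a sequence $\{q_n\} \subseteq Q$ with $\max_{v \in K} \|q_n(v) - f(v)\| \to 0$. Because $\E$ is an equivariant projection and $f$ is already equivariant, $\E[f] = f$. Using linearity of $\E$ together with boundedness on $K$, I get
\[
\max_{v \in K} \|\E[q_n](v) - f(v)\| = \max_{v \in K} \|\E[q_n - f](v)\| \leq C_K \max_{v \in K} \|q_n(v) - f(v)\|,
\]
which tends to zero. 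Thus $\E(q_n) \to f$ uniformly on $K$, establishing density.

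There is no real obstacle here: the proposition is essentially a formal consequence of packaging the three BEC properties. The only mild point of care is invoking boundedness to transfer uniform convergence of $q_n \to f$ on compacta through the linear operator $\E$, which is precisely what condition (1) is designed to enable. Linearity of $\E$ (assumed in Definition \ref{def:proj}) is used to rewrite $\E[q_n] - \E[f]$ as $\E[q_n - f]$, and the projection property is used to replace $\E[f]$ by $f$.
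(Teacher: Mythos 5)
Your proof is correct and follows essentially the same route as the paper's: apply continuity-preservation and the projection property to get the first claim, then use $\E[f]=f$, linearity, and the boundedness constant $C_K$ to transfer uniform approximation on compacta from $Q$ to $\E(Q)$. The only cosmetic difference is that you phrase the density argument with a sequence $q_n$ while the paper uses a single $q$ within an $\epsilon$--$\delta$ argument.
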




We will often handle the invariant case separately, denoting $\I$ instead of $\E$ and $\Cinv$ or $\Finv$ instead of $\Cequi$ or $\Fequi$. 

We note that equivariant universality can be obtained using other methods, based on approximating all equivariant polynomials \cite{yarotsky2022universal,dym2020universality}, or exploiting separating invariants \cite{gortler_and_dym,hordan2,scalars}, see also \cite{kurlin2023polynomial,widdowson2023recognizing,cahill2024group}. Our focus in this paper is on projection based methods.

Proposition \ref{prop:proj2univ} motivates our search for BEC operators. 
Frame-like constructions typically produce bounded and equivariant operators, but preserving continuity is more challenging, and thus is the main focus of this paper.

One simple method to obtain
BEC operators is by group averaging. In the invariant setting this was defined in  Eq. \ref{eq:invar_renolds}, and we can define an equivariant projection operator $\Eavg$ as $\Eavg[f](x)=\int_G gf(g^{-1}x)dg$. It is not difficult to show that $\Eavg$ is a BEC operator, and therefore preserves continuity. However, the disadvantage of group averaging is the complexity of computing such an operator when the group is large. Thus our goal is designing \emph{efficient} \CWW frames. 



\subsection{Canonicalization}
Instead of defining projection operators by averaging over \emph{all} members in the orbit of a point $v$, an \emph{orbit canonicalization} $y:V\to V $ maps all elements in any given orbit to a \emph{unique} orbit element, and ``averages'' over it. 
\begin{defn}[Orbit canonicalization]
A function $y:V\to V$ is an orbit canonicalization if 
\begin{enumerate}
\item $y$ is $G$ invariant: 
$y_{gv}=y_v, \quad \forall g\in G, v\in V $
\item $y$ maps $v$ to a member of its orbit: $y_v \in [v], \quad \forall  v \in V$
\end{enumerate}
\end{defn}
In the case where the action of $G$ on $V$ is \emph{free},  
the orbit canonicalization naturally induces  a group canonicalization $h:V \to G $. Namely, if $y_v$ is a canonical element in the orbit $[v]$, then the fact that $v$ has a trivial stabilizer implies that there is a unique $h_v \in G$ such that 
$y_v=h_v^{-1}v $. 
\begin{example}\label{example:translation}
For the (free) action of $\RR^d$ on $\RR^{d\times n}$ by translation defined above, a simple orbit canonicalization is the map from $X\in \RR^{d\times n}$ to the unique $Y$ whose first coordinate is zero, that is 
$(0,x_2-x_1,\ldots,x_n-x_1)$.
The associated group canonicalization is the map $h_X=x_1 $. 
\end{example}

\paragraph{Canonicalization-based equivariant projection operators.}
If $y_v$ is a canonicalization, we can define an invariant projection operator on functions $f:V\to \RR $ via
$$\Ican[f](v)=f(y_v) .$$

 One can easily check that $\Ican[f]$ is invariant, and that if $f$ already is invariant then $\Ican[f]=f $. 

Similarly, if $G$ acts on $V$ freely so that $h_v$ is well-defined, then 
we can also write $\Ican[f](v)=f(h_v^{-1}v)$, which shows how canonicalizations are frames of cardinality one. Moreover, we can  define an  equivariant projection operator via  
$\Ecan[f](v)=h_vf(h_v^{-1}v)$.
In general, it is not difficult to see that $\Ican $ and $\Ecan$ are bounded projection operators in the sense of Definition \ref{def:proj}. However, below we will show that even the invariant projection operator often cannot achieve  continuity preservation. 

From the definition of $\Ican$, it is clear that if the canonicalization $h_v$ is continuous, then $\Ican$ preserves continuity. The following 
shows that these notions are in fact equivalent. 
\begin{restatable}{prop}{canon}
Let $y:V\to V$ be a canonicalization. Then $\Ican:F(V,\RR)\to F(V,\RR)$ preserves continuity if and only if $y $ is continuous. 
\end{restatable}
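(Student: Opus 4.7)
The statement is an equivalence, so I would address each direction separately. The forward direction (if $y$ is continuous, then $\Ican$ preserves continuity) is essentially a one-line observation: for any continuous $f:V\to\RR$, the function $\Ican[f](v) = f(y_v) = (f\circ y)(v)$ is a composition of continuous maps, hence continuous.

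For the reverse direction, assume $\Ican$ preserves continuity; I want to conclude that $y:V\to V$ is continuous. The natural strategy is to reduce continuity of the vector-valued map $y$ to continuity of many scalar-valued compositions. The crucial point is that the operator $\Ican$ is defined on \emph{all} of $F(V,\RR)$, not just the invariant functions, so I have the freedom to plug in arbitrary continuous test functions. Concretely, for any continuous $\phi:V\to\RR$, take $f=\phi$; then by hypothesis $\Ican[\phi](v) = \phi(y_v)$ is continuous in $v$.

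To finish, I would specialize $\phi$ to a family of continuous scalar functions that jointly determines continuity into $V$. In the setting of the paper, $V$ is (a $G$-invariant subset of) a finite-dimensional real vector space, so the coordinate projections $\phi_i(v) = v_i$ are continuous; their restrictions to $V$ are continuous as well. Applying the previous step with each $\phi_i$ shows that each coordinate of $y_v$ is a continuous function of $v$, which is equivalent to continuity of $y$ itself.

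The only possible subtlety (rather than a real obstacle) is ensuring $V$ admits enough continuous real-valued functions to detect continuity of vector-valued maps into $V$. This is automatic for the spaces considered in the paper: they inherit the subspace topology from a finite-dimensional real vector space, and continuity of a map into such a subspace is equivalent to continuity of each ambient coordinate. Thus no additional regularity or separation hypotheses on $V$ are required beyond what is already assumed.
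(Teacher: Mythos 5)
Your proof is correct, and the reverse direction takes a genuinely different route from the paper's. The paper argues by contraposition: assuming $y$ is discontinuous, it extracts a sequence $v_n\to v$ with $\|y_{v_n}-y_v\|>\epsilon$ along a subsequence, and then builds a single Urysohn-type bump function $f$ equal to $1$ at $y_v$ and $0$ outside the $\epsilon$-ball, so that $\Ican[f](v_n)=f(y_{v_n})=0$ fails to converge to $\Ican[f](v)=1$. You instead argue directly, testing $\Ican$ against the family of coordinate projections $\phi_i$ and concluding that every coordinate of $y_v$, hence $y$ itself, is continuous. Both are valid: the key observation in either case is that $\Ican$ acts on \emph{all} of $F(V,\RR)$, so one is free to choose non-invariant test functions. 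Your version is shorter and avoids Urysohn's lemma, but it leans on the ambient linear structure of $V$ (which the paper's standing assumptions do supply, since $V$ is a $G$-stable subset of a finite-dimensional real Hilbert space); the paper's contrapositive argument needs only that $V$ is a metric space, and gets by with a single well-chosen test function rather than a separating family. One could also hybridize: your direct argument goes through on any metric space by taking $\phi(w)=d(w,y_v)$ as the test function at each point $v$ in turn.
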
 



For some relatively simple examples, continuous canonicalizations are available: 
\begin{enumerate}
\item For the action of $O(d)$ on $\RR^d$, $y(x)=\|x\|e_1$. 
\item For the action of $S_n$ on $\RR^n$, $y(x)=sort(x)$. 
\end{enumerate}
\begin{figure*}
     \centering
     \includegraphics[width=\linewidth]{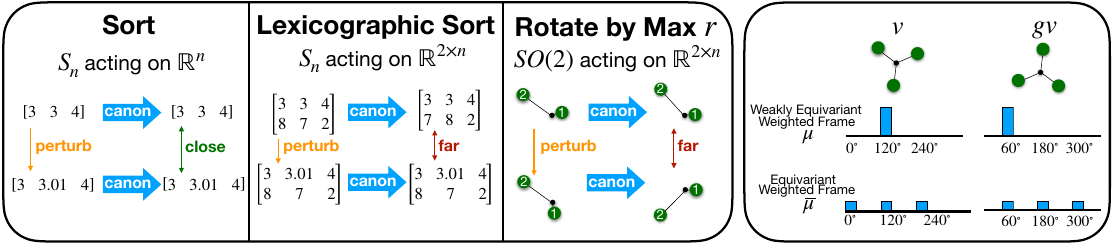}
     \caption{\textbf{Left: }Examples of natural canonicalizations. Canonicalizing $\mathbb{R}^n$ with respect to $S_n$ via sorting is continuous, but the lexicographic generalization to $\mathbb{R}^{2 \times n}$ is not. Similarly, one can canonicalize $2D$ ordered point clouds with respect to $SO(2)$ by applying the rotation that aligns the first node with the positive $x$-axis, but this is discontinuous. \textbf{Right: }Visualization of a weakly equivariant frame $\mu$ for $SO(2)$ acting on an unordered point cloud $v$ with $120^\circ$ self-symmetry, evaluated both at $v$ and at $gv$, a $60^\circ$ rotation of $v$. $\mu_{gv}$ is not a $60^\circ$ rotation of $\mu_v$, but $\overline{\mu}$ is equivariant by definition, so $\overline{\mu}_{gv}$ \emph{is} a $60^\circ$ rotation of $\overline{\mu}_v$. Thanks to the self-symmetry of $v$, $v$ is exactly the same whether rotated by $0^\circ$, $120^\circ$, or $240^\circ$. Thus, invariant projection by $\mu$ or $\overline{\mu}$ has the same result; $\mu$ is simply $3x$ more efficient.}
        \label{fig:discontinuous_canons}
     \end{figure*}
In contrast, the following are ``natural'' orbit canonicalizations, but are not continuous:
\begin{enumerate}
\item The action of $S_n$ on $\RR^{d\times n}$ with $d>1$ has a canonicalization defined by lexicographical sorting. As shown in Figure \ref{fig:discontinuous_canons}, this is not continuous.
\item The action of $SO(2) \cong S^1$ on $\CC^n$ has a natural orbit canonicalization by rotating $z\in \CC^n $ so that its first coordinate $z_1$ is a real positive number. However, this is not uniquely defined when $z_1=0$, which induces a discontinuity (see also Figure \ref{fig:discontinuous_canons}). 
\end{enumerate}
We will show that,  not only are the canonicalizations above 
not continuous, but there do not exist \emph{any} continuous canonicalizations for these actions (Theorem \ref{thm:onlyReynold}). 


To give a sense for the practical utility of this fact, consider that learned canonicalization methods for $SO(3)$ \citep{kaba2022learned, kim2023probabilistic} involve orthogonalizing learned features via the Gram-Schmidt process. A priori, depending on the learned network, this may or may not be discontinuous (if the learned features are linearly dependent or nearly zero).  
Our results answer definitively that 
no continuous canonicalization is possible. 

\paragraph{Examples where there is no continuous canonicalization.}
We now detail a strategy for proving that a continuous canonicalization does not exist, and apply this method to numerous examples of interest.

Our general proof strategy is as follows: we begin by noting that a continuous canonicalization $y:V \rightarrow V$ really defines a continuous map on the \emph{quotient}, $\ty:V/G\rightarrow V$, which is a right inverse of the quotient map $q:V\rightarrow V/G$. 
Specifically, continuity and $G$-invariance of $y$ imply that $y$ induces a continuous map on the quotient space $V/G$. Further, if $y$ maps each $v$ to a member of its orbit, it follows that $q\circ \ty = I$ is the identity map on the quotient space $V/G$. 

A canonicalization on $V$ will also be a canonicalization when restricted to a $G$-stable subset $Y\subseteq V $. Thus, for any $G$-stable subset $Y$, a continuous canonicalization $y$ yields a right inverse of the quotient map $Y\rightarrow Y/G$. Our strategy will be to show that for some $G$-stable set $Y$, the quotient map $Y\rightarrow Y/G$ cannot have a right inverse. We will prove this by finding topological invariants, specifically homotopy and homology groups (see for instance \citet{hatcher2002}), which contradict the existence of such a right inverse $\ty$.

We now apply this method to prove the impossibility of continuous canonicalization for several groups. We begin with a well known example which will be helpful for illustrating our methodology.

\begin{prop}
The action of $\ZZ$ on $\RR$ by addition does not have a continuous canonicalization.
\end{prop}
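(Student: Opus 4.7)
The plan is to apply the general strategy outlined just before the proposition: show that a continuous canonicalization would descend to a continuous right inverse of the quotient map $q:\RR\to \RR/\ZZ$, and then use a topological invariant to rule out the existence of such a right inverse.

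First, I would observe that the action of $\ZZ$ on $\RR$ is free, and that the quotient space $\RR/\ZZ$ is homeomorphic to the circle $S^1$ (e.g.\ via the map $[t]\mapsto e^{2\pi i t}$). Suppose, toward a contradiction, that $y:\RR\to\RR$ is a continuous canonicalization. Since $y$ is $\ZZ$-invariant and continuous, it factors through the quotient as a continuous map $\ty:\RR/\ZZ\to\RR$, and since $y(v)\in[v]$ for all $v$, the composition $q\circ\ty$ is the identity on $\RR/\ZZ\cong S^1$.

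Next I would derive the contradiction. My preferred route is via the fundamental group: applying the functor $\pi_1$ to the identity factorization $q\circ\ty=\mathrm{id}_{S^1}$ yields
\[
\mathrm{id}_{\pi_1(S^1)}=q_*\circ \ty_*:\pi_1(S^1)\longrightarrow \pi_1(\RR)\longrightarrow \pi_1(S^1).
\]
But $\pi_1(S^1)\cong\ZZ$ while $\pi_1(\RR)=0$, so this composition must be the zero map, contradicting the fact that it is the identity on $\ZZ$. An equally clean alternative avoids $\pi_1$ entirely: the relation $q\circ\ty=\mathrm{id}$ forces $\ty$ to be injective, so $\ty$ is a continuous bijection from the compact space $S^1$ onto its image, hence a homeomorphism onto that image; but the image is a compact connected subset of $\RR$, i.e.\ a closed interval, and $S^1$ is not homeomorphic to any interval (removing an interior point disconnects an interval but not $S^1$).

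No step here looks like a serious obstacle; the main subtlety is simply the passage from a $G$-invariant continuous section-like map on $V$ to an honest continuous right inverse on the quotient $V/G$, which I would spell out carefully since this reduction is exactly the template that will be reused for $S_n$, $SO(d)$, and $O(d)$ in the subsequent impossibility results.
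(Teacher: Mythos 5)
Your primary argument via the fundamental group is exactly the paper's proof: factor the canonicalization through $\RR/\ZZ\cong S^1$ to get a continuous right inverse of the quotient map, then apply $\pi_1$ to derive the contradiction $\mathrm{id}_{\ZZ}=q_*\circ\ty_*$ factoring through the trivial group. Your alternative elementary argument (a continuous injection $S^1\hookrightarrow\RR$ would be a homeomorphism onto a compact interval, impossible) is also valid, but the main route coincides with the paper's.
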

\begin{proof}
For this example, we use the fundamental group as an obstruction. The fundamental group is an invariant of topological spaces which essentially consists of loops in the space (see \citet{hatcher2002}, Chapter 1). 
Maps between spaces induce corresponding maps between their fundamental groups, so 
the fundamental group can be used as an obstruction to show that certain maps cannot exist. 

The quotient $\RR/\ZZ $ is isomorphic to $S^1$, whose fundamental group is $\ZZ$, while the fundamental group of $\RR$ is trivial. A continuous canonicalization would give a map $\ty:S^1\rightarrow \mathbb{R}$ such that
$
    S^1\xrightarrow{\ty} \RR\xrightarrow{q} S^1
$
is the identity on $S^1$. Thus, the induced maps on the fundamental group would satisfy
$
    \ZZ\xrightarrow{\ty^*} \{0\}\xrightarrow{q^*} \ZZ
$
is the identity on $\ZZ$, which is impossible.
\end{proof}





\begin{restatable}{prop}{OtwoandSOtwo}\label{prop:SOno_and_Ono}
   Consider $O(d)$ and $SO(d)$ acting on $\RR^{d\times n}$. If $n > d \geq 1$ for $O(d)$, or $n \geq d \geq 2$ for $SO(d)$, 
   then there is no continuous canonicalization.
\end{restatable}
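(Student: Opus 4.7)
Following the general strategy from the excerpt, my plan is to identify, for each group $G$, a $G$-stable subset $Y \subseteq V = \RR^{d \times n}$ and show that the quotient map $q : Y \to Y/G$ admits no continuous right inverse via a topological obstruction.

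For $SO(d)$ with $n \geq d \geq 2$, I first reduce to the case $n = d$: the subspace $\RR^{d \times d} \times \{0\} \subset V$ (matrices whose last $n - d$ columns vanish) is $SO(d)$-invariant, so any continuous canonicalization on $V$ restricts to a continuous canonicalization on $\RR^{d \times d}$. Now take $Y = S^{d^2 - 1} \subset \RR^{d \times d}$, the unit Frobenius-norm sphere. The $SO(d)$-invariant map $X \mapsto X^T X$ identifies $Y/SO(d)$ with a branched 2-fold cover of the trace-one spectraplex $\Sigma_d = \{G \succeq 0 : \mathrm{tr}(G) = 1\}$: on the rank-$d$ interior the cover is 2-to-1, with the two sheets distinguished by $\mathrm{sign}(\det X)$, while on the rank-deficient boundary it is 1-to-1. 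Since $\Sigma_d$ is a compact convex body of dimension $N := d(d+1)/2 - 1$, it is homeomorphic to $B^N$ with boundary $S^{N-1}$, and its branched double cover along this boundary is precisely the topological double $B^N \cup_{S^{N-1}} B^N \cong S^N$. Hence $Y/SO(d) \cong S^N$. A continuous section $s : S^N \to S^{d^2-1}$ of $q$ would induce an injection $\pi_N(S^N) = \ZZ \hookrightarrow \pi_N(S^{d^2-1})$, but $N < d^2 - 1$ whenever $d \geq 2$, so $\pi_N(S^{d^2-1}) = 0$, a contradiction.

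For $O(d)$ with $n > d \geq 1$, I instead restrict to the Stiefel manifold $Y = V_d(\RR^n) := \{X : X X^T = I_d\}$, on which $O(d)$ acts freely by left multiplication. The quotient is the Grassmannian $G_d(\RR^n)$, and $Y \to Y/O(d)$ is the orthonormal frame bundle of the tautological rank-$d$ vector bundle. A continuous section would trivialize this principal bundle and hence the tautological bundle, rendering it orientable; this contradicts the classical fact that its first Stiefel--Whitney class $w_1$ is nonzero for $n > d$. In the base case $d = 1$, this specializes to the non-existence of a continuous section of the double cover $S^{n-1} \to \mathbb{R}P^{n-1}$ for $n \geq 2$, which fails since both total space and base are connected.

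The main technical point is justifying the homeomorphism $S^{d^2-1}/SO(d) \cong S^N$ in the $SO(d)$ argument. One must verify that the two orientation sheets over the interior of $\Sigma_d$ are glued along $\partial \Sigma_d$ via the identity map, which follows because each rank-deficient $SO(d)$-orbit has a unique preimage in $Y/SO(d)$, so the two sheets collapse to a common fiber there. Once this is established, the obstruction is clean and uniform in $d$. For the $O(d)$ case, the non-triviality of the tautological bundle is the key input; in small cases it can also be extracted directly from the long exact sequence of the fibration $O(d) \to V_d(\RR^n) \to G_d(\RR^n)$ applied to $\pi_1$.
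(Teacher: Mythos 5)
Your proof is correct, and it takes a genuinely different route from the paper's. The paper proves the $SO(d)$ case by induction on $d$: it handles $d=2$ directly by explicitly constructing the Hopf fibration $S^3 \to S^2$ and using $H_2$ as the obstruction, then carries out a fairly intricate inductive step showing that a continuous canonicalization for $(\RR^{d\times d}, SO(d))$ yields one for $(\RR^{(d-1)\times(d-1)}, SO(d-1))$ by restricting to a neighborhood of a fixed rank-one matrix, correcting the last column with an explicit rotation, and extracting a lower-dimensional canonicalization. The paper's $O(d)$ argument is similarly inductive from the base case $d=1$. By contrast, you identify $S^{d^2-1}/SO(d)$ with $S^N$, $N = d(d+1)/2 - 1$, uniformly in $d$ via the branched-double-cover picture over the spectraplex, and then the obstruction $\pi_N(S^{d^2-1}) = 0 \neq \ZZ = \pi_N(S^N)$ applies immediately. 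This avoids the inductive machinery entirely. For $O(d)$ you use the free action on the Stiefel manifold and the nonvanishing of $w_1$ of the tautological bundle over the Grassmannian, which again handles all $d$ in one stroke.

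A few points worth tightening. In the $SO(d)$ argument, the action of $SO(d)$ on $S^{d^2-1}$ is not free for $d \geq 3$ (matrices of rank $\leq d-2$ have stabilizer isomorphic to $SO(d - \mathrm{rank})$), so the claim $S^{d^2-1}/SO(d) \cong S^N$ cannot be read off from a fibration; your sketch that "the two sheets collapse to a common fiber" should be promoted to the following clean argument: the map $X \mapsto (X^TX, \mathrm{sign}\det X)$ is well-defined and continuous on the closed sets $\{\det X \geq 0\}$ and $\{\det X \leq 0\}$ with target the double $D(\Sigma_d)$, the two definitions agree on the overlap $\{\det X = 0\}$ (since both land on the identified boundary), so by the pasting lemma the glued map $\phi: S^{d^2-1} \to D(\Sigma_d)$ is continuous; it factors through the quotient and the induced map $S^{d^2-1}/SO(d) \to D(\Sigma_d)$ is a continuous bijection (injectivity because two matrices with equal Gram matrix and, when full rank, equal sign of determinant lie in the same $SO(d)$-orbit) from a compact space to a Hausdorff space, hence a homeomorphism. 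The convexity of $\Sigma_d$ then gives $\Sigma_d \cong B^N$, $\partial \Sigma_d \cong S^{N-1}$, and the double is $S^N$. For $O(d)$, your $d=1$ aside ("fails since both total space and base are connected") should say explicitly that a nontrivial connected $k$-fold cover of a connected space admits no section; alternatively, just cite $w_1(\gamma^1_n) \neq 0$ for $n \geq 2$, which is the $d=1$ case of your general argument and makes the special-casing unnecessary.
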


\begin{proof}[Proof idea]
For $SO(d)$, we reduce to 
$n=d=2$ and use an 
algebraic topology argument similar to the one before, 
but with homology groups. The proof for $O(d)$ is similar.
\end{proof}

We remark that this result is sharp, i.e. a continuous canonicalization exists for $O(d)$ when $n\leq d$ and for $SO(d)$ when $n < d$. We construct such canonicalizations in Appendix \ref{app:low}.



\subsection{Frames}

\citet{puny2021frame} generalize canonicalization 
to allow for \emph{averaging} over an equivariant set of points 
instead. Concretely, they define a frame $\mathcal{F}$  as 
$\mathcal{F}:V \rightarrow 2^G \backslash \emptyset$, which is equivariant: $\mathcal{F}(gv) = g\mathcal{F}(v)$,
where the equality is between sets. 
The invariant projection operator $\Iframe $ induced by the frame is defined for every  $f:V \to \RR$ as 
\begin{align*}
 \Iframe[f](v) := \frac{1}{|\mathcal{F}(v)|}\sum_{g \in \mathcal{F}(v)} f(g^{-1}v).
\end{align*}
The equivariant projection operator $\Eframe$ is similar, but with summand $gf(g^{-1}v)$ instead. 

We note that, when the frame maps to $G \in 2^G$ for all elements of the input space $\mathcal{M}$, then frame-averaging reduces to group-averaging. 
On the other extreme, a frame where  $|\mathcal{F}(v)|=1 \: \forall v$ 
is exactly a group canonicalization.

\citet{puny2021frame} provide several 
examples where frames seem a natural alternative to canonicalization. 
Nonetheless, in many cases (such as PCA), these frames 
do not preserve continuity. 
It turns out that this can be unavoidable: 
\begin{restatable}{thm}{onlyReynold}\label{thm:onlyReynold}
Let $G $ be a finite group acting  continuously on a  metric space $V$, let $V_{free}$ denote the points in $V$ with trivial stabilizer, and let  $\F$ be a frame which preserves continuity on $V_{free}$. If $V_{free}$ is connected, then  $\F(v)=G$ for all $v$ in the closure of   $V_{free}$.
\end{restatable}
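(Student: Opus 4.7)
My plan is to first show $\F(v) = G$ throughout $V_{free}$ via bump functions and connectedness, then extend to the closure using continuity at limit points. Note first that $V_{free}$ is open in $V$, as the complement of the finite union of closed fixed-point sets $\{v : gv = v\}$ for $g \neq e$.

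Fix $v_0 \in V_{free}$. Since the stabilizer is trivial, the orbit $\{g^{-1}v_0 : g \in G\}$ consists of $|G|$ distinct points. For each $g_0 \in G$ I build a continuous bump $f_{g_0} : V \to \R$ that equals $1$ near $g_0^{-1}v_0$ and vanishes near every other orbit point, so that for $v$ in a small neighborhood $U \subseteq V_{free}$ of $v_0$, only the $g = g_0$ term in the frame-average survives:
\[
\Iframe[f_{g_0}](v) = \frac{f_{g_0}(g_0^{-1}v)\cdot \mathbf{1}[g_0 \in \F(v)]}{|\F(v)|}.
\]
Since $\Iframe[f_{g_0}]$ is continuous on $V_{free}$ by hypothesis and the factor $f_{g_0}(g_0^{-1}v)$ stays positive near $v_0$, the ratio $\mathbf{1}[g_0 \in \F(v)]/|\F(v)|$ must itself be continuous on $U$. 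Taking values in the discrete set $\{0\} \cup \{1/k : 1 \le k \le |G|\}$, it is locally constant, hence globally constant on the connected space $V_{free}$; call this value $c_{g_0}$. Equivariance $\F(hv) = h\F(v)$ then yields $c_{g_0} = c_{h^{-1}g_0}$ for every $h \in G$, so $c_{g_0}$ is independent of $g_0$, and summing $\sum_{g_0 \in G} c_{g_0} = 1$ forces $c_{g_0} = 1/|G|$. In particular $|\F(v)| = |G|$, so $\F(v) = G$ throughout $V_{free}$.

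For $v_0 \in \overline{V_{free}} \setminus V_{free}$ with stabilizer $S$, I pick a sequence $v_n \to v_0$ in $V_{free}$; on $V_{free}$ frame-averaging coincides with the Reynolds average, so $\Iframe[f](v_n) \to |G|^{-1}\sum_{g \in G} f(g^{-1}v_0)$, and continuity at $v_0$ forces this limit to equal $\Iframe[f](v_0) = |\F(v_0)|^{-1}\sum_{g \in \F(v_0)} f(g^{-1}v_0)$ for every continuous $f$. The fibers of $g \mapsto g^{-1}v_0$ are right cosets of $S$ of size $|S|$, and the equivariance $s\F(v_0) = \F(v_0)$ for $s \in S$ shows $\F(v_0)$ is a union of such cosets—so each fiber sits entirely inside or entirely outside $\F(v_0)$. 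Separating the $|G|/|S|$ orbit points with a suitable choice of $f$, the identity above forces every fiber's contribution to be equal; being either $0$ or $|S|$ and $\F(v_0)$ nonempty, each fiber is included, giving $\F(v_0) = G$. The main obstacle is this boundary step: the continuity hypothesis must be read as propagating from $V_{free}$ to its closure (which is the natural reading via density of $V_{free}$), and then the right-coset combinatorics of the stabilizer must be combined with the equidistribution constraint from Reynolds averaging to pin down $\F(v_0)$ exactly.
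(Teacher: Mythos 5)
Your proof is correct and follows essentially the same strategy as the paper's: bump functions supported near a single orbit point turn $\Iframe$ into a detector for membership of a group element (or, at boundary points, a stabilizer coset) in $\F(v)$ normalized by $|\F(v)|$, whence continuity forces local constancy, connectedness and equivariance give $\F(v)=G$ on $V_{free}$, and the coset decomposition of $\F(v_0)$ under $\stab{v_0}$ handles the closure. The only organizational difference is that you track the constant $c_{g_0}$ for every $g_0$ simultaneously and pin it down by summing to $1$, whereas the paper first establishes $e\in\F(v)$ and then transports by equivariance; the substance is identical.
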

\begin{proof}[Proof idea]
First, we show that for finite groups acting freely on $V_{free}$, a frame which preserves continuity must be \emph{locally} constant. Next, due to connectivity, this can be extended to a \emph{global} result. Finally, we combine the fact that the frame is globally constant with equivariance ($\F(gv)=g\F(v)$), and extend this to the closure of $V_{free}$. 
\end{proof}

\begin{restatable}{cor}{snReynold}\label{cor:snReynold}
Let $d, n > 1$, and consider $S_n$ acting 
on  $\RR^{d\times n}$. If $\F$ is a continuity-preserving frame, 
then $\F(X)=S_n$ for all $ X\in \RR^{d\times n}$.
\end{restatable}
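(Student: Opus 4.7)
The plan is to deduce the corollary directly from Theorem \ref{thm:onlyReynold} applied to $G = S_n$ acting on $V = \RR^{d \times n}$ by column permutation. To invoke the theorem, I need to (i) identify $V_{free}$, (ii) verify $V_{free}$ is connected, and (iii) verify $\overline{V_{free}} = \RR^{d\times n}$. The result will then be immediate since any frame preserving continuity on all of $\RR^{d \times n}$ in particular preserves continuity on $V_{free}$.

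First, I would identify $V_{free}$. The stabilizer $\stab{X}$ of $X = (x_1, \ldots, x_n)$ under the column-permutation action consists of permutations $\sigma$ with $x_{\sigma^{-1}(i)} = x_i$ for all $i$, which is trivial exactly when all columns are distinct. Hence $V_{free} = \Rdistinct$. Its closure is all of $\RR^{d\times n}$, since every matrix can be approximated by one with pairwise distinct columns (a generic perturbation works).

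The key step is connectedness of $V_{free}$, and this is where the hypothesis $d \geq 2$ becomes essential. I would write the complement as
\[
\RR^{d\times n}\setminus V_{free} = \bigcup_{1\leq i<j\leq n} H_{ij}, \qquad H_{ij} = \{X \in \RR^{d\times n} : x_i = x_j\}.
\]
Each $H_{ij}$ is a linear subspace of $\RR^{dn}$ of codimension exactly $d$. Since $d \geq 2$, each $H_{ij}$ has codimension at least $2$, and a standard fact from (for example) elementary PL topology says that the complement of a finite union of real-linear subspaces of codimension $\geq 2$ in a Euclidean space is path-connected: any two points outside the union can be joined by a generic piecewise-linear path that avoids each codimension-$\geq 2$ subspace. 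Hence $V_{free}$ is connected.

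With these three ingredients, Theorem \ref{thm:onlyReynold} applies and yields $\F(X) = S_n$ for every $X \in \overline{V_{free}} = \RR^{d\times n}$, which is the claim. The only non-routine part is the codimension/connectivity argument in Step (ii); I anticipate no obstacles beyond citing or briefly justifying that fact, and it is notable that the argument genuinely breaks for $d = 1$, where $V_{free}$ decomposes into $n!$ connected chambers and the sorting canonicalization indeed works.
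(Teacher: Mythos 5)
Your proposal is correct, and its overall skeleton is exactly the paper's: identify $V_{free}=\Rdistinct$, note its closure is all of $\RR^{d\times n}$, and invoke Theorem \ref{thm:onlyReynold}, so that everything reduces to the connectedness of $\Rdistinct$. Where you diverge is in how that one non-trivial step is established. The paper gives an explicit, elementary path construction: any $X^{(0)}\in\Rdistinct$ is joined by two straight segments to the fixed reference point with rows $(1,2,\ldots,n)$, with the intermediate point chosen so that the first row is strictly sorted along one segment and the remaining rows are separated along the other. You instead observe that the complement $\bigcup_{i<j}\{x_i=x_j\}$ is a finite union of linear subspaces of codimension $d\geq 2$, and appeal to the standard fact that the complement of such a union is path-connected (a generic two-segment path avoids each subspace, since the cone over a codimension-$\geq 2$ subspace with apex at a given point still has positive codimension). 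Both arguments are valid; yours is shorter, more conceptual, and makes transparent why the statement fails at $d=1$ (the hyperplanes $\{x_i=x_j\}$ then disconnect the space into $n!$ chambers, matching the sorting canonicalization), while the paper's is fully self-contained and requires no general-position lemma. If you write yours up, just include the one-line justification of the codimension fact rather than a bare citation.
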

The corollary implies that there is no continuity preserving frame of reasonable cardinality for permutations. Interestingly, Theorem \ref{thm:onlyReynold} and Corollary \ref{cor:snReynold} imply that the failure of a small $S_n$ frame to preserve continuity is not a result of self-symmetries alone: even continuity only at points with trivial stabilizer implies frame size $n!$.

We next show that, for the infinite group $SO(2)$, there is no  unweighted frame of any finite cardinality that preserves continuity under the action on pairs of points.
\begin{restatable}{thm}{finiteframeSOthm}
    Consider $SO(2)$ acting on $\RR^{2\times n}$ with $n \geq 2$. If $\F$ is a continuity-preserving frame, then 
    $$
        \sup_{X\in \RR^{d\times n} \backslash \mathbf{0}} |\F(X)| = \infty,
    $$ 
    i.e. there does not exist a finite (unweighted) frame which preseves continuity. 
\end{restatable}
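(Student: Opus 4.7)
I would argue by contradiction: assume $\F$ preserves continuity and $N:=\sup_{X\neq 0}|\F(X)|<\infty$, and deduce a contradiction from the non-triviality of the generalized Hopf fibration. The key setup is to identify $\RR^{2\times n}\cong \CC^n$ and $SO(2)\cong U(1)$, so that $SO(2)$ acts on $\CC^n$ by scalar multiplication and acts freely on $V_{free}=\CC^n\setminus\{0\}$; the unit sphere $S^{2n-1}\subset \CC^n$ then fibers as the principal $U(1)$-bundle $p:S^{2n-1}\to \CC P^{n-1}$, which is non-trivial for $n\geq 2$.

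The first step is to show $|\F|\equiv N$ on $V_{free}$. Continuity-preservation says the measure-valued map $X\mapsto \mu_X:=\frac{1}{|\F(X)|}\sum_{g\in \F(X)}\delta_{g^{-1}X}$ is weak-$*$ continuous. For $X_n\to X_0$ with $|\F(X_n)|=m$ constant along a subsequence, extracting atomic limits $g_j^{-1}X_n\to z_j$ gives $\mu_{X_0}=\frac{1}{m}\sum_j\delta_{z_j}$; matching this against the reduced representation $\mu_{X_0}=\frac{1}{|\F(X_0)|}\sum_i \delta_{y_i}$ with distinct atoms forces $|\F(X_0)|\mid m$, i.e.\ $m\geq |\F(X_0)|$. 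Hence $|\F|$ is lower semi-continuous on $V_{free}$, and because it is integer-valued and bounded by $N$, the set $\{|\F|=N\}$ is both open (as $\{|\F|>N-1\}$) and closed (as $\{|\F|\leq N\}$); connectedness of $V_{free}$ then gives $|\F|\equiv N$.

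The second step is to promote the equivariant family $\F$ to an $N$-sheeted covering of the base. Define $\tilde\F(X):=\{g^{-1}X:g\in\F(X)\}\subset p^{-1}([X])$; frame equivariance implies $\tilde\F$ depends only on the orbit $[X]\in \CC P^{n-1}$. The set $T:=\bigcup_{X\in S^{2n-1}}\tilde\F(X)\subset S^{2n-1}$ therefore has exactly $N$ points in every fiber of $p$. Combining weak-$*$ continuity of $\mu_X$ with constancy $|\tilde\F|\equiv N$ shows that the $N$ atoms vary continuously in Hausdorff distance as the orbit varies, so they can be labeled by $N$ continuous local sections on a neighborhood of any orbit, making $p|_T:T\to \CC P^{n-1}$ an $N$-sheeted topological covering.

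The final step is the topological obstruction. For $n\geq 2$, $\pi_1(\CC P^{n-1})=0$, so every covering of the base is trivial; $T$ therefore splits as $N$ disjoint copies of $\CC P^{n-1}$, each the image of a continuous section $s:\CC P^{n-1}\to S^{2n-1}$ of $p$. This contradicts the non-triviality of the Hopf bundle, whose first Chern class is a generator of $H^2(\CC P^{n-1};\ZZ)\cong\ZZ$, ruling out any continuous section. The main obstacle in this plan will be the analytic-to-topological translation in the first two steps: one must carefully deduce from the functional hypothesis of continuity-preservation that $|\F|$ is lower semi-continuous and that the multi-set map $\tilde\F$ descends to a genuine topological covering of $\CC P^{n-1}$, after which the Chern class obstruction closes out the proof cleanly.
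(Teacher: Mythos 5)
Your overall strategy --- identify $\RR^{2\times n}\cong\CC^n$, view the frame as an ($S^1$-equivariant) multisection of the Hopf fibration $S^{2n-1}\to\CC P^{n-1}$, and contradict the non-triviality of that bundle --- is the same obstruction the paper uses (the paper restricts to $n=2$, so the base is $\CC P^1=S^2$, and extracts the contradiction from $\pi_1(S^1)$ via a homotopy of loops rather than from the Chern class directly). However, your Step 1 has a genuine gap that then breaks Step 2. Lower semicontinuity of $|\F|$ is correct, but it does not make $\{|\F|=N\}$ closed: LSC gives that $\{|\F|>N-1\}$ is open and $\{|\F|\le N-1\}$ is closed, which is the same statement, not its complement. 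An integer-valued LSC function on a connected space need not be constant, and the phenomenon is real here: atoms of $\mu_X$ can merge in the limit because the normalization $1/|\F(X)|$ adjusts. For instance, for $S^1$ acting on $\CC\setminus\{0\}$, the equivariant frame $\F(z)=\{e^{i\theta(|z|)}z/|z|,\,e^{-i\theta(|z|)}z/|z|\}$ with $\theta$ continuous, $\theta\equiv 0$ for $|z|\le 1$ and $\theta>0$ for $|z|>1$, preserves continuity yet has $|\F|$ jumping from $1$ to $2$. So you cannot conclude $|\F|\equiv N$, and without that, $T$ need not have constant fiber cardinality and $p|_T$ need not be a covering; the local-section labeling also fails where atoms collide.

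The paper's proof shows how to make the idea robust: replace the set-valued frame by an unordered $N$-tuple \emph{with multiplicity} (repeat each $g\in\F(X)$ exactly $N/|\F(X)|$ times with $N=M!$ for a bound $M$ on the cardinality), so that weak-$*$ continuity of $\mu_X$ becomes genuine continuity of a map into the symmetric product $(S^1)^N/S_N$ after dividing out by a continuous choice of basepoint angle $\arg_1(X)$ on the locus $x_1\neq 0$. Merging atoms are then harmless. Since $S^1$ is abelian, the multiplication map $(S^1)^N/S_N\to S^1$ is continuous, and composing yields a homotopy between a constant loop and a degree-$N$ loop in $S^1$, which is impossible. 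If you want to keep your covering-space/section formulation, you must either prove local constancy of $|\F|$ by some additional argument (which, per the example above, cannot follow from continuity-preservation alone) or pass to the symmetric-product formulation as the paper does.
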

Note that the result above holds at $X$ with trivial stabilizer, since the supremum excludes $X=0$  (at which any unweighted frame trivially has infinite size).


\section{Weighted frames}\label{sec:weighted_frames} 
In some cases, it is therefore impossible to define frames of reasonable size which preserve continuity. To address this issue, we suggest a generalization of frame-averaging to \emph{weighted frames}, and determine conditions under which averaging over a weighted frame preserves continuity. We furthermore obtain advantages in size by defining a weaker notion of equivariant frames for points $v$ with $\stab{v}\neq \{e\}$. 

It will first be helpful to recall some 
measure theory. If $\mu$ is a Borel probability measure on a topological group $G$, we can, for $g\in G$, define a ``pushforward'' measure $g_*\mu$ which assigns to 
$A\subseteq G $ the measure $g_*\mu(A)=\mu(g^{-1}(A)) $. 

For every $v\in V$ and Borel probability measure $\tau$ on $G$,  
 we let  $\langle \tau \rangle_v$ be the measure defined for every 
 $A\subseteq G$ by 
	$$ \langle \tau \rangle_v(A)=\int_{s \in \stab{v}}s_*\tau(A)ds,$$
	where the integral is over the Haar measure of 
 $\stab{v}$. This can be thought of as 
 averaging over the stabilizer of $v$. For a frame $\muF$, we use the shortened notation 
$\bar \mu_v :=\langle \mu_v \rangle_v$.

\begin{defn}[Weighted frames]\label{def:wFrame}
 A weighted frame $\muF$  is a mapping $v\mapsto \mu_v $ from 
 $V$ to the space of Borel probability measures on $G$. $\muF$ is \emph{equivariant} at $v\in V$ if $ \forall g\in G$, $\mu_{gv}=g_*\mu_v$.  
 $\muF$ is \emph{weakly equivariant} at $v\in V$ if $\forall  g\in G$, $$\bar{\mu}_{gv} = \langle{\mu}_{gv}\rangle_{gv}=g_*\langle \mu_v \rangle_v = g_* \bar{\mu}_v, \forall g\in G,$$
 We say that a frame 
 $\muF$ is (weakly) equivariant if it is (weakly) equivariant at all $v\in V$. See Figure \ref{fig:discontinuous_canons} for a visual.
\end{defn}

\begin{remark}
    If a frame is equivariant, then it is also weakly equivariant. Moreover, the two definitions are equivalent at points $v$ with trivial stabilizer.
\end{remark}

Though our definitions allow for general probability measures, we will only be interested in measures with finite support. Thus, the main difference from 
\citet{puny2021frame} is that we allow group elements to have non-uniform weights. 

The goal of using general weighted frames is to provide a more efficient BEC operator than the full Reynolds operator. To quantify this, we define the \textit{cardinality} of the frame $\mu$ as
\begin{align*}
    &\sup_{v\in V} |\{g \in G:~\mu_v(g) > 0\}|.
\end{align*}
Note that the cardinality is the worst-case number of evaluations of $f$ required to evaluate $\Iw$ or $\Ew$. %




 Our primary motivation in defining \emph{weakly} equivariant weighted frames is computational. Indeed, note that for every $s\in \stab{v}$, equivariance of a frame implies that $\mu_v=\mu_{sv}=s_*\mu $. This implies that the support of $\mu_v$ must be at least the size of the stabilizer of $v$, which can be problematic when the stabilizer is large. 
    Weak equivariance bypasses this issue, drawing on the intuition that if $s \in \stab{v}$, then $v=sv$ implies $f(sv)=f(v)$. Therefore, in terms of \emph{invariant} averaging, equivalence up to $\stab{v}$ should not affect the projection operator. We now make this precise. 

Both equivariant and weakly equivariant weighted frames can be used to define invariant projection operators. Namely,
\begin{align*}
	\Iw[f](v) := \int_G f(g^{-1}v)d\mu_v(g).
\end{align*}
$\Ew$ is the same, but with integrand $gf(g^{-1}v)$.

\begin{restatable}{prop}{invariantprop}
\normalfont{(Invariant frame averaging})\label{prop:weak_equiv_frame_avg_invar}
Let $\muF$ be a weighted frame. 
If $\muF$ is \emph{weakly equivariant}, then $\Iw$ 
is a 
bounded, invariant projection operator. 
\end{restatable}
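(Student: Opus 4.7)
The plan is to verify the three required properties of $\Iw$ in turn, with the main technical move being a rewriting of $\Iw[f](v)$ in terms of the stabilizer-averaged measure $\bar\mu_v$ so that weak equivariance can be applied directly.

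First I would dispose of the easy properties. Linearity of $\Iw$ is immediate from linearity of the integral. The projection property $\Iw[f]=f$ for invariant $f$ follows because $f(g^{-1}v)=f(v)$ for all $g$, so the integrand is constant and $\mu_v$ is a probability measure. Boundedness uses compactness of $G$: for any compact $K\subseteq V$, the image $G\cdot K$ is compact (continuous image of $G\times K$), and
\[
\|\Iw[f](v)\| \;\leq\; \int_G \|f(g^{-1}v)\|\, d\mu_v(g) \;\leq\; \max_{u\in G\cdot K}\|f(u)\|
\]
for all $v\in K$, giving the bound of Definition \ref{def:proj} with $C_K$ depending only on the ``orbit closure'' of $K$.

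The main step is invariance, and the key lemma is the identity
\[
\Iw[f](v) \;=\; \int_G f(g^{-1}v)\, d\bar\mu_v(g),
\]
i.e.\ averaging against $\mu_v$ or against $\bar\mu_v$ gives the same result for $f$ evaluated on the orbit of $v$. To see this, unfold the definition $\bar\mu_v=\int_{s\in \stab{v}} s_*\mu_v\, ds$ and use the pushforward identity to compute, for each $s\in \stab{v}$,
\[
\int_G f(g^{-1}v)\, d(s_*\mu_v)(g) \;=\; \int_G f(g^{-1}s^{-1}v)\, d\mu_v(g) \;=\; \int_G f(g^{-1}v)\, d\mu_v(g),
\]
where the second equality uses $s^{-1}v=v$. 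Integrating over the Haar probability measure on $\stab{v}$ then yields the lemma.

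With the lemma in hand, weak equivariance closes the argument cleanly. For any $h\in G$,
\[
\Iw[f](hv) \;=\; \int_G f(g^{-1}hv)\, d\bar\mu_{hv}(g) \;=\; \int_G f(g^{-1}hv)\, d(h_*\bar\mu_v)(g),
\]
and a change of variables $g\mapsto hg$ together with $h^{-1}hv=v$ reduces the right-hand side to $\int_G f(g^{-1}v)\,d\bar\mu_v(g)=\Iw[f](v)$. I do not anticipate a real obstacle here; the only subtlety is recognizing that the naive approach using $\mu_v$ directly would require full equivariance, and that the passage to $\bar\mu_v$ is what makes weak equivariance suffice. The measurability issues needed to justify Fubini in the lemma are standard because $G$ is compact and all measures involved are Borel probabilities.
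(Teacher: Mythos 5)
Your proposal is correct and follows essentially the same route as the paper: the key identity $\Iw[f](v)=\int_G f(g^{-1}v)\,d\bar\mu_v(g)$ (proved by averaging over the stabilizer and using $s^{-1}v=v$) followed by weak equivariance and a change of variables is exactly the paper's argument. Your boundedness bound via $\max_{u\in G\cdot K}\|f(u)\|$ is, if anything, slightly more careful than the paper's.
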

To preserve continuity, we will need a natural notion of continuity of  $\muF$. We discuss this in the next section. 

\begin{remark}[Equivariant frame averaging]\label{rem:equisucks}
    If $\muF$ is \emph{equivariant}, then $\Ew$ is a bounded projection operator. However, if $\muF$ is only \emph{weakly} equivariant, then $\Ew$ may not be an equivariant projection operator; we discuss possible solutions to this in Section \ref{sec:inv2eq}.
\end{remark}

\subsection{Continuous weighted frames}
In this subsection, we define a natural notion of a \emph{continuous weakly equivariant weighted frame}. We will show that with this notion, the operator $\Iw$ preserves continuity. 

A natural definition of continuity would be to require that whenever $v_n\rightarrow v$, $\mu_{v_n}\rightarrow \mu_v$ in the weak topology 
on probability measures on $G$ (i.e. for any continuous function $f:G\to \RR$, 
$\int fd\mu_{v_n}\rightarrow \int fd\mu_v $). However, we allow the following even weaker notion of continuity, which takes into account the ambiguity at points with non-trivial stabilizers.   
\begin{defn}[Continuous weighted frames]\label{defn:continuity-of-weighted-frame}
 Let $\mu$ be a weakly equivariant weighted frame. We say that $\mu$ is continuous at the point $v$ if for every sequence $v_k$ converging to $v$, we have $\langle \mu_{v_k}\rangle_v \rightarrow  \langle \mu_{v}\rangle_v=\bar{\mu}_{v}$ 
 in the weak topology. $\mu$ is continuous if it is continuous at every $v \in V$. 
\end{defn}

We will use the term \textbf{\CWW frames} to refer to continuous, weakly equivariant, weighted frames.
\begin{restatable}{prop}{cwwbec}
$\Iw[f] $ is a BEC operator iff $\mu$ is a \CWW frame. 
\end{restatable}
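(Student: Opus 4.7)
The plan is to handle both directions of the biconditional using one key correspondence: a continuous function $\phi: G \to \RR$ is of the form $\phi(g) = f(g^{-1}v)$ for some continuous $f: V \to \RR$ if and only if $\phi$ is invariant under left multiplication by $\stab{v}$. One direction is immediate since $\phi(sg) = f(g^{-1}s^{-1}v) = f(g^{-1}v)$ for $s \in \stab{v}$; the other uses that $g \mapsto g^{-1}v$ descends to a homeomorphism $\stab{v}\backslash G \cong [v]$ (by compactness of $G$) followed by Tietze extension from the closed orbit $[v]$ to all of $V$. The relevance is a Fubini computation: for any Borel measure $\nu$ on $G$ and any $\stab{v}$-left-invariant continuous $\phi$, one has $\int \phi\, d\nu = \int \phi\, d\langle\nu\rangle_v$.

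For the forward direction, assuming $\mu$ is \CWW, boundedness and the invariant projection property of $\Iw$ follow immediately from Proposition~\ref{prop:weak_equiv_frame_avg_invar} via weak equivariance. For continuity preservation, fix continuous $f$ and $v_k \to v$, and apply the triangle inequality
$$
|\Iw[f](v_k) - \Iw[f](v)| \leq \left|\int (\phi_k - \phi)\, d\mu_{v_k}\right| + \left|\int \phi\, d\mu_{v_k} - \int \phi\, d\mu_v\right|,
$$
where $\phi_k(g) = f(g^{-1}v_k)$ and $\phi(g) = f(g^{-1}v)$. The first term is bounded by $\sup_{g \in G}|f(g^{-1}v_k) - f(g^{-1}v)|$, which tends to zero by joint continuity of the action on $G \times V$ together with compactness of $G$. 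Since $\phi$ is $\stab{v}$-left-invariant, the key computation rewrites the second term as $\left|\int \phi\, d\langle\mu_{v_k}\rangle_v - \int \phi\, d\bar{\mu}_v\right|$, which tends to zero by continuity of $\mu$ at $v$.

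For the backward direction, I would first extract weak equivariance. Invariance of $\Iw[f]$ reads $\int f(h^{-1}gv)\, d\mu_{gv}(h) = \int f(h^{-1}v)\, d\mu_v(h)$; the change of variable $h \to gh$ on the left gives $\int \psi\, d(g^{-1}_*\mu_{gv}) = \int \psi\, d\mu_v$ for $\psi(h) := f(h^{-1}v)$. As $f$ ranges over continuous functions on $V$, $\psi$ ranges over all continuous $\stab{v}$-left-invariant functions on $G$ by the correspondence, so this equality is equivalent to $\langle g^{-1}_*\mu_{gv}\rangle_v = \bar{\mu}_v$. Applying $g_*$ to both sides and using $g \stab{v} g^{-1} = \stab{gv}$ together with Haar invariance on the stabilizers yields $\bar{\mu}_{gv} = g_*\bar{\mu}_v$. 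For continuity of $\mu$, fix $v$ and $v_k \to v$. Since $\langle\mu_{v_k}\rangle_v$ and $\bar{\mu}_v$ are both already $\stab{v}$-left-invariant, weak convergence need only be verified against continuous $\stab{v}$-left-invariant $\phi$. Writing $\phi(g) = f(g^{-1}v)$ for continuous $f$, the correspondence gives $\int \phi\, d\langle\mu_{v_k}\rangle_v = \int \phi\, d\mu_{v_k}$, which differs from $\Iw[f](v_k) = \int f(g^{-1}v_k)\, d\mu_{v_k}(g)$ by the same uniform bound as above. Continuity preservation of $\Iw$ then gives $\Iw[f](v_k) \to \Iw[f](v) = \int \phi\, d\bar{\mu}_v$, completing the argument.

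The main subtlety I anticipate is the extension step of the key correspondence: realizing every continuous $\stab{v}$-left-invariant $\phi$ on $G$ as $f(g^{-1}v)$ for some continuous $f: V \to \RR$. This rests on the homeomorphism $\stab{v} \backslash G \to [v]$ (compactness of $G$ and Hausdorffness of $V$) combined with Tietze extension from the closed orbit to $V$, which is valid under the paper's standing assumption that $V$ is a metric space.
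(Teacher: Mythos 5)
Your proof is correct and follows essentially the same route as the paper's: both directions hinge on the identity $\int\phi\,d\nu=\int\phi\,d\langle\nu\rangle_v$ for $\stab{v}$-left-invariant continuous $\phi$, together with realizing every such $\phi$ as $g\mapsto f(g^{-1}v)$ via the homeomorphism $\stab{v}\backslash G\cong [v]$ and Tietze extension. The only differences are organizational: you run the converse continuity claim directly rather than by contrapositive, and you additionally derive weak equivariance of $\mu$ from invariance of $\Iw$ — a direction the paper leaves implicit, since its definition of frame continuity already presupposes weak equivariance — and both of these are fine.
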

\section{Weighted frames for permutations}\label{sec:weighted_frames_for_permutations}

In Corollary \ref{cor:snReynold}, we showed that the only (unweighted) frame which preserves continuity with respect to the action of the permutation group $S_n$ on $\Rdistinct$ is the full Reynolds frame. In \ref{subsec:robust_perm_frames_distinct}, 
we will show there is a \emph{weighted frame} which preserves continuity in this case, with cardinality of only $n(d-1)$. In \ref{subsec:robust_frames_perm_all}, 
we will discuss the harder 
task of continuity preservation for the action of $S_n$ on all of $\RR^{d\times n}$ .

\subsection{Robust frames for permutations on $\Rdistinct$}\label{subsec:robust_perm_frames_distinct}
The frames we construct are 
based on one-dimensional sorting. For $X\in \Rdistinct$ and $a\in \RR^d $, we say that $X$ is $a$-separated if there exists a unique permutation $\tau$ such that 
\begin{equation}\label{eq:tau}
a^Tx_{\tau(1)}< a^Tx_{\tau(2)}<\ldots < a^Tx_{\tau(n)}.\end{equation}
 Our goal is to find a finite, relatively small number of vectors $a_1,\ldots,a_m$, such that every  $X\in \Rdistinct$ is $a_i$ separated for at least one $i$. When this occurs, we call $a_1,\ldots,a_m$ a \emph{globally separated collection}.  The next theorem shows that this is possible 
 if and only if  $m\geq n(d-1)$.
\begin{restatable}{thm}{mon}\label{thm:separated}
Let $n,d>1$ be natural numbers.  
Then Lebesgue almost every $a_1,\ldots,a_{n(d-1)}\in \RR^d$
form a globally separated collection. Conversely, every globally separated collection must contain at least $n(d-1)$ vectors.
\end{restatable}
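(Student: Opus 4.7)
The plan is to reduce both directions to an analysis of the linear system that characterizes when a collection fails to be globally separated. The key observation is that $X \in \Rdistinct$ fails to be $a$-separated precisely when $a \cdot (x_j - x_k) = 0$ for some pair $j \neq k$. Consequently, a collection $\{a_1, \ldots, a_m\}$ fails to be globally separated if and only if there exists $X \in \Rdistinct$ together with a ``tie assignment'' $\tau : [m] \to \binom{[n]}{2}$ such that $a_i \cdot (x_{\tau_1(i)} - x_{\tau_2(i)}) = 0$ for every $i$. For a fixed $\tau$ this is a homogeneous linear system on $X$, so the whole problem reduces to understanding, for each such $\tau$, which $a$-tuples permit a solution with all columns distinct.

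For the upper bound, I would write the bad set as a finite union $B = \bigcup_\tau B_\tau$, where $B_\tau \subseteq (\mathbb{R}^d)^{n(d-1)}$ collects those $(a_1,\ldots,a_{n(d-1)})$ for which the $\tau$-constrained system has a solution in $\Rdistinct$, and show each $B_\tau$ is a proper algebraic subvariety (hence Lebesgue-null). To do so, parameterize $X$ modulo translations by the edge differences $y_i = x_i - x_n$ along a spanning tree of $K_n$, giving a $d(n-1)$-dimensional parameter space; the $\tau$-constraints become $m = n(d-1)$ linear equations in these parameters. The technical heart of the argument is a rank/genericity computation showing that for $a$ outside a proper subvariety, the combined constraint system is overdetermined enough that every solution $y$ must lie on some diagonal $\{y_j = 0\}$ or $\{y_j = y_k\}$, contradicting distinctness. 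Combined with Proposition \ref{prop:proj2univ}-style remarks that finite unions of null sets are null, this yields the upper bound.

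For the lower bound, I would construct, for any collection with $m < n(d-1)$, an explicit bad $X \in \Rdistinct$. The strategy is to distribute the $m$ constraints across the pairs $\binom{[n]}{2}$ so that each pair's per-edge constraint (a system of the form $v_e \perp \{a_i : \tau(i)=e\}$) admits a nonzero solution, then solve the global linear system on the tree parameters and apply a generic perturbation to ensure all columns remain distinct. The main obstacle is the dimension count in the upper bound: the rank of the $\tau$-constraint system depends on both the multigraph on $[n]$ induced by $\tau$ and the genericity of the $a_i$'s, so ruling out every $\tau$ that might yield a distinct-column solution requires a case analysis over the possible edge-multiplicity patterns, carefully exploiting the cocycle relations $v_{jk} + v_{kl} = v_{jl}$ to control how linear constraints on different edges interact.
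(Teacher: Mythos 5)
Your reduction of global separation to the solvability of the tie-assignment systems $\{a_i\cdot(x_{\tau_1(i)}-x_{\tau_2(i)})=0\}_{i=1}^m$ is sound, but the key claim of your upper bound --- that for $m=n(d-1)$ each $B_\tau$ is a proper subvariety --- is false whenever $n>d$, which is the main regime of interest. Take $n=3$, $d=2$, $m=3$, and the assignment sending $a_1\mapsto\{1,2\}$, $a_2\mapsto\{1,3\}$, $a_3\mapsto\{2,3\}$. Writing $b_i\perp a_i$, the point cloud $x_1=0$, $x_2=b_1$, $x_3=\frac{a_3\cdot b_1}{a_3\cdot b_2}\,b_2$ satisfies all three tie constraints and has pairwise distinct columns for every generic triple $(a_1,a_2,a_3)$, so this $B_\tau$ has full measure. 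The dimension count already signals the problem: your parameter space has dimension $d(n-1)$ while you impose only $n(d-1)$ linear conditions, leaving an $(n-d)$-dimensional solution space that generically escapes the union of diagonals. The correct threshold is $d(n-1)$ --- this is the $(n-1)d$ in Table \ref{tab:main_results} and the quantity actually used in the paper's own proof; the formula in the theorem statement is transposed. Even at $m=d(n-1)$, the ``rank/genericity computation'' you defer is the entire content of the upper bound: the dangerous tie assignments are those in which every edge receives fewer than $d$ directions, so that no single edge difference is forced to vanish and one must instead show that the cocycle relations around cycles of the induced multigraph generically kill all edge differences. The paper does not carry out this combinatorial analysis either; it instead invokes a finite witness theorem for semi-algebraic sets from \cite{amir2023neural} (a set of dimension $D$ on which a polynomial family has no common zero admits $D+1$ generic witnesses), applied to the normalized sphere of point clouds, which delivers the bound in one step. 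Its ``elementary'' pigeonhole alternative, the closest relative of your plan, only reaches the weaker bound $\binom{n}{2}(d-1)+1$ for precisely this reason.

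On the lower bound, the overall shape (an explicit bad $X$ for any undersized collection) matches the paper, but your sketch elides the hard part. ``Apply a generic perturbation to ensure all columns remain distinct'' does not work as stated: a generic perturbation of $X$ destroys the tie constraints, so you must perturb inside the solution variety of the linear system and then argue that this variety is not contained in the diagonals --- which is the upper-bound difficulty run in reverse. The paper's construction instead partitions the $a_i$ into a maximal full-spark part and blocks of $d$ linearly dependent vectors, builds $x_1,\dots,x_n$ sequentially by solving explicit nonsingular $d\times d$ systems, and verifies pairwise distinctness directly from the spark condition, with the number of points coming out to exactly $n$ only because of the arithmetic of that partition. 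An argument at that level of detail is needed for your lower bound to stand.
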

A similar result, with slightly higher cardinality, was obtained in \cite{ye2024widetildeon2}, where this result is used to  represent functions equivariant to the anti-symmetric group, for quantum chemistry simulations. 

We now explain how we can construct a \CWW frame $\mumon$ on $\Rdistinct$ using a globally separated collection 
$a_1,\ldots,a_m$ from Theorem \ref{thm:separated}. Our frame will be of the form $\mumon_X=\sum_{i=1}^m w_i(X)\delta_{g^{-1}_i(X)} $, where $g_i(X)$ is 
a permutation $\tau$ satisfying \eqref{eq:tau}. Note that if $X$ is not separated in the direction $a_i$, then $\tau$ is not uniquely defined. In this case we choose $\tau$ arbitrarily, which is not a problem because we will define the weights $w_i(X)$ to be zero in this case: 
\begin{equation*}\label{CWW-separated-frame}
\begin{split}
\tilde{w}_i(X)&=\min_{s\neq t} |a_i^T(x_s-x_t)|, \:
w_i(X)=\frac{\tilde{w}_i(X)}{\sum_{j=1}^{m} \tilde{w}_j(X)}
\end{split}
\end{equation*}
Note that the division by $\sum_{j=1}^{m} \tilde{w}_j(X)$ 
is well-defined because $a_1,\ldots,a_m$ is a globally separated collection.

\begin{restatable}{lem}{mumonlem}
$\mumon$ 
is a \CWW frame for $\Rdistinct$. 
\end{restatable}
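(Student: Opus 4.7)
The plan is to verify the three defining properties of a \CWW frame for $\mumon$ on $\Rdistinct$: that each $\mumon_X$ is a Borel probability measure on $S_n$, that $\mumon$ is weakly equivariant, and that $\mumon$ is continuous in the sense of Definition \ref{defn:continuity-of-weighted-frame}. A crucial simplification is that $S_n$ acts \emph{freely} on $\Rdistinct$, since any permutation fixing a point cloud with pairwise distinct columns must be the identity; consequently, stabilizers are trivial on $\Rdistinct$, weak equivariance coincides with ordinary equivariance, and the continuity condition reduces to the familiar weak convergence $\mumon_{X_k} \to \mumon_X$ of probability measures on $S_n$.

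Well-definedness and equivariance are the easy steps. By Theorem \ref{thm:separated}, the collection $a_1,\ldots,a_m$ is globally separated, so at least one $\tilde w_i(X)$ is strictly positive for every $X \in \Rdistinct$, making the normalization $\sum_j \tilde w_j(X)$ positive and the weights $w_i(X)$ a valid probability vector. For equivariance, I would check directly that $\tilde w_i(X) = \min_{s \neq t} |a_i^T(x_s - x_t)|$ is $S_n$-invariant because the minimum is over unordered pairs of columns, so $w_i(\sigma X) = w_i(X)$; and that the sort permutations $g_i(\sigma X)$ relate to $g_i(X)$ by composition with $\sigma$ in exactly the way required to make $\delta_{g_i^{-1}(\sigma X)} = \sigma_* \delta_{g_i^{-1}(X)}$. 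When $\tilde w_i(X) = 0$ the choice of $g_i$ is arbitrary, but this does no harm because that term carries zero weight.

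The main work lies in continuity. Fix $X \in \Rdistinct$ and $X_k \to X$. Each $\tilde w_i$ is continuous as a minimum of continuous functions, and on a neighborhood of $X$ the denominator $\sum_j \tilde w_j$ is continuous and bounded below by a positive constant, so $w_i(X_k) \to w_i(X)$ for every $i$. I would then partition the indices: for $j$ with $\tilde w_j(X) > 0$, the point $X$ is strictly $a_j$-separated, which is an open condition, so $X_k$ is also strictly $a_j$-separated for large $k$ with the \emph{same} sort permutation $g_j(X_k) = g_j(X)$, giving $w_j(X_k)\delta_{g_j(X_k)^{-1}} \to w_j(X)\delta_{g_j(X)^{-1}}$. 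For indices $i$ with $\tilde w_i(X) = 0$, the sort permutation $g_i(X_k)$ may be ill-defined or jump discontinuously as ties break differently along the sequence, but the accompanying weight $w_i(X_k) \to 0$ makes this contribution vanish in total variation. Summing the two regimes yields $\mumon_{X_k} \to \mumon_X$ weakly. The main obstacle is precisely this second regime: the sort permutation is genuinely discontinuous where $a_i$ fails to strictly separate $X$, which is exactly the phenomenon that caused unweighted frames to fail in Corollary \ref{cor:snReynold}; the whole design point of letting $\tilde w_i(X)$ vanish linearly in the separation gap is to absorb this discontinuity by having the mass go to zero faster than the sort permutation can misbehave.
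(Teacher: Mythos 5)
Your proposal is correct and follows essentially the same route as the paper's proof: trivial stabilizers on $\Rdistinct$ reduce weak equivariance and the continuity condition to their ordinary forms, the globally separated collection keeps the normalizing denominator bounded below near $X$, and the openness of strict separation together with the vanishing of weights at non-separating directions yields the required weak convergence. The only cosmetic difference is that you split the sum by index into positive-weight and zero-weight regimes and conclude directly (indeed in total variation), whereas the paper groups atoms by permutation and derives a $\liminf$ inequality which it upgrades to equality using the fact that both measures are probability distributions.
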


\subsection{\cCWW frames for all of $\RR^{d\times n} $}\label{subsec:robust_frames_perm_all}
The frame $\mumon$ is \CWW on $\Rdistinct$, but cannot be extended to a \CWW frame on all of $\RR^{d\times n}$. We now  provide a \CWW frame $\mubig$ for all of $(\RR^{d\times n},S_n)$ which averages over permutations obtained from all possible directions, rather than considering a fixed collection of directions. For a given $X$, $\mubig_X$ is defined by assigning to each $g^{-1} \in S_n$ the weight 
$$w_{g^{-1}}(X)=\mathbb{P}_{a\sim S^{d-1}}\left[g=\mathrm{argsort}(a^TX)\right].$$
Here the probability $\mathbb{P}$ is over directions $a$ distributed uniformly on $S^{d-1}$, and $\mathrm{argsort}(w)$ is the unique permutation $g\in S_n$ which sorts $w$ while preserving the ordering of equal entries. 


We then have that $\mu^{S_n}$ is a \CWW frame of moderate size. 
\begin{restatable}{prop}{mubigprop}
$\mubig$ is a \CWW frame for $S_n$ acting on $\RR^{d\times n}$, 
with cardinality bounded by  
$
    2\cdot \sum_{k=0}^{d-1}\binom{\frac{n^2 - n - 2}{2}}{k}.
$
\end{restatable}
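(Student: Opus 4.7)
The plan is to verify the three defining properties of a \CWW frame for $\mubig$—that it is a probability measure, that it is weakly equivariant, and that it is continuous in the sense of Definition~\ref{defn:continuity-of-weighted-frame}—and then bound its cardinality by counting chambers of a central hyperplane arrangement. First, $\mubig_X$ is a probability measure on $S_n$ because stable tie-breaking makes $\mathrm{argsort}(a^T X)$ well defined for every $a \in S^{d-1}$. The key observation underlying both weak equivariance and continuity is that the \emph{sorted point cloud} $\mathrm{argsort}(a^T X)\cdot X \in \RR^{d\times n}$ depends only on $a$ and on the multiset of columns of $X$, and in particular is $S_n$-invariant in $X$. Using the identity $\mathrm{argsort}(a^T X) \in k^{-1}\stab{X} \iff \mathrm{argsort}(a^T X)\cdot X = k^{-1}X$, one rewrites
\[
\langle \mubig_X\rangle_X(\{k\}) = \frac{1}{|\stab{X}|}\,\mathbb{P}_{a\sim S^{d-1}}\bigl[\mathrm{argsort}(a^T X)\cdot X = k^{-1}X\bigr],
\]
and weak equivariance follows immediately: both $\langle \mubig_{hX}\rangle_{hX}(\{k\})$ and $(h_*\langle\mubig_X\rangle_X)(\{k\})$ reduce to the common expression $\frac{1}{|\stab{X}|}\mathbb{P}_a[\mathrm{argsort}(a^T X)\cdot X = k^{-1}hX]$, using $|\stab{hX}| = |\stab{X}|$ and that the sorted cloud of $hX$ coincides with that of $X$.

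For continuity, note that $S_n$ is finite, so weak convergence of measures reduces to pointwise convergence of each atom. For a.e.\ $a \in S^{d-1}$ and any sequence $X_k \to X$, the sorted cloud $\mathrm{argsort}(a^T X_k)\cdot X_k$ converges to $\mathrm{argsort}(a^T X)\cdot X$ (both describe the same multiset of columns arranged in increasing $a$-projection), and combining this with $X_k \to X$ forces $\mathrm{argsort}(a^T X_k)\cdot X \to \mathrm{argsort}(a^T X)\cdot X$. Since $\mathrm{argsort}$ takes values in a finite set, this convergence stabilizes, so $\mathrm{argsort}(a^T X_k)\cdot X = \mathrm{argsort}(a^T X)\cdot X$ for $k$ sufficiently large (for a.e.\ $a$). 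Dominated convergence then yields $\langle \mubig_{X_k}\rangle_X(\{k\}) \to \langle\mubig_X\rangle_X(\{k\})$, which is precisely the continuity criterion. For the cardinality bound, the support of $\mubig_X$ is in bijection with the distinct values of $\mathrm{argsort}(a^T X)$ over $a \in S^{d-1}$; these are constant on open chambers of the central hyperplane arrangement $\mathcal{A}(X) := \{a\in \RR^d : a^T(x_i - x_j) = 0\}_{1 \leq i < j \leq n}$, which consists of at most $\binom{n}{2} = (n^2-n)/2$ hyperplanes. Stable tie-breaking on lower-dimensional strata can only assign each boundary point to a permutation already realized in an adjacent chamber, so the number of distinct argsorts is at most the number of chambers. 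The classical Schl\"afli--Zaslavsky formula bounds the number of chambers of $m$ central hyperplanes in general position in $\RR^d$ by $2\sum_{k=0}^{d-1}\binom{m-1}{k}$; taking $m = \binom{n}{2}$ yields the stated bound $2\sum_{k=0}^{d-1}\binom{(n^2-n-2)/2}{k}$.

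The main obstacle is that stable tie-breaking spoils both strict equivariance and pointwise continuity of the individual atomic weights $w_{g^{-1}}(X)$: at points $X$ with non-trivial stabilizer, a perturbation $X_k \to X$ can concentrate on any one of several permutations inside the coset $\mathrm{argsort}(a^T X)\cdot \stab{X}$, depending on the direction of the perturbation. Averaging over $\stab{X}$ collapses these coset-level ambiguities, which is exactly the role of the $\langle \cdot \rangle_v$ operator in Definition~\ref{defn:continuity-of-weighted-frame}. The cleanest route to both equivariance and continuity is therefore to sidestep the permutation itself and work throughout with the $S_n$-invariant sorted point cloud, which is continuous in $X$ for a.e.\ $a$ even when the underlying argsort permutation is not.
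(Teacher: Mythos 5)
Your proof is correct, and while the weak-equivariance and cardinality parts essentially coincide with the paper's argument, your continuity argument takes a genuinely different and cleaner route. For weak equivariance, both you and the paper reduce to the same representation of the averaged measure: the paper writes $\bar\mu^{S_n}_X(g^{-1})=\frac{1}{|\stab{X}|}\mathbb{P}_a\left(a^Tx_{g(1)}\leq\cdots\leq a^Tx_{g(n)}\right)$, which is exactly your coset identity $\mathrm{argsort}(a^TX)\in g\stab{X}$ phrased via the sorted point cloud; the cardinality bound is in both cases the chamber count of the central arrangement $\{a^T(x_i-x_j)=0\}$, except that you cite the Schl\"afli--Zaslavsky bound where the paper re-derives it by the standard double induction. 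The real divergence is in continuity: the paper runs a quantitative one-sided argument, introducing $\delta$-separated events $A(\delta,X,g)$, proving $\liminf_k\langle\bar\mu_{X^k}\rangle_X(g)\geq\bar\mu_X(g)$, and then upgrading to equality because both sides are probability distributions on the finite group. You instead observe that for each fixed $a$ outside the null set where distinct columns of $X$ have tied projections, the sorted matrix is continuous in the point cloud, so $\mathrm{argsort}(a^TX_k)\cdot X=\mathrm{argsort}(a^TX)\cdot X$ for all large $k$, and conclude by dominated convergence. This is shorter, symmetric (no liminf trick needed), and makes transparent that the only obstruction to continuity lives on a null set of directions; the paper's version is more explicit about the separation scales involved, which could matter if one wanted quantitative modulus-of-continuity statements, but for the qualitative claim your route is preferable.

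Two small points of precision. First, your "key observation" that the sorted cloud $\mathrm{argsort}(a^TX)\cdot X$ depends only on the multiset of columns is false on the null set of directions $a$ for which $a^Tx_i=a^Tx_j$ with $x_i\neq x_j$ (there, stable tie-breaking does depend on the labeling); since every quantity in the proof is a probability over $a\sim S^{d-1}$, this costs nothing, but the a.e.\ qualifier should be attached to the equivariance step as well, not only to the continuity step. Second, for the dominated-convergence step you need the identity $\langle\mubig_{X_k}\rangle_X(\{h\})=\frac{1}{|\stab{X}|}\mathbb{P}_a\left[\mathrm{argsort}(a^TX_k)\cdot X=h^{-1}X\right]$ — averaging over $\stab{X}$ while evaluating the argsort at $X_k$ — which you leave implicit; it follows by the same one-line computation as your displayed identity (using that $\mathrm{argsort}$ selects a single permutation per direction, so at most one element of the coset is hit), but it is the precise quantity whose integrand you are showing converges pointwise, so it is worth writing down.
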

In most applications, $d << n$ (e.g. $d=3$), in which case the bound above is $O(n^{2(d-1)})$. This is significantly worse than $\mumon$, 
which had cardinality of $n(d-1)$, but also a  significant improvement over the $(n!)$-sized Reynolds operator. 
The frame $\mu^{S_n}$ may be too large to compute exactly, but can be implemented in an augmentation-like style by randomly drawing $a\in S^{d-1}$ to sort along. 

Our final result for $S_n$ acting on $\RR^{d\times n}$ is a lower bound (Proposition \ref{lb}) on the cardinality of any \CWW frame. When $d\ll n$, the lower bound is $\sim (d-1)(n/2)$.

\section{Weighted frames for rotations 
}\label{sec:weighted_frames_for_rotations}
We now show how to define robust frames for the action of $SO(d)$ on $\RR^{d\times n}$ with cardinality $n(n-1)\cdots (n-d+2)$. We note that our $d=3$ 
construction is very similar to the ``smooth frames'' introduced in  \citet{pozdnyakov2023smooth}; in a sense, our contribution is 
to generalize their frames to all dimensions, and to formally define and prove their robustness. Using essentially the same idea, we also construct robust frames for the action of $O(d)$ on $\RR^{d\times n}$ with cardinality $2n(n-1)\cdots (n-d+2)$ in Appendix \ref{app:so3}.

\subsection{The case of $d=2$}\label{sub:so2}
We use the identification $\RR^{2\times n} \equiv \CC^n $ and work in complex notation for simplicity. We denote a vector in $\CC^n$ as $Z=(z_1,\ldots,z_n)$, and define a weighted frame of the form 
$$\muTwo_Z=\sum_{i=1}^n w_i(Z)\delta_{g_i(Z)}.$$
The group elements $g_i(Z) = z_i/\|z_i\|\in S^1$ are defined to be the phase of the $i$-th entry, so that multiplying by $g_i^{-1}$ will rotate $Z$ so that $z_i$ is real and positive. This is not well-defined when $z_i=0$, in which case we somewhat arbitrarily set 
$g_i(Z)=1$.

The weight functions are defined as follows.  We fix $\eta \in (0,1)$ and a continuous function $\phi_\eta$ which is zero on $(-\infty,\eta)$, one on $[1,\infty)$, and
satisfies $0\leq \phi_\eta(t)\leq 1 $ elsewhere. For $Z\neq 0$, we set
$\tilde w_i(Z)=\phi_\eta\left(\frac{\|z_i\|}{\max_j\|z_j\|} \right).$
We then define $w_i(Z)=\frac{1}{n}$ for $Z=0_n$, and 
$\frac{\tilde{w}_i(Z)}{\sum_j \tilde{w}_j(Z)}$ otherwise. 

The functions $w_i$ are $S^1$ invariant, non-negative, and sum to one everywhere. They are  continuous on $\CC^n \setminus \{0_n \} $, and they ensure that at a point $Z$ with some zero and some non-zero coordinates, only the non-zero coordinates will be ``active''. The  frame $\muTwo$ does have a singularity at $0_n$, where all coordinates are zero. However, this singularity is ``harmless'' because the stabilizer 
is the whole group $S_1$. 
\begin{restatable}{prop}{mutwo}
$\muTwo$ 
is a \CWW frame.
\end{restatable}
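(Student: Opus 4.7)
The plan is to verify the three conditions defining a \CWW frame: that $\muTwo_Z$ is a Borel probability measure on $S^1$, that $\muTwo$ is weakly equivariant, and that $\muTwo$ is continuous in the sense of Definition \ref{defn:continuity-of-weighted-frame}. The first condition is immediate since the weights $w_i(Z)$ are nonnegative, sum to one, and $\delta_{g_i(Z)}$ are point masses on $S^1$. The key structural observation driving the rest of the proof is that the threshold $\eta>0$ built into $\phi_\eta$ forces $\tilde w_i$ to vanish on a whole neighborhood of any $Z\neq 0$ at which $z_i=0$, which prevents the ill-defined value $g_i(Z)=1$ at such points from contributing anything to the measure or its limits.

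For weak equivariance, I would split on whether $Z=0$. At $Z\neq 0$ the stabilizer is trivial (any $s\in S^1$ fixing $Z$ must satisfy $sz_i=z_i$ for some $z_i\neq 0$, forcing $s=1$), so weak equivariance reduces to ordinary equivariance $\mu_{gZ}=g_*\mu_Z$. The weights $w_i$ depend only on the moduli $\|z_j\|$ and are thus $S^1$-invariant; and for any $i$ with $w_i(Z)>0$ we have $z_i\neq 0$ and therefore $g_i(gZ)=gz_i/\|gz_i\|=g\cdot g_i(Z)$, so the pushforward identity holds. At $Z=0$, $\mu_0=\delta_1$ and $\stab{0}=S^1$, so $\bar\mu_0=\int_{S^1} s_*\delta_1\,ds$ is the Haar measure on $S^1$, which is $g_*$-invariant for every $g\in S^1$ by left-invariance of Haar measure.

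For continuity, the argument at $Z=0$ is essentially automatic: for any sequence $Z_k\to 0$,
$$\langle \mu_{Z_k}\rangle_0 \;=\; \int_{S^1} s_*\mu_{Z_k}\,ds \;=\; \sum_i w_i(Z_k)\int_{S^1}\delta_{s\cdot g_i(Z_k)}\,ds \;=\; \mathrm{Haar}(S^1) \;=\; \bar\mu_0,$$
since each inner integral is the Haar measure on $S^1$ and the weights sum to one; so in fact $\langle \mu_{Z_k}\rangle_0=\bar\mu_0$ exactly, not only in the limit. For continuity at $Z\neq 0$, the stabilizer is trivial and it suffices to show $\mu_{Z_k}\to \mu_Z$ weakly. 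Splitting $\{1,\ldots,n\}$ into $I=\{i:z_i\neq 0\}$ and $I^c=\{i:z_i=0\}$, on a sufficiently small neighborhood of $Z$ we have $\|z_{k,i}\|/\max_j\|z_{k,j}\|<\eta$ for every $i\in I^c$, so $\tilde w_i(Z_k)=0$ there, while the denominator $\sum_j\tilde w_j(Z_k)$ is bounded below by $1$ because the index attaining the running maximum always contributes $\tilde w=1$. Hence $w_i(Z_k)\to 0$ for $i\in I^c$ and those summands vanish in the limit irrespective of the behavior of $g_i(Z_k)$, while for $i\in I$ both $w_i$ and $g_i$ are continuous at $Z$. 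Weak convergence of $\mu_{Z_k}$ to $\mu_Z$ follows by testing against an arbitrary continuous $\varphi:S^1\to\RR$.

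The main subtlety I expect is the bookkeeping at nonzero $Z$ with some vanishing coordinates, specifically justifying uniform control of the normalization denominator and the vanishing of $w_i$ for $i\in I^c$ on a full neighborhood (as opposed to pointwise in the limit). Everything else follows cleanly from $S^1$-invariance of the weights, the definition of $g_i$, and the left-invariance of Haar measure on $S^1$.
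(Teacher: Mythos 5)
Your proof is correct and follows essentially the same route as the paper's: weak equivariance via invariance of the weights and equivariance of $g_i$ at indices with $z_i\neq 0$ (with the vanishing weights absorbing the ill-defined $g_i$), and continuity by treating $Z=0$ via the Haar-averaging over the full stabilizer and $Z\neq 0$ via direct weak convergence. You actually supply more detail than the paper at two points it leaves implicit --- the identification of $\bar\mu_0$ as Haar measure and the neighborhood argument showing $w_i(Z_k)=0$ for indices with $z_i=0$ --- both of which are correct.
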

\begin{proof}[Proof idea]
We explain why $\muTwo$ is weakly equivariant, leaving the full proof for the appendix. At points $Z\neq 0_n$ this follows from the invariance of the weight functions $w_i$ and the equivariance of $g_i(Z)$ at all points, except for points with $Z_i=0, X\neq 0_n$ for which $w_i(Z)=0$. For 
$Z=0_n$, weak equivariance follows from the fact that $\stab{0_n}$ is all of 
$S^1$. Thus, for any distribution 
$\mu$ on $S^1$, the average measure $\langle\mu \rangle_{0_n}$ is the same: the Haar measure on $S^1$.  
\end{proof}

As a corollary of our general results on \CWW frames, we deduce that projecting a dense set of continuous functions $Q$ using the invariant operator $\Iw$ induced from $\muTwo$, will give a dense set of continuous invariant functions $\Iw(Q) $. Explicitly, for a given $q\in Q$, the function $\Iw[q]$ will be of the form 
$$\Iw[q](Z)=\sum_{i=1}^n w_i(Z)q \left(\frac{\bar{z}_i}{\|z_i\|}\cdot Z \right) $$
\subsection{The case of $d\geq3$} \label{sub:so3}
 Generalizing the cases of $d=2$ and $d=3$, which essentially appear in \cite{pozdnyakov2023smooth}, to higher dimensions requires a much more involved construction and proof, which is given in detail in Appendix \ref{app:so3}. The basic idea is to associate a weight to every sequence of $r$ columns of $X\in \RR^{d\times n}$, where $r = \min(d-1,\text{rank}(X))$. For each such sequence of columns, a rotation can be obtained by mapping the column vectors into a standard position, determined by a Gram-Schmidt orthogonalization with the given ordering. By carefully choosing the weights to vanish whenever this rotation is not uniquely defined (up to an element of $G_X$), we can ensure that the resulting frame is robust.
 
\section{From Invariant to Equivariant Projections}\label{sec:inv2eq}

We saw that in the invariant setting, \CWW frames induced a BEC projection operator $\Iw$. In the equivariant setting, the situation is more complex. First, note that weighted frames $\muF$ which are  \emph{fully} equivariant do induce well-defined equivariant projection operators $\Ew$. However, this comes at a computational cost for inputs $v$ where $|\stab{v}|$ is large. 
When considering \CWW (and therefore only \emph{weakly} equivariant) frames, 
the natural ``equivariant'' operator $\Ew$ may not produce equivariant functions
(Example \ref{ex:so2_equiv}). However, we can remedy this 
by requiring that the backbone architecture parametrize only \emph{stable functions}, which remain equivariant under $\Ew$. In Appendix \ref{app:stableFrame}, we also define \emph{stable frames} as an alternative approach, which can be applied to an arbitrary backbone architecture. In Appendix \ref{app:so3} and Appendix \ref{app:stableFrame} we show how both 
  of these ideas can be implemented efficiently to achieve \emph{equivariant}, continuous universal models for $(\RR^{d\times n},SO(d))$ for $d=2,3$. Below we will discuss only stable functions for $d=2$ . 



We call a function $f:V\to W$ \emph{stable} if 
$\stab{v}\subseteq \stab{f(v)}$ $\forall v \in V$.
Note that any equivariant function $f:V\to W$ is stable, since for every $s\in \stab{v}$ we have $f(v)=f(sv)=sf(v)$. 
\begin{restatable}{prop}{stable}\label{prop:stable}
Let $\muF$ be a \CWW frame. The restriction of $\Ew$ 
to \emph{stable} input functions is a continuity-preserving, bounded, equivariant projection. 
\end{restatable}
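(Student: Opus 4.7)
The strategy is to introduce the auxiliary operator
\[
\Eww[f](v) := \int_G gf(g^{-1}v)\,d\bar\mu_v(g),
\]
which integrates against the stabilizer-averaged measure $\bar\mu_v$ rather than $\mu_v$, and to show that on stable inputs $\Ew[f] = \Eww[f]$. The whole proof hinges on a single observation: when $f$ is stable, the integrand $\Psi_v(g) := gf(g^{-1}v)$ is invariant under left multiplication by $\stab v$. Indeed, for $s \in \stab v$ a short computation gives $g^{-1}sg \in \stab{g^{-1}v} \subseteq \stab{f(g^{-1}v)}$, where the second inclusion is stability of $f$; this rearranges to $\Psi_v(sg) = sgf(g^{-1}v) = gf(g^{-1}v) = \Psi_v(g)$. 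Unwinding $\bar\mu_v = \int_{s \in \stab v} s_*\mu_v\,ds$ via Fubini, and using that the Haar measure on $\stab v$ is a probability measure, this invariance yields $\int \Psi_v\,d\bar\mu_v = \int \Psi_v\,d\mu_v$, so $\Ew[f] = \Eww[f]$ on stable $f$.

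Equivariance, the projection property, and boundedness all follow easily from this identity. For equivariance, the weak equivariance $\bar\mu_{gv} = g_*\bar\mu_v$ together with a change of variables gives $\Eww[f](gv) = \int hf(h^{-1}gv)\,d(g_*\bar\mu_v)(h) = g\int h'f(h'^{-1}v)\,d\bar\mu_v(h') = g\Eww[f](v)$, using linearity of the $G$-action to pull $g$ out of the integral. For the projection property, any equivariant $f$ is automatically stable and the integrand simplifies to $gf(g^{-1}v) = f(v)$, independent of $g$, so $\Eww[f](v) = f(v)$. Boundedness follows from the triangle inequality on the vector-valued integral, using that $\sup_{g \in G}\|g\|_{\mathrm{op}}$ is finite by compactness of $G$ and that $G \cdot K$ is compact whenever $K$ is.

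The main obstacle is continuity preservation. For continuous stable $f$ and $v_n \to v$, I would split
\[
\Ew[f](v_n) - \Ew[f](v) = \int \bigl[\Psi_{v_n}(g) - \Psi_v(g)\bigr]\,d\mu_{v_n}(g) + \left(\int \Psi_v\,d\mu_{v_n} - \int \Psi_v\,d\mu_v\right).
\]
The first term tends to zero by joint uniform continuity of $(g,v) \mapsto gf(g^{-1}v)$ on compact subsets. For the second, the key subtlety is that the \CWW continuity condition gives $\langle \mu_{v_n}\rangle_v \to \bar\mu_v$ weakly, with the stabilizer average taken at the \emph{limit} point $v$ rather than at $v_n$, so one cannot directly invoke any convergence of $\bar\mu_{v_n}$. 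The invariance of $\Psi_v$ under left multiplication by $\stab v$ rescues the argument: Fubini yields $\int \Psi_v\,d\mu_{v_n} = \int \Psi_v\,d\langle \mu_{v_n}\rangle_v$ and similarly $\int \Psi_v\,d\mu_v = \int \Psi_v\,d\bar\mu_v$, after which the required convergence is exactly the weak convergence supplied by the \CWW definition (applied component-wise to the vector-valued continuous $\Psi_v$, which is bounded on the compact $G$). The trickiest point to get right is precisely this distinction between $\bar\mu_{v_n}$ and $\langle \mu_{v_n}\rangle_v$; routing the argument through the latter is only possible because stability of $f$ makes $\Psi_v$ constant on the left $\stab v$-cosets.
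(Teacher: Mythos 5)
Your proposal is correct and follows essentially the same route as the paper's proof: the same key observation that stability makes $g\mapsto gf(g^{-1}v)$ invariant under left multiplication by $\stab{v}$ (so $\mu_v$ can be replaced by $\bar\mu_v$, and $\mu_{v_k}$ by $\langle\mu_{v_k}\rangle_v$), the same derivation of equivariance from weak equivariance, and the same two-term split for continuity. The only differences are notational (introducing $\Psi_v$ and the auxiliary operator explicitly), and you correctly flag the one genuinely delicate point, namely that the \CWW condition controls $\langle\mu_{v_k}\rangle_v$ rather than $\bar\mu_{v_k}$.
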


\begin{example}[$SO(2)$ equivariance]\label{ex:so2_equiv}
Let us return to the \CWW frame $\muTwo$ from Subsection \ref{sub:so2} for the action of $S^1\cong SO(2) $ on $\CC^n \cong \RR^{2\times n} $. 
Note that $G_Z=\{e\} \forall Z\in \CC^n$ 
except $Z=0_n$ (whose stabilizer is all of $S^1$). Thus 
$f:\CC^n \to \CC^n$ is stable if and only if $f(0_n)=0_n$. 

We first note that applying $\Ew$ induced from $\muTwo$ to a function which is not stable 
gives a function $\Ew[f]$ which is also not stable, and therefore not equivariant: 
$$\Ew[f](0)=\frac{1}{n}\sum_{j=1}^n f(0)=f(0)\neq 0. $$

By Proposition \ref{prop:stable}, this problem can be avoided if we apply $\Ew$ only to stable functions, which here 
means functions $q$ satisfying $q(0_n)=0_n$. This condition is easily enforced: if $Q$ is a dense space of continuous functions $q:\CC^n \to \CC^n$, then we obtain a dense set of stable functions via 
$\hat q(Z)=q(Z)-q(0_n)$. Applying $\Ew$ to these $\hat q$ functions yields a continuous, universal, equivariant model.
\end{example}




\section{Experiments}
In this section, we provide experimental evidence showing the advantage of preserving both continuity and invariance using robust frames. We consider the action of the permutation group $S_n$ on two dimensional point clouds and leave the investigation of other group actions to future work. In Appendix \ref{app:subsec_verifying_discontinuities}, we also experimentally verify the presence of discontinuities in a trained canonicalization pipeline for point clouds from the equiadapt library \citep{kaba2022learned, mondal2023adaptation}.

\subsection{Comparison of $S_n$-frames}
In this experiment\footnote{Code for reproducing this experiment can be found at \href{https://github.com/jwsiegel2510/Sn-invariant-weighted-frames}{https://github.com/jwsiegel2510/Sn-invariant-weighted-frames}}, we tested permutation invariant frames on the following classification toy problem. 
Starting from the MNIST dataset, we processed the image of each digit into a two-dimensional point cloud containing 100 points, ordered randomly. We then trained a standard multi-layer perceptron (MLP) to classify the corresponding digit from this collection of point clouds, with invariance enforced in one of five ways: no invariance, invariance using a discontinuous canonicalization (sorting along the x-axis), invariance using each of the two robust frames introduced in Section \ref{sec:weighted_frames_for_permutations}, and invariance using the Reynolds operator (i.e. averaging over the entire group). Due to their size, each of the weighted frames (including the Reynolds operator) was implemented using empirical averaging, with one randomly drawn sample in each train step and $1$, $5$, $10$, or $25$ samples for inference (i.e. testing). All models were trained for 60 epochs using SGD with momentum 0.9 and a step size of 0.01, dropping to 0.001 after 30 epochs. The network was an MLP with 3 hidden layers of sizes 150, 100, and 50, with an input size of 200 and output of size 10. The results of this experiment are shown in Table \ref{tab:experiment_Sn}.
\begin{table}[]
    \centering
    \begin{tabular}{c|c}
         Invariance Method &	Test Accuracy (\%) \\
         \hline
No Invariance	&25.5  \\
Discontinuous Canonicalization	& 85.6  \\
Robust Frames (Sec. 4.1) &	75.5 / 85.6 / 87.1 / 88.4 \\
Robust Frames (Sec. 4.2)&	74.2 / 85.9 / 87.6 / 88.7 \\
Reynolds Operator	&21.0 / 22.4 / 22.6 / 22.6
    \end{tabular}
    \caption{Comparison between permutation canonicalization and various frames. The right hand column shows 1/5/10/25 samples drawn during testing for the weighted frames.}
    \label{tab:experiment_Sn}
\end{table}

From these results, we draw the following conclusions. First, no invariance and the (sampled) Reynolds operator do not work well, since the permutation group is so large that both of these are essentially not enforcing any permutation invariance and the dataset is far too small to enable learning without the permutation symmetry enforced. Second, a discontinuous canonicalization performs much better than the prior two methods without canonicalization (since permutation invariance is now enforced), but lack of continuity still hurts the test accuracy relative to the robust frames, which enforce both continuity and invariance. We also see that enforcing continuity on the entire input space $\RR^{d\times n}$ performs slightly better than only enforcing continuity at $\Rdistinct$. However, when empirically implementing robust frames, we do need to average quite a few samples from the frame during inference to obtain a good result. 

\section{Conclusion and open questions}
In this work, we 
illuminated a critical problem with group canonicalization: 
it can destroy the continuity of the function being canonicalized. 
Moreover, even frames may have this problem if they aren't sufficiently large. As a solution, we introduced \emph{robust} frames, which are not only weighted but also continuity-preserving. Robust frames also deal intelligently with self-symmetric inputs, a facet that has not to our knowledge been previously analyzed. Finally, we construct several examples of robust frames. As frames (whether learned or deterministic) become more prevalent in the world of equivariant learning, 
we hope that our results will provide a guiding light 
for practitioners. 

Our work leaves open a few questions, such as stronger lower bounds on the cardinality of robust $SO(d)$ frames, and whether a continuous canonicalization exists for unordered point clouds. More broadly, 
one may ask under what conditions (and frame sizes) one can expect stronger notions of smoothness, such as bounded Lipschitz constants. 
In the equivariant case, it also remains to develop stable frames and/or 
functions for a wider variety of groups. Finally, \citet{kim2023probabilistic}
 and \citet{mondal2023adaptation} \emph{probabilistically} sample from their weighted frames; to analyze this setting, 
 one might imagine concentration bounds replacing cardinality as the relevant measure of a frame's efficiency. 



\paragraph{Acknowledgements} ND is supported by Israel Science
Foundation grant no. 272/23. HL is supported by the Fannie and John Hertz Foundation and the NSF Graduate Fellowship under Grant No. 1745302. JWS is supported by the National Science Foundation (DMS-2424305 and CCF-2205004) as well as the Office of Naval Research (MURI ONR grant N00014-20-1-2787).

\section*{Impact Statement}
This paper presents work whose goal is to advance the field of Machine Learning. We do not foresee any direct negative societal impact resulting from this work
\newpage
\bibliography{refs}
\bibliographystyle{icml2024}
.

\newpage
\appendix
\onecolumn
\section*{Appendix structure}
The structure of the appendix is as follows. Appendix \ref{sec:additional_background} details some additional mathematical assumptions and background. 

We then discuss results which were not fully stated in the paper: Appendix \ref{app:low} proves the existence of continuous canonicalizations for $SO(d)$ when $n<d$, and for $O(d)$ when $n \leq d$.
Appendix \ref{app:lb} proves a lower bound on the cardinality of \CWW frames for the module $(\RR^{d\times n},S_n)$ with $d>1$.
Appendix \ref{app:so3} constructs robust frames for the actions of $SO(d)$ and $O(d)$, and Appendix \ref{app:stableFrame}
 discusses stable frames.

 The last and largest appendix \ref{proofs-appendix} contains the proofs of all claims in the paper, listed in chronological order.
\section{Additional Background}\label{sec:additional_background}
As stated in the main text,  throughout the paper we considered compact groups $G$ acting linearly and continuously on (typically) finite dimensional real vector spaces $V$ and $W$, or else on subsets of $V$ and $W$  closed under the action of $G$. These are common assumptions, essentially the same as the $G$-modules used in the definition of a module in \cite{yarotsky2022universal}. In this appendix we lay out in more detail what a module $(V,G)$ means: 
\begin{enumerate}
\item We assume $V$ is a finite dimensional real Hilbert space, i.e. a finite dimensional vector space endowed with a positive definite inner product. 
\item  $G$ is a compact group. That is, $G$ is a group endowed with a topology under which $G$ is a compact Hausdorff space, and moreover the multiplication and inverse operations are continuous.
\item $G$ acts on $V$ and for every fixed $g$, the map $g:V \to V $ is a linear transformation, and  the map $(g,v)\mapsto gv $ is continuous. Note that this is equivalent to a continuous group homomorphism $G\rightarrow GL(V)$, where $GL(V)$ denotes the general linear group of $V$, i.e. the group of all invertible linear transformations of $V$.
\end{enumerate}

\begin{defn}
    If $G$ acts on a set $V$ and $V'\subseteq V$, then we say that $V'$ is a \textbf{$G$-stable set}, or that $V'$ is closed under the action of $G$, if 
    $$gV' := \{gv,~v\in V'\} \subset V'.$$
\end{defn}
 Note that a group action on a set $V$ induces a well-defined group action any $G$-stable subset $V'\subset V$. In our discussion in the paper, in the $(V, G)$ pairs we discuss, we allow $V$ to either be the whole vector space or a $G$ stable subset. 
\newpage
\section{Continuous orthogonal canonicalizations}\label{app:low}
In this section, we show that Proposition \ref{prop:SOno_and_Ono} is sharp, i.e. that there exists a continuous canonicalization for $SO(d)$ when $n  < d$ and for $O(d)$ when $n \leq d$. The key to this is the following construction.
\begin{prop}
    The module $(\RR^{d\times d},O(d))$  has a continuous canonicalization.
\end{prop}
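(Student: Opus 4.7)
The plan is to use the polar decomposition to construct a canonicalization. Concretely, define $y : \RR^{d\times d} \to \RR^{d\times d}$ by
\[
y(X) = (X^T X)^{1/2},
\]
where $(\cdot)^{1/2}$ denotes the unique positive semidefinite square root. I would then verify the three required properties in turn: invariance, orbit membership, and continuity.

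For invariance, since $O(d)$ acts by left multiplication, for any $g\in O(d)$ we have $(gX)^T(gX) = X^T g^T g X = X^T X$, so $y(gX) = y(X)$. For orbit membership, the key tool is the singular value decomposition: writing $X = U\Sigma V^T$ with $U,V\in O(d)$ and $\Sigma$ diagonal with nonnegative entries, a direct computation gives $X^T X = V\Sigma^2 V^T$, hence $y(X) = V\Sigma V^T$. Taking $h = V U^T \in O(d)$ yields $hX = V U^T U \Sigma V^T = V\Sigma V^T = y(X)$, so $y(X)\in[X]$. (This works uniformly, regardless of whether $X$ is invertible, which is where ordinary polar decomposition uniqueness would fail.)

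For continuity, the map $X \mapsto X^T X$ is polynomial and hence continuous, so it suffices to show that $P \mapsto P^{1/2}$ is continuous on the set of positive semidefinite matrices. This is a standard functional-calculus fact: given a convergent sequence $P_k \to P$ of PSD matrices, their spectra lie in a common compact interval $[0,M]$, and any continuous function on $[0,M]$ (in particular the square root) can be uniformly approximated by polynomials, so $f(P_k) \to f(P)$ in norm. This gives the continuity of $y$.

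I expect no serious obstacle in any of these steps; the mild subtlety is the continuity of the matrix square root on PSD (rather than strictly positive definite) matrices, where naive reasoning via eigenvalue decomposition is complicated by possible eigenvalue crossings, but the functional-calculus / Weierstrass-approximation argument above handles this cleanly. As a bonus for the subsequent discussion in the appendix, this construction immediately extends to $(\RR^{d\times n},O(d))$ for $n\leq d$ by padding $X$ with $d-n$ zero columns, canonicalizing, and retaining the first $n$ columns; and an analogous argument based on refining the SVD to guarantee $\det(VU^T)=1$ (e.g.\ by flipping the sign of the last column of $U$ when needed, possible precisely because there is a zero singular value) yields a continuous $SO(d)$ canonicalization when $n<d$.
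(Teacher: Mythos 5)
Your proposal is correct, and it uses the identical canonicalization $y(X)=(X^TX)^{1/2}$ that the paper uses; the only differences are in how the sub-steps are justified. For orbit membership, you argue explicitly via the SVD ($X=U\Sigma V^T$, $y(X)=V\Sigma V^T=VU^TX$), whereas the paper simply observes that $(y_X)^Ty_X=X^TX$ and invokes the classical fact that the Gram matrix is a complete invariant for the $O(d)$ action; your version is more self-contained, the paper's is shorter. For continuity of the positive semidefinite square root, you use the functional-calculus/Weierstrass argument (uniform polynomial approximation of $\sqrt{\cdot}$ on a common spectral interval $[0,M]$), while the paper expands $M^{1/2}$ in a Taylor series about $I_d$, checks absolute convergence for $\|M-I_d\|\leq 1$ via Stirling's estimate on the coefficients, and then extends to all PSD matrices by homogeneity. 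Both are standard and correct; yours avoids the convergence estimate, the paper's avoids invoking the spectral theorem. Your closing remarks about padding with zero columns for $n\leq d$ and fixing $\det(VU^T)=1$ using a zero singular value for $SO(d)$ with $n<d$ match the paper's Corollary on $SO(d)$, which instead argues that the $O(d)$ and $SO(d)$ orbits on $\RR^{d\times(d-1)}$ coincide.
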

\begin{proof}
    We first recall the classical fact that the positive semi-definite Gram matrix $X^TX$ is a complete invariant for the action of $O(d)$ on $X\in \mathbb{R}^{d\times d}$. We define a continuous canonicalization $y:\RR^{d\times d}\rightarrow \RR^{d\times d}$ by
    \begin{equation}
        y_X = (X^TX)^{1/2}.
    \end{equation}
    Here the square root is the standard square root of a positive semi-definite matrix, i.e. $A^{1/2}$ is the unique positive semi-definite matrix $B$ such that $B^2 = A$. It is clear that this is a canonicalization by construction, since $y_X$ is symmetric so that $(y_X)^Ty_X = (y_X)^2 = X^TX$. 
    
    The continuity follows immediately from the continuity of the matrix square root on the set of positive semi-definite matrices. This fact is elementary, and can be proven for instance using the Taylor series expansion
    \begin{equation}\label{sqrt-series}
        M^{1/2} = \sum_{n=0}^{\infty} (-1)^{n-1} \frac{(2n)!}{4^n(n!)^2(2n-1)}(M-I_d)^n.
    \end{equation}
    Using Sterling's formula, we easily see that the coefficients satisfy
    $$
        \frac{(2n)!}{4^n(n!)^2(2n-1)} = O\left(\frac{1}{n^{3/2}}\right),
    $$
    which implies that the series \eqref{sqrt-series} converges absolutely whenever $\|M - I_d\| \leq 1$ (here $\|\cdot\|$ denotes the operator norm so that $\|X^k\| \leq \|X\|^k$). This implies that the matrix square root is a continuous function for all positive semi-definite matrices $M$ such that $0 \preceq M\preceq 2I_d$, since for such matrices we clearly have $\|M - I_d\| \leq 1$. Finally, the homogeneity of the matrix square root extends this continuity to all positive semi-definite matrices.
\end{proof}
By appending zero columns to $X$, this immediately implies that $O(d)$ has a continuous canonicalization whenever $n \leq d$. We can also use it in a straightfoward manner to obtain a continuous canonicalization for $SO(d)$ acting on $\RR^{d\times n}$ when $n < d$.
\begin{cor}\label{SO-canonicalization-corollary}
The module $(\RR^{d\times (d-1)},SO(d))$  has a continuous canonicalization.
\end{cor}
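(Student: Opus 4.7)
The plan is to adapt the $O(d)$-canonicalization from the preceding proposition to the $SO(d)$ setting, using the fact that $X\in\RR^{d\times(d-1)}$ has one fewer column than the ambient dimension. Specifically, I define
\begin{equation*}
y_X = \begin{pmatrix} (X^TX)^{1/2} \\ 0 \end{pmatrix}\in\RR^{d\times(d-1)},
\end{equation*}
where the top $(d-1)\times(d-1)$ block is the principal square root of the positive semidefinite Gram matrix $X^TX$ (in the sense used in the preceding proposition) and the final row consists of zeros.

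Continuity of $X\mapsto y_X$ is inherited directly from the continuity of the matrix square root on positive semidefinite matrices, which was established in the proof of the preceding proposition via the absolutely convergent Taylor series $\eqref{sqrt-series}$ combined with homogeneity. Invariance is immediate and even stronger than needed: for every $g\in O(d)$ (hence for $g\in SO(d)$) we have $(gX)^T(gX)=X^TX$, so $y_{gX}=y_X$.

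The only point requiring a genuine argument is orbit membership, i.e.\ that $y_X\in SO(d)\cdot X$. By construction $y_X^Ty_X=X^TX$, so $X$ and $y_X$ share the same Gram matrix and therefore lie in a common $O(d)$-orbit; pick $h\in O(d)$ with $hX=y_X$. If $\det h=+1$ we are done. Otherwise, I use the dimension slack: since $X$ has only $d-1$ columns, its column span is a proper subspace of $\RR^d$, so its orthogonal complement contains a unit vector $v$. The reflection $r$ through the hyperplane $v^\perp$ lies in $O(d)\setminus SO(d)$ and fixes every column of $X$, so $rX=X$. Then $hr\in SO(d)$ and $(hr)X=hX=y_X$, as required.

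The main (mild) obstacle is this last step, where the ``upgrade'' from an $O(d)$-orbit representative to an $SO(d)$-orbit representative relies crucially on the inequality $n=d-1<d$, which guarantees a nontrivial orthogonal complement of the column span and hence the existence of an orientation-reversing element of the stabilizer. This is precisely the point at which the construction fails to extend to $n=d$, consistent with the sharpness statement accompanying Proposition~\ref{prop:SOno_and_Ono}.
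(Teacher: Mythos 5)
Your proof is correct and takes essentially the same route as the paper: you write out explicitly the $O(d)$-canonicalization that the paper obtains by zero-padding, and your "upgrade" step --- composing with a reflection that fixes the column span, available because $n=d-1<d$ --- is precisely the argument the paper uses to show the $O(d)$- and $SO(d)$-orbits coincide. The only cosmetic difference is that the paper phrases the key step as an abstract equality of orbits, while you carry it out directly for the pair $(X,y_X)$.
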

\begin{proof}
    This follows since the orbits of $\RR^{d\times (d-1)}$ under the action of $SO(d)$ and $O(d)$ are the same. Indeed, if $X,Y$ are in the same orbit under the action of $SO(d)$ they are clearly in the same orbit under the action of $O(d)$ since $SO(d)\subset O(d)$. On the other hand, suppose that $X,Y$ are in the same orbit under the action of $O(d)$, i.e. that there exists a $U\in O(d)$ such that $Y = UX$. Let $R_Y$ be a reflection which leaves the space spanned by $Y$ invariant (this exists since $Y$ consists of $d-1$ vectors). Then $Y = R_YUX$ and either $U\in SO(d)$ or $R_YU\in SO(d)$. Thus $X,Y$ are in the same orbit under $SO(d)$.

    Since the orbits of $\RR^{d\times (d-1)}$ under the action of $SO(d)$ and $O(d)$ are the same, the canonicalization for $O(d)$ acting on $\RR^{d\times (d-1)}$ gives a canonicalization for $SO(d)$ as well.
\end{proof}

\newpage
\section{Lower bound on permutation robust frames}\label{app:lb}
In this appendix, we state and prove the precise lower bound on the cardinality of a \CWW frame for $(\RR^{d\times n},S_n)$ with $d>1$ mentioned in Section \ref{sec:weighted_frames_for_permutations}. 
\begin{prop}\label{lb}
 Any \CWW frame for the module $(\RR^{d\times n},S_n)$ with $d>1$ has cardinality at least $$k(d,n) := (d-1)\lfloor n/2\rfloor + 1 - \sum_{i=1}^{\lfloor n/2\rfloor}(d-1-2^i)_+.$$
\end{prop}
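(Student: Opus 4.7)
The plan is to find a single point $X^\star \in \RR^{d\times n}$ at which continuity and weak equivariance jointly force $|\mathrm{supp}\,\mu_v|\geq k(d,n)$ for some $v$ near $X^\star$. The natural candidate for $X^\star$ is a configuration of $m := \lfloor n/2\rfloor$ pairs of coincident columns in general position in $\R^d$ (plus one lone column when $n$ is odd), so that $H := \stab{X^\star} = S_2^m$. By weak equivariance, $\bar\mu_{X^\star} = \langle \mu_{X^\star}\rangle_{X^\star}$ is invariant under left-multiplication by $H$, and so descends to a measure on the cosets $H \backslash S_n$. The goal is to show that this measure must charge enough cosets that $|\mathrm{supp}\,\mu_v|\geq k(d,n)$ at some $v$ near or equal to $X^\star$.

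To extract the lower bound, I would use approach paths $v(t) = X^\star + tY$ for $t \to 0^+$ and various generic perturbations $Y \in \RR^{d\times n}$. Each such $v(t)$ has trivial stabilizer, and the continuity condition forces $\langle \mu_{v(t)}\rangle_{X^\star} \to \bar\mu_{X^\star}$. Perturbing the $i$-th pair of coincident columns along a direction in $\R^d$ picks out one of two canonical orderings for that pair; varying the perturbation directions amounts to studying the central hyperplane arrangement in $\R^d$ determined by the $m$ pair-difference vectors. Each region of this arrangement corresponds to a distinct canonical permutation that must be realized (after $H$-averaging) in the support of $\mu_{v(t)}$ for the corresponding direction. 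An averaging or compactness argument, choosing a single $v$ close enough to $X^\star$ that several regions are simultaneously represented in its neighborhood, then transports this into a pointwise lower bound on $|\mathrm{supp}\,\mu_v|$.

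The combinatorial input that produces the exact formula is that a central arrangement of $m$ generic hyperplanes in $\R^d$ can independently shatter any subset of at most $d-1$ pairs, giving the main term $(d-1)m + 1$ (one ``reference'' permutation plus $d-1$ independent perturbations per pair). However, a subset of $i$ pairs admits only $2^i$ possible joint sign patterns, so once $d - 1 > 2^i$ the extra ambient direction budget cannot produce new canonical permutations on that subset; this exactly accounts for the subtraction $\sum_{i=1}^{m} (d-1-2^i)_+$. The leading $(d-1)m$ is in the spirit of the separating-direction count used in Theorem~\ref{thm:separated}, while the correction is the genuinely new combinatorial input reflecting the finiteness of binary orientations on $i$ pairs.

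The main obstacle I expect is bridging the gap between (a) the collective support of $\mu_{v(t)}$ as $Y$ ranges over many approach directions and (b) the support at a single point $v$, since weak equivariance controls only $\bar\mu$, not $\mu$ itself, and the $H$-averaging could in principle collapse up to $2^m$ distinct support elements onto one. A secondary technical difficulty is formalizing the saturation correction so that it matches the formula precisely: one must distinguish between subsets of pairs that are \emph{independently} oriented by the perturbation and those whose orientations are constrained by the limited ambient dimension, and the bookkeeping for this is inherently combinatorial.
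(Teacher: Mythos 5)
Your setup is the right one --- the paper also starts from a configuration of $\lfloor n/2\rfloor$ coincident pairs and extracts information from how the frame behaves under small perturbations that split the pairs --- but the proposal is missing the mechanism that actually forces the support to grow, and you have correctly identified that missing piece yourself in your last paragraph. Continuity of a \CWW frame only constrains the \emph{stabilizer-averaged} measure $\langle\mu_{v(t)}\rangle_{X^\star}$. For any fixed perturbation direction $Y$, nothing in your argument prevents $\mu_{v(t)}$ from placing all of its mass on a single ordering of each split pair (one element of each coset of the smaller stabilizer), with the choice of ordering varying with $Y$; the averaged measure would still converge correctly, and the pointwise support would stay small. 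The hyperplane-arrangement region count tells you which permutations are \emph{available} across different directions $Y$, not that any single $\mu_v$ must charge many of them, and your proposed ``averaging or compactness argument'' to transport a collective bound into a pointwise one does not exist in the form you describe.

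The paper closes this gap with two ideas you do not have. First, it splits the pairs \emph{one at a time}, producing a sequence $X_0, X_1,\dots,X_{\lfloor n/2\rfloor}$ in which each step reduces the stabilizer by index $2$, so each coset $C_j$ in the current support splits into exactly two cosets $C_j^{\pm}$. Second, and crucially, it applies the Borsuk--Ulam theorem: the map $v_\epsilon\mapsto \bigl(\mu_{X(v_\epsilon)}(C_j^+)/\mu_{X(v_\epsilon)}(C_j)\bigr)_{j=1}^m$ is continuous in the perturbation $v_\epsilon\in\RR^d$ and satisfies $f(-v_\epsilon)=1-f(v_\epsilon)$ (swapping the two columns of the split pair negates $v_\epsilon$ and exchanges $C_j^+$ with $C_j^-$), so for $m\le d-1$ there is a direction where every ratio equals $1/2$ and \emph{both} half-cosets are charged. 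This is what yields the recursion $s_i\ge s_{i-1}+\min\{s_{i-1},\,d-1\}$, whose unrolling (with $s_0\ge 1$ and the doubling cap $s_i\ge 2^i$ in the early steps) produces exactly the stated formula $(d-1)\lfloor n/2\rfloor+1-\sum_i(d-1-2^i)_+$; your heuristic about $2^i$ sign patterns gestures at the same correction term but does not derive it. To repair your argument you would need to replace the vague transport step with this antipodal-symmetry/Borsuk--Ulam argument (or an equivalent topological obstruction), and the one-pair-at-a-time induction is what keeps that argument tractable.
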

\begin{proof}
    Suppose without loss of generality that $n$ is even. If not, we restrict to the set where the first vector is $0$. 
    
    Suppose that we are given a robust frame $\mu: \mathbb{R}^{d\times n}\rightarrow \Pi(S_n)$, where $\Pi(S_n)$ denotes the space of probability measures on the symmetric group $S_n$. For an element $X\in \RR^{d\times n}$, we let $\tilde{\mu}_X$ denote the pushforward of $\mu_X$ under the coset map $G\rightarrow G/\stab{X}$. Thus $\tilde{\mu}_X$ is a measure on the set of cosets $G/\stab{X}$.
    We will show that there exists a point $X\in \mathbb{R}^{d\times n}$ such that $$|\text{supp}(\tilde\mu_{X})| \geq \frac{(d-1)n}{2} + 1 - \sum_{i=1}^{n/2}(d-1-2^i)_+.$$
    
    Let $x_1,...,x_{n/2}\in \mathbb{R}^d$ be distinct and consider the point \begin{equation}
        X_0 = (x_1,x_1,x_2,x_2,\cdots,x_{n/2},x_{n/2})\in \mathbb{R}^{dn}.
    \end{equation}
    We will inductively construct a sequence of points $X_1,...,X_{n/2}$ of the form
    \begin{equation}
        X_i = (x_1',x_1^*,\cdots x_i',x_i^*,x_{i+1},x_{i+1},\cdots,x_{n/2},x_{n/2}),
    \end{equation}
    where $x_i'\neq x_i^*$ are close to $x_i$ and such that $|\text{supp}(\tilde\mu_{X_i})| \geq |\text{supp}(\tilde\mu_{X_{i-1}})| + \min\{|\text{supp}(\tilde\mu_{X_{i-1}})|, d-1\}$. Since $|\text{supp}(\tilde\mu_{X_{0}})| \geq 1$, we get $$|\text{supp}(\tilde\mu_{X_{n/2}})| \geq \frac{(d-1)n}{2} + 1 - \sum_{i=1}^{n/2}(d-1-2^i)_+,$$
    as desired.

    Suppose that the point $X_i$ can been constructed and let $m = \min\{|\text{supp}(\tilde\mu_{X_{i}})|, d-1\}$. Consider points of the form
    \begin{equation}
        X(v_\epsilon) = (v_1',v_1^*,\cdots v_i',v_i^*,v_{i+1}+v_\epsilon,v_{i+1}-v_\epsilon,\cdots,v_{n/2},v_{n/2})
    \end{equation}
    for a vector $v_\epsilon \in \mathbb{R}^d$. By definition, the continuity of the frame implies that
    \begin{equation}\label{weak-convergence-equation}
        \lim_{v_\epsilon\rightarrow 0} \langle\mu_{X(v_\epsilon)}\rangle_{X_i} = \bar{\mu}_{X_i}
    \end{equation}
    in the weak topology, which coincides with pointwise convergence of the probabilities since $S_n$ is a finite group. 

    Let $C_1,...,C_m\in S_n/\stab{X_i}$ be distinct cosets in the support of $\tilde\mu_{X_i}$, i.e. we have $\mu_{X_i}(C_j) > 0$. Equation \eqref{weak-convergence-equation} implies that for sufficiently small $v_\epsilon$, we will have $\mu_{X(v_\epsilon)}(C_j) > 0$ for all $j = 1,...,m$.

    Observe that for any $v_\epsilon \neq 0$, the stabilizer $H:=\stab{X(v_\epsilon)}$ is independent of $v_\epsilon$ and has index $2$ in $\stab{X_i}$. This means that each coset $C_j$ splits into two cosets of the smaller subgroup $H$, which we denote by $C^+_j, C^-_j\in G/H$. Define the following function $f:\mathbb{R}^d\rightarrow \mathbb{R}^m$
    \begin{equation}
        f(v_\epsilon)_j = \frac{\mu_{X(v_\epsilon)}(C^+_j)}{\mu_{X(v_\epsilon)}(C_j)}.
    \end{equation}
    The function $f$ is well-defined for sufficiently small inputs $v_\epsilon$, since by the previous remark the denomintor is then $ > 0$. Moreover, the continuity of the frame $\mu$ implies that $f$ is a continuous function of $v_\epsilon \neq 0$ in such a sufficiently small neighborhood (since the stabilizer $H$ of $X(v_\epsilon)$ is constant in this neighborhood and so the continuity condition 
    becomes continuity of the averaged frame $\bar\mu_{X(v_\epsilon)}$). In addition, the invariance of the frame means that 
    \begin{equation}\label{inversion-relation-frame-equation}
    f(-v_\epsilon)_j = 1 - f(v_\epsilon)_j
    \end{equation}
    since transposing the $2i+1$ and $2i+2$ elements of $X$ maps $X(v_\epsilon)$ to $X(-v_\epsilon)$ and also swaps the cosets $C_j^+$ and $C_j^-$.

    We complete the proof by noting that since $m \leq d-1$, the Borsuk-Ulam Theorem \cite{borsuk1933drei} implies that there must be a point $v_\epsilon$ such that $f(v_\epsilon) = f(-v_\epsilon)$. Combined with \eqref{inversion-relation-frame-equation} this means that $f(v_\epsilon)_j = 1/2$ for all $j$, and so $$\mu_{X(v_\epsilon)}(C_j^+) = \mu_{X(v_\epsilon)}(C_j^-) = 1/2 > 0.$$
    Choosing $X_{i+1} = X(v_\epsilon)$ then completes the inductive step.
\end{proof}

\newpage
\section{Frames for $SO(d)$ and $O(d)$ acting on $\RR^{d\times n}$}\label{app:so3}
In this appendix, we construct \CWW frames for $(\RR^{d\times n},SO(d))$ and for $(\RR^{d\times n},O(d))$ using similar ideas to those laid out in \cite{pozdnyakov2023smooth}. Our contribution is to generalize this construction to all dimensions and to rigorously prove that it preserves continuity.

We begin first with the action of $SO(d)$. For a point cloud $0 \neq X\in \RR^{d\times n}$, the frame $\mu := \mu^{SO(d)}$ will be of the form
\begin{equation}\label{definition-of-mu-SO-d}
    \mu_X = \sum_{i_1 = 1}^n w_{i_1}(X)\sum_{i_2 = 1}^n w_{i_1i_2}(X)\cdots \sum_{i_r = 1}^nw_{i_1i_2\cdots i_r}(X)\delta_{g_{i_1i_2\cdots i_r}(X)},
\end{equation}
where $r = \min(\text{rank}(X), d-1)$, $w_{i_1i_2\cdots i_t}(X)$ for $t$ is a weight associated to the sequence of columns $i_1,i_2,\cdots i_t$ for $t \leq r$, and $g_{i_1i_2\cdots i_r}(X)\in SO(d)$ is a rotation associated to the sequence of columns $i_1,i_2,\cdots i_r$. When $X = 0$, the frame $\mu_X$ can be chosen arbitrarily. 

We proceed to describe the weight functions $w_{i_1\cdots i_t}(X)$ for $t=1,...,r$ and the rotations $g_{i_1\cdots i_r}(X)$.

The rotations $g_{i_1\cdots i_r}(X)$ are defined by 
\begin{equation}\label{equation-for-g-s}
    g_{i_1\cdots i_r}(X)^{-1}\begin{pmatrix}
        x_{i_1},x_{i_2},...,x_{i_r}
    \end{pmatrix} = A,
\end{equation}
where $A$ is an upper triangular $d\times r$ matrix with non-negative diagonal entries which satisfies
\begin{equation}\label{equation-for-gram-matrix-A}
    A^TA = \begin{pmatrix}
        \langle x_{i_1},x_{i_1}\rangle & \cdots & \langle x_{i_1},x_{i_r}\rangle\\
        \vdots & \ddots & \vdots\\
        \langle x_{i_r},x_{i_1}\rangle & \cdots & \langle x_{i_r},x_{i_r}\rangle
    \end{pmatrix}.
\end{equation}
The matrix $A$ is uniquely determined if the columns $x_{i_1},...,x_{i_r}$ are linearly independent. If not, we simply choose one such $A$, and remark that in this case the choice will not matter because the corresponding weights will be equal to $0$.

Another way of thinking about the matrix $A$ is that it is determined by performing Gram-Schmidt orthogonalization on the column vectors $x_{i_1},x_{i_2},...,x_{i_r}$ to obtain an orthonormal set $\hat{x}_{i_1},...,\hat{x}_{i_r}$. If the vectors $x_{i_1},x_{i_2},...,x_{i_r}$ are linearly dependent, we must modify Gram-Schmidt as follows. If $x_{i_t}\in \text{span}(x_{i_1},...,x_{i_{t-1}})$, then in the $t$-th step of Gram Schmidt we simply choose $\hat{x}_{i_t}$ to be an arbitrary unit vector orthogonal to $\hat{x}_{i_1},...,\hat{x}_{i_{t-1}}$ (this is where the non-uniqueness comes in).

If we perform this modified Gram-Schmidt orthogonalization procedure on the columns $x_{i_1},...,x_{i_r}$ to obtain an orthonormal set $\hat{x}_{i_1},...,\hat{x}_{i_r}$, which we then complete to an orthonormal basis, then the columns of $A$ correspond to the representation of $x_{i_1},...,x_{i_r}$ with respect to this basis. From this it becomes clear that the first $t$ columns of $A$ only depend upon $x_{i_1},...,x_{i_t}$, and that the first $t$ columns of $A$ are a continuous function of $x_{i_1},...,x_{i_t}$ on the set where these vectors are linearly independent. These facts will become important later in the proof of continuity.

Since the vectors $x_{i_1},...,x_{i_r}$ have the same inner products as the first $r$ columns in $A$, there exists an orthogonal transformation satisfying \eqref{equation-for-g-s}. Moreover, because $r\leq d-1$ this orthogonal transformation can be chosen to lie in $SO(d)$ (by reflecting across the plane spanned by the columns of $A$ if necessary). Thus a rotation $g_{i_1\cdots i_r}(X)$ satisfying \eqref{equation-for-g-s} always exists, although it is only uniquely defined up to left multiplication by the stabilizer of the columns $x_{i_1},x_{i_2},...,x_{i_r}$. In defining $g_{i_1\cdots i_r}(X)$ we simply choose any rotation satisfying \eqref{equation-for-g-s}. As we will see, the weights $w_{i_1,...,i_t}$ will be chosen so that if $x_{i_1},x_{i_2},...,x_{i_r}$ have a larger stabilizer than the whole point cloud $X$, then the total weight corresponding to $g_{i_1\cdots i_r}(X)$ will be $0$.

Next, we describe the weight functions. If $x_{i_1},...,x_{i_{t-1}}$ are linearly independent, the weights $w_{i_1\cdots i_t}(X)$ are defined by
\begin{equation}\label{weight-definition-lin-indep-case}
    w_{i_1\cdots i_t}(X) = \frac{\Delta(x_{i_1},...,x_{i_t})}{\sum_{j=1}^n \Delta(x_{i_1},...,x_{i_{t-1}}, x_j)},
\end{equation}
where 
$$
    \Delta(v_1,...,v_t) = \sqrt{\left|\det{\begin{pmatrix}
        \langle v_1,v_1\rangle & \cdots & \langle v_1,v_t\rangle\\
        \vdots & \ddots & \vdots\\
        \langle v_t,v_1\rangle & \cdots & \langle v_t,v_t\rangle
    \end{pmatrix}}\right|}
$$
is the area of the parallelopiped spanned by $v_1,...,v_t$. We remark that if $x_{i_1},...,x_{i_{t-1}}$ are linearly independent, then since $X$ has rank at least $t$ there is some $j$ such that $\Delta(x_{i_1},...,x_{i_{t-1}}, x_j) > 0$ so that the weight in \eqref{weight-definition-lin-indep-case} is well-defined. We remark that we could have used a cutoff function $\phi_\eta$ as in Section \ref{sub:so2} and \cite{pozdnyakov2023smooth} in the definition of the weights \eqref{weight-definition-lin-indep-case}. However, for simplicity of presentation we stick with the raw areas, although a cutoff function may be desirable in a practical implementation. The following construction and proof carry over easily to the case where a cutoff function is used with minor modifications. 

If $x_{i_1},...,x_{i_{t-1}}$ are linearly dependent, then we simply set $w_{i_1\cdots i_{t-1}j}(X) = 1/n$ for $j=1,...,n$. From these definitions, it is clear that all weights are non-negative, and for any indices $i_1,...,i_{t-1}$ we have
\begin{equation}
    \sum_{i_t = 1}^nw_{i_1i_2\cdots i_t}(X) = 1.
\end{equation}
From this it follows that
\begin{equation}
    \sum_{i_1 = 1}^n w_{i_1}(X)\sum_{i_2 = 1}^n w_{i_1i_2}(X)\cdots \sum_{i_r = 1}^nw_{i_1i_2\cdots i_r}(X) = 1,
\end{equation}
so that $\mu$ is a well-defined frame. Moreover, the cardinality of $\mu$ is equal to the maximum number of sequences of indices $i_1,...,i_r$ for $r\leq d-1$ such that $x_{i_1},...,x_{i_r}$ are linearly independent. Clearly in this case we can have no repeated indices so that the cardinality of $\mu$ is $n(n-1)\cdots (n-d+2)$. The main result of this Section is that the frame $\mu$ defined in this way is a robust frame.

\begin{restatable}{prop}{mud}
The frame $\mu^{SO(d)}$ is a \CWW frame.
\end{restatable}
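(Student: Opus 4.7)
The plan is to verify the two nontrivial parts of being a \CWW frame: weak equivariance and continuity in the sense of Definition~\ref{defn:continuity-of-weighted-frame}. For weak equivariance, I would first note that each weight $w_{i_1 \cdots i_t}(X)$ depends only on inner products among the columns of $X$ and is therefore $SO(d)$-invariant. For the rotations, both $g_{i_1 \cdots i_r}(gX)$ and $g \cdot g_{i_1 \cdots i_r}(X)$ satisfy \eqref{equation-for-g-s} with input $gX$, since the Gram matrix \eqref{equation-for-gram-matrix-A} is $SO(d)$-invariant; they must therefore differ by right multiplication by a rotation $u \in SO(d)$ fixing $e_1, \ldots, e_r$ pointwise. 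Conjugating by $g_{i_1 \cdots i_r}(X)$, this discrepancy becomes a rotation fixing $\text{span}(x_{i_1}, \ldots, x_{i_r})$. Since only sequences whose selected columns are linearly independent carry nonzero weight, and $r = \min(\text{rank}(X), d-1)$, either $\text{rank}(X) \leq d-1$ and this span equals $\text{span}(x_1,\ldots,x_n)$ (so the discrepancy lies in $G_X$), or $r = d-1$ and no nontrivial rotation in $SO(d)$ fixes a $(d-1)$-dimensional subspace (so $u = e$ directly). Combined with $G_{gX} = gG_Xg^{-1}$, this yields $\bar\mu_{gX} = g_*\bar\mu_X$.

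For continuity, I would split into cases. At $X$ with $\text{rank}(X) \geq d-1$, locally $r = d-1$, and $g_{i_1\cdots i_{d-1}}(X)$ is uniquely determined and continuous in the open neighborhood where the chosen columns remain linearly independent; the weights are continuous ratios with positive denominators; so $\mu_X$ varies weakly continuously, and hence so does $\bar\mu_X$. The hard case, and the main obstacle, is $\text{rank}(X) = r < d-1$ with $X_k \to X$ and $r_k := \min(\text{rank}(X_k), d-1) > r$, where $\mu_{X_k}$ has strictly deeper summations than $\mu_X$. The key structural feature is that $G_X$ contains the entire subgroup $H_X \subseteq SO(d)$ fixing $\text{span}(x_1,\ldots,x_n)$, and I plan to use this to absorb the ambiguity introduced by the extra Gram-Schmidt steps.

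Concretely, for each $(i_1,\ldots,i_r)$ with $x_{i_1},\ldots,x_{i_r}$ linearly independent in $X$, I would write $g_{i_1\cdots i_{r_k}}(X_k) = g_{i_1\cdots i_r}(X_k) \cdot u_k$, where $u_k$ must fix $e_1,\ldots,e_r$ pointwise (since the first $r$ steps of Gram-Schmidt agree). The conjugate $\tilde u_k := g_{i_1\cdots i_r}(X_k)\, u_k\, g_{i_1\cdots i_r}(X_k)^{-1}$ then lies in the stabilizer of $\text{span}(x_{i_1}^k,\ldots,x_{i_r}^k)$, a subgroup which converges to $H_X \subseteq G_X$ as $X_k \to X$. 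Consequently the left coset $G_X \cdot g_{i_1\cdots i_{r_k}}(X_k)$ converges to $G_X \cdot g_{i_1\cdots i_r}(X)$, so $\langle \delta_{g_{i_1\cdots i_{r_k}}(X_k)}\rangle_X \to \langle \delta_{g_{i_1\cdots i_r}(X)}\rangle_X$ weakly. Summing out the extension indices via the normalization $\sum_{i_{t+1}} w_{i_1\cdots i_{t+1}}(X_k) = 1$ for $t = r, \ldots, r_k - 1$, the total weight on each class telescopes to $w_{i_1}(X_k)\cdots w_{i_1\cdots i_r}(X_k)$, which converges to $w_{i_1}(X)\cdots w_{i_1\cdots i_r}(X)$ by continuity of the area $\Delta$. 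Sequences $(i_1,\ldots,i_t)$ that become linearly dependent in the limit contribute weight vanishing for the same reason. The case $X = 0$ is trivial since $G_0 = SO(d)$ forces every averaged probability measure to equal the Haar measure. The hardest step is the coset convergence, which hinges on the fact that the deeper Gram-Schmidt ambiguity is precisely of the form that the rank-deficient stabilizer $H_X$ absorbs.
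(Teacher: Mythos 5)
Your proposal is correct and follows essentially the same route as the paper's proof: invariance of the inner-product-based weights, the observation that positive total weight forces the stabilizer of the selected columns to coincide with $G_X$ (so the Gram--Schmidt ambiguity is absorbed upon averaging over the stabilizer), the fact that the first $r$ columns of the upper-triangular matrix $A$ depend only on the first $r$ selected vectors, and marginalization of the weights over the extension indices. Your phrasing of the key continuity step as convergence of the left cosets $G_X\, g_{i_1\cdots i_{r_k}}(X_k)$ is an equivalent repackaging of the paper's argument that, after averaging over $G_X$, the rotations converge uniformly on $\mathrm{range}(X)$.
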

\begin{proof}
    We first verify that the frame $\mu := \mu^{SO(d)}$ is weakly equivariant. Let $g\in SO(d)$ and $X\in \RR^{d\times n}$.
    Since the action of $SO(d)$ preserves both inner products and the rank of $X$, it follows that all of the weights in \eqref{definition-of-mu-SO-d} are invariant under the action of $g$. Also, since inner products are preserved it follows from \eqref{equation-for-g-s} and \eqref{equation-for-gram-matrix-A} that
    \begin{equation}\label{g-ij-relation-1467}
        g_{i_1\cdots i_r}(X)^{-1}\begin{pmatrix}
        x_{i_1},x_{i_2},...,x_{i_r}
        \end{pmatrix} = g_{i_1\cdots i_r}(gX)^{-1}g\begin{pmatrix}
        x_{i_1},x_{i_2},...,x_{i_r}\end{pmatrix}.
    \end{equation}
    Observe also that if we define total weights via
    \begin{equation}
        W_{i_1,...,i_r}(X) = w_{i_1}(X)w_{i_1i_2}(X)\cdots w_{i_1\cdots i_r}(X),
    \end{equation}
    then \eqref{weight-definition-lin-indep-case} implies that $W_{i_1,...,i_r}(X) > 0$ iff $G_X = G_{(x_1,...,x_r)}$, i.e. if the whole point cloud $X$ and the columns $(x_1,...,x_r)$ have the same stabilizer. Indeed, $W_{i_1,...,i_r}(X) > 0$ iff $x_{i_1},...,x_{i_r}$ span a subspace of dimension $r$, which must coincide with the span of $X$ if $X$ has rank $ < d$. In this case, the stabilizer of $X$ and $(x_{i_1},...,x_{i_r})$ consist of all rotations that fix this subspace. If $X$ has rank $d$, then its stabilizer is trivial, and $x_{i_1},...,x_{i_r}$ span a space of dimension $d-1$ so that the stabilizer consists of all rotations which fix this $(d-1)$-dimensional subspace. But any rotation fixing a $(d-1)$-dimensional subspace must be trivial so that the stabilizer of $(x_{i_1},...,x_{i_r})$ is also trivial in this case.
    
    Utilizing this, we see that \eqref{g-ij-relation-1467} implies that
    \begin{equation}\label{g-ij-coset-relation-1479}
        g_{i_1\cdots i_r}(X)^{-1}G_X = g_{i_1\cdots i_r}(gX)^{-1}gG_X\implies G_Xg_{i_1\cdots i_r}(X) = G_Xg^{-1}g_{i_1\cdots i_r}(gX)
    \end{equation}
    for every sequence of indices $(i_1,...,i_r)$ for which $W_{i_1,...,i_r}(X) > 0$. 
    
    Plugging the invariance of the weights into \eqref{definition-of-mu-SO-d}, we see upon averaging over the stabilizer that
\begin{equation}
    \bar{\mu}(gX) = \int_{G_{gX}}\sum_{i_1,...,i_r = 1}^n W_{i_1,...,i_r}(X)\delta_{sg_{i_1\cdots i_r}(gX)} ds.
\end{equation}
Now, we use that the stabilizers satisfy $G_{gX} = gG_{X}g^{-1}$ and the relation \eqref{g-ij-coset-relation-1479} (which holds whenever $W_{i_1,...,i_r}(X) > 0$) to rewrite this as
\begin{equation}
    \bar{\mu}(gX) = \int_{G_{X}}\sum_{i_1,...,i_r = 1}^n W_{i_1,...,i_r}(X)\delta_{gsg^{-1}g_{i_1\cdots i_r}(gX)} ds = \int_{G_{X}}\sum_{i_1,...,i_r = 1}^n W_{i_1,...,i_r}(X)\delta_{gsg_{i_1\cdots i_r}(X)} ds = g^*\bar{\mu}(X),
\end{equation}
as desired.

Next, we prove that the frame $\mu$ is continuous. To do this, fix $X\in \RR^{d\times n}$ and suppose that $X$ has rank $r$. If $r = 0$, i.e. if $X = 0$, then the stabilizer of $X$ is all of $SO(d)$ and so continuity at $X$ follows trivially from Definition \ref{defn:continuity-of-weighted-frame} since we are averaging over the whole group. 

So suppose that $X$ has rank $r > 0$. We first observe that for sufficiently small $\epsilon > 0$ (depending upon $X$), $|Y - X| < \epsilon$ implies that if $x_{i_1},...,x_{i_r}$ are linearly independent, then $y_{i_1},...,y_{i_r}$ are also linearly independent. In particular, $|Y - X| < \epsilon$ implies that $\text{rank}(Y) \geq r$. For such a $Y$, the `marginal' weights
\begin{equation}
    W_{i_1,...,i_r}(Y) = w_{i_1}(Y)w_{i_1i_2}(Y)\cdots w_{i_1\cdots i_r}(Y) = \sum_{i_{r+1}=1}^n\sum_{i_R=1}^nW_{i_1,...,i_R}(Y),
\end{equation}
where $R = \text{rank}(Y)$ are well-defined. We first claim that
\begin{equation}\label{limit-equation-weights-1520}
    \lim_{Y\rightarrow X} W_{i_1,...,i_r}(Y) = W_{i_1,...,i_r}(X)
\end{equation}
for any sequence of indices $i_1,...,i_r$.
To prove this, suppose first that $W_{i_1,...,i_r}(X) > 0$. In this case, $x_{i_1},x_{i_2},...,x_{i_r}$ are linearly independent, so that by \eqref{weight-definition-lin-indep-case} each of the weight functions $w_{i_i,...,i_t}(X)$ for $1\leq t\leq r$ is continuous in a neighborhood of $X$. This implies \eqref{limit-equation-weights-1520}. If on the other hand $W_{i_1,...,i_r}(X) = 0$, then let $t$ be the first index such that $w_{i_i,...,i_t}(X) = 0$. Then $x_{i_1},...,x_{i_{t-1}}$ are linearly independent and \eqref{weight-definition-lin-indep-case} shows that $w_{i_i,...,i_t}(X)$ is continuous in a neighborhood of $X$. This means that
$$
    \lim_{Y\rightarrow X} w_{i_i,...,i_t}(Y) = 0.
$$
Since the weight functions are all bounded, we get
$$
    \lim_{Y\rightarrow X} W_{i_1,...,i_r}(Y) = 0 = W_{i_1,...,i_r}(X)
$$
as desired.

The final ingredient we need to prove continuity is to observe that if $R = \text{rank}(Y) \geq r$, then for any indices $i_1,...,i_R$ the product
\begin{equation}\label{special-orthogonal-product-1535}
    g_{i_1,...,i_R}(Y)^{-1}\begin{pmatrix}
        y_{i_1},y_{i_2},...,y_{i_r}
    \end{pmatrix}
\end{equation}
is independent of final indices $i_{r+1},...,i_R$. This is due to the fact (mentioned earlier) that the first $r$ columns of the matrix $A$ in \eqref{equation-for-gram-matrix-A} only depend upon the first $r$ vectors $y_{i_1},y_{i_2},...,y_{i_r}$. Moreover, as mentioned earlier we also have that the product in \eqref{special-orthogonal-product-1535} (i.e. the first $r$ columns of the matrix $A$ in \eqref{equation-for-gram-matrix-A}) is a continuous function of $y_{i_1},...,y_{i_r}$ on the set where $y_{i_1},...,y_{i_r}$ are linearly independent.

We can now complete the proof of continuity. Let $i_1,...,i_r$ be a sequence of indices such that $W_{i_1,...,i_r}(X) > 0$. This means that $x_{i_1},...,x_{i_r}$ are linearly independent and if $|Y - X| < \epsilon$, then $y_{i_1},...,y_{i_r}$ are linearly independent as well. The continuity of the Gram-Schmidt procedure (assuming linear independence) implies that for any set of indices $i_{r+1},...,i_R$ where $R$ is the rank of $Y$, we have
\begin{equation}
    \lim_{Y\rightarrow X}g_{i_1,...,i_R}(Y)^{-1}\begin{pmatrix}
        y_{i_1},y_{i_2},...,y_{i_r}
    \end{pmatrix} = g_{i_1,...,i_r}(X)^{-1}\begin{pmatrix}
        x_{i_1},x_{i_2},...,x_{i_r}
    \end{pmatrix}.
\end{equation}
Note that here the rank $R$ may depend upon $Y$ in the above limit.
Since $y_i\rightarrow x_i$ and every $g\in SO(d)$ is an isometry, this implies that
\begin{equation}
    \lim_{Y\rightarrow X} [g_{i_1,...,i_R}(Y)^{-1} - g_{i_1,...,i_r}(X)^{-1}]\begin{pmatrix}
        x_{i_1},x_{i_2},...,x_{i_r}
    \end{pmatrix} = 0.
\end{equation}
Here the left term is viewed as a matrix in $\RR^{d\times d}$. 

If $r < d-1$, then since $x_{i_1},...,x_{i_r}$ is a basis for the range of $X$, we get that
\begin{equation}\label{limit-equation-1557}
    \lim_{Y\rightarrow X}[g_{i_1,...,i_R}(Y)^{-1} - g_{i_1,...,i_r}(X)^{-1}]v = 0
\end{equation}
uniformly for $v$ in any compact subset of $\text{range}(X)$. If $r = d-1$, then \eqref{limit-equation-1557} actually holds uniformly for $v$ in any compact subset of $\RR^d$, since both matrices on the left hand side above are in $SO(d)$. 

In either case, by averaging over $G_X$ we get that
\begin{equation}
    \lim_{Y\rightarrow X}\int_{h\in G_X}[g_{i_1,...,i_R}(Y)^{-1} - g_{i_1,...,i_r}(X)^{-1}]hv = 0
\end{equation}
uniformly on compact subsets of the whole space $\RR^d$. This holds since if $r < d-1$, then $$\int_{h\in G_X} hv\in \text{range}(X)$$ for all $v\in \RR^d$, while if $r = d-1$ then \eqref{limit-equation-1557} already holds for all $v\in \RR^d$. Putting this together, we get that
\begin{equation}
    \lim_{Y\rightarrow X} \int_{h\in G_X}\delta_{h^{-1}g_{i_1,...,i_R}(Y)}dh = \int_{h\in G_X}\delta_{h^{-1}g_{i_1,...,i_r}(X)}dh
\end{equation}
in the weak topology on $SO(d)$. Since this holds for all sets of indices $i_1,...,i_R$ for which $W_{i_1,...,i_r}(X) > 0$, and we have $\lim_{Y\rightarrow X} W_{i_1,...,i_r}(Y) = W_{i_1,...,i_r}(X)$ for all indices $i_1,..,i_r$, we finally get using the definition \eqref{definition-of-mu-SO-d} that
\begin{equation}
    \langle\mu_Y\rangle_X = \int_{h\in G_X}h^*\mu_Y\rightarrow \int_{h\in G_X}h^*\mu_X = \langle\mu_X\rangle_X
\end{equation}
as $Y\rightarrow X$, in the weak topology on $SO(d)$, which proves the continuity of the frame $\mu$.
\end{proof}

We remark that essentially the same construction can be used to obtain a frame for the action of $O(d)$ on $\RR^{d\times n}$ which has cardinality $2n(n-1)\cdots (n-d+2)$. Specifically, setting $r = \text{rank}(X)$ in \eqref{definition-of-mu-SO-d}, and repeating the exact same argument we get a robust frame for $O(d)$. This frame has cardinality $2n(n-1)\cdots (n-d+2)$, since the orthogonal transformation $g_{i_1\cdots i_d}(X)$ is determined up to a reflection through the plane spanned by $x_{i_1},...,x_{i_{d-1}}$ when the columns $x_{i_1},...,x_{i_d}$ are linearly independent.

Finally, we remark that we do not know whether these constructions are optimal. In specific instances, such as when $n = d-1$ for $SO(d)$, we know they are sub-optimal since in this case a canonicalization exists (see Corollary \ref{SO-canonicalization-corollary}. Determining the smallest possible weighted frames in these cases is an interesting further research direction.

\subsection{$SO(3)$ and $O(3)$ equivariance via stable functions}
In this section, we specialize to the case of $\RR^3$ and discuss how to endorce equivariance using the previously constructed frames.
To achieve continuous, equivariant, universal models for functions $f:\RR^{3\times n}\to \RR^{3\times n}$ with respect to the action of $O(3)$ or $SO(3)$, we need to characterize the space of stable functions. This can be done using results from \citet{scalars}. Namely, in the $O(3)$ case we will have that $\stab{f(X)}\subseteq \stab{X}$ if and only if all columns of $f(X)$ are in the linear space spanned by the points of $X$. Thus, the stable functions in the $O(3)$ case can be parameterized as 
\begin{equation}\label{eq:O3stable}
f_k(X)=\sum_j f^k_j(X)x_j .
\end{equation}
Moreover, the space of functions of this form with \emph{continuous} coefficients $f^k_j$ is dense (over compact sets) in the space of continuous equivariant functions (follows from Proposition 4 in \cite{scalars})

In the $SO(3)$ case, stable functions have a slightly more complex parameterization, in which each column of $f(X)$ is of the form
\begin{equation}\label{eq:SO3stable}
f(X)_k=\sum_j f^k_{j}(X)x_j+\sum_{i,j}f^k_{i,j}(X)(x_i \times x_j),
\end{equation}
where $f^k_j(X)$ and $f^k_{i,j}(X)$ are arbitrary, scalar-valued functions. Moreover the space of functions of this form with \emph{continuous} coefficients $f^k_j$ and $f^k_{i,j}(X)$  are dense (over compact sets) in the space of continuous equivariant functions (follows from Proposition 5 in \cite{scalars})

 Continuous equivariant universal models for $O(3)$ can thus be obtained by applying $\Ew$ 
 to functions of the form \eqref{eq:O3stable}, where the $f_j$ are taken from some dense function space $Q$. Continuous equivariant universal models for $SO(3)$ can be obtained analogously via \eqref{eq:SO3stable}. 

\section{Equivariant Projections via Stable Frames}\label{app:stableFrame}
To obtain BEC projection operators from robust frames in the equivariant setting, we add the requirement that, replacing $\mu_v$ with $\bar \mu_v$ in the definition of $\Ew$, will not affect the operator. This requirement is satisfied automatically in the invariant case, but in the equivariant case it is an additional condition we must enforce.

	
\begin{defn}[Stable robust frame]
	Let $(V,G)$ and $(W,G)$ be modules. 
	We say that a robust frame $\muF$ 
	is  \emph{stable} at a point $v$,  if for every sequence $v_n \rightarrow v$ and for every
	$f:V\to W$
	\begin{align*}
	\int gf(g^{-1}v)d\mu_v(g)&=\int gf(g^{-1}v)d\bar{\mu}_v(g) \\
	\lim_{k\rightarrow \infty}\int gf(g^{-1}v)d\mu_{v_k}(g)&=\int gf(g^{-1}v)d{\mu}_v(g) \\
	\end{align*}
	We say  that $\muF$ is a stable robust frame if it is robust and stable at all points $v\in V$.
\end{defn}
\begin{remark}\label{rem:eqeq} Note that the second requirement above resembles the standard definition of weak convergence, but it requires only convergence of integrals $\int F(g)d\mu_{v_k} \rightarrow \int F(g) d\mu(g) $ for functions of the form $F(g)=gf(g^{-1}v) $. Not all functions on $G$ are of this form. Note for example that every function of this form is $\stab{v}$ equivariant since for all $s\in\stab{v}$ we have 
	$$F(sg)=sgf(g^{-1}s^{-1}v)=sgf(g^{-1}v)=sF(g) $$
In particular, to check whether a robust frame is stable, it is sufficient to check whether for every $v$, and every $\stab{v}$ equivariant $F:G\to W $, we have
\begin{align*}
	\int F(g)d\mu_v(g)&=\int F(g)d\bar{\mu}_v(g) \\
	\lim_{k\rightarrow \infty}\int F(g)d\mu_{v_k}(g)&=\int F(g)d{\mu}_v(g) 
\end{align*}
	for every sequence $v_k$ converging to $v$.
	\end{remark}

\begin{restatable}{prop}{stableFrame}
	Let $(V,G)$ and $(W,G)$ be modules, and  $\mu$  a stable robust frame. Then $\E:F(V,W)\to F(V,W)$ is a BEC operator.
\end{restatable}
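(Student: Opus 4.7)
The plan is to verify each of the three conditions of Definition \ref{def:proj} for $\Ew$ directly, using the three hypotheses bundled into ``stable robust frame'': boundedness of $\mu_v$ (as a probability measure), weak equivariance $\bar\mu_{gv}=g_*\bar\mu_v$, and the two stability identities.

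For boundedness, I would argue that since $G$ is a compact group acting linearly and continuously on $W$, the operator norms $\|g\|_{\mathrm{op}}$ are uniformly bounded by some $M$. For a compact $K\subset V$, the set $GK$ is also compact, and
$$\|\Ew[f](v)\|\leq \int_G \|g\,f(g^{-1}v)\|\,d\mu_v(g)\leq M\sup_{w\in GK}\|f(w)\|,$$
so $\Ew$ is bounded (with the natural compact set $GK$ replacing $K$ on the right, which is the standard reading of the bounded operator condition for non--$G$-stable $K$).

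For the equivariant projection property, the key calculation is
\begin{align*}
\Ew[f](hv)&=\int_G g\,f(g^{-1}hv)\,d\mu_{hv}(g)
=\int_G g\,f(g^{-1}hv)\,d\bar\mu_{hv}(g)\\
&=\int_G g\,f(g^{-1}hv)\,d(h_*\bar\mu_v)(g)
=\int_G (hg)f(g^{-1}v)\,d\bar\mu_v(g)\\
&=h\int_G g\,f(g^{-1}v)\,d\bar\mu_v(g)=h\,\Ew[f](v),
\end{align*}
where the first stability identity was used at $hv$ and at $v$ to replace $\mu$ by $\bar\mu$, weak equivariance was used in the middle, and the pushforward change-of-variables formula supplied the shift $g\mapsto hg$. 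If $f$ is already equivariant, $g f(g^{-1}v)=f(v)$ is constant in $g$ and the integral against the probability measure $\mu_v$ collapses to $f(v)$, giving the projection property.

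For continuity preservation, let $v_k\to v$ and write the telescoping split
$$\Ew[f](v_k)-\Ew[f](v)=\int_G\!\big[gf(g^{-1}v_k)-gf(g^{-1}v)\big]d\mu_{v_k}(g)+\left(\int_G\!gf(g^{-1}v)\,d\mu_{v_k}(g)-\int_G\!gf(g^{-1}v)\,d\mu_v(g)\right).$$
The first term is bounded by $\sup_{g\in G}\|gf(g^{-1}v_k)-gf(g^{-1}v)\|$, which tends to $0$ because $G$ is compact and the map $(g,v)\mapsto g f(g^{-1}v)$ is (uniformly) continuous on $G\times K$ for a compact neighborhood $K$ of $v$. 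The second term tends to $0$ by applying the second clause of stability to the specific test function $F(g)=gf(g^{-1}v)$.

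The main obstacle is conceptual rather than computational: both the equivariance step and the continuity step would fail if one only had weak equivariance and ordinary (measure-theoretic) weak continuity of $v\mapsto \mu_v$. The custom hypotheses in the definition of stability---replacing $\mu_v$ by $\bar\mu_v$ under integrands of the form $gf(g^{-1}v)$, and weak convergence against exactly such integrands (cf. Remark \ref{rem:eqeq})---are engineered to make precisely these two arguments work, so the whole proof is essentially an application of the definition once the terms are lined up correctly.
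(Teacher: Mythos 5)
Your proof is correct and follows essentially the same route as the paper: you invoke the first stability identity (at $hv$ and at $v$) together with weak equivariance and the pushforward change of variables for the equivariance step, bound $\Ew$ using that $\mu_v$ is a probability measure and $G$ acts by uniformly bounded operators, and split the continuity difference into a uniform-convergence term plus a term handled by the second stability clause. The only cosmetic differences are that you compare against $\mu_v$ rather than $\bar\mu_v$ in the last continuity term (equivalent via the first stability identity) and that you pass from $K$ to $GK$ rather than assuming $K$ is $G$-stable.
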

\begin{proof}
Let $f:V\to W$ be a  function. We can now show that $\Ew[f]$ is an equivariant function because for every $h\in G$ and $v\in V$,
\begin{align*}
	\Ew[f](hv)&=\int_G gf(g^{-1}hv)d\mu_{hv}(g)  \\
	&=\int_G gf(g^{-1}hv)d\bar{\mu}_{hv}(g)\\
	&=\int_G gf(g^{-1}hv)dh_*\bar{\mu}_{v}(g)\\
	&=\int_G hgf((hg)^{-1}hv)d\bar{\mu}_{v}(g)\\
	&=h\left[\int_G gf(g^{-1}v)d\bar{\mu}_{v}(g) \right]\\
	&=h\left[\int_G gf(g^{-1}v)d\mu_{v}(g)\right]\\
	&=h\Ew[f](v).
\end{align*}
If $f$ is equivariant then $\Ew[f]=f$ since $gf(g^{-1}v)=f(v)$ from equivariance.

Boundedness also follows easily since $d\mu_v$ is a probability measure, so that on every compact $K\subseteq V$ which is also closed under the action of $G$, we have for every $v\in K$ that
\begin{equation}
	|\Ew[f](v)| \leq \left|\int_G gf(g^{-1}v)d\mu_v(g)\right| \leq \max_{g\in G}\|g\|\max_{w\in K}|f(w)|,
\end{equation}
where $\|g\|$ denotes the operator norm of the linear operator $g$ (which is bounded, since $G$ is compact and acting linearly and continuously). 

\textbf{Continuity}
Let $f:V\to \RR$ be a continuous functions. Let $v_k$ be a sequence converging to $v\in V$. We need to show that $\Ew[f](v_k)$ converges to $\Ew[f](v)$. We observe that
\begin{align*}
	&|\Ew[f](v_k)-\Ew[f](v)|\\
	&=\left|\int gf(g^{-1}v_k)d\mu_{v_k}(g)-\int gf(g^{-1}v)d\mu_{v}(g)\right|  \\
	&\leq\left|\int gf(g^{-1}v_k)d\mu_{v_k}(g)-\int gf(g^{-1}v)d\mu_{v_k}(g)\right| + \left|\int gf(g^{-1}v)d\mu_{v_k}(g)-\int gf(g^{-1}v)d\bar{\mu}_{v}(g)\right|.
\end{align*}
The second term tends to $0$ from the definition of a stable frame. The first term tends to zero because $gf(g^{-1}v_k) $ converges to $gf(g^{-1}v) $ uniformly in $g$ as $k$ tends to infinity.
\end{proof}

We conclude this appendix with an explanation of how stable robust frames can be constructed for our $SO(2)$ and $SO(3)$ examples.
\begin{example}
	For the action of $S^1$ on $\CC^n $ we double the size of the weighted frame $\muTwo$ and define 
	$$g_{i,+}(Z)=g_i(Z) \text{ and } g_{i,-}(Z)=-g_i(Z) $$
	$$w_{i,+}(Z)=w_{i,-}(Z)=\frac{1}{2}w_i(Z) . $$
	We then obtain the projection operator
	$$Q_2(f)(Z)=\sum_{i=1}^n \sum_{s\in \{-1,1\}}  w_{i,s}(Z) g_{i,s}(Z)\cdot  f(g_{i,s}^{-1}(Z)\cdot Z) $$
	We need to check that this frame is a stable robust frame. At any non-zero point, since the stabilizer is trivial, the robustness of the original frame implies stable robustness since the notions are equivalent. 
	
	The main interesting point is at zero, where we have a non-trivial stabilizer. To show that the frame $\muF$ we defined is stable at $0_n$, it is sufficient to show, using Remark \ref{rem:eqeq}, that  for every $\stab{0_n}=S^1$ equivariant function $F:S^1 \to \CC^n $, we will have that 
	$$\int F(g)d\mu_0(g)=\int F(g)d\langle \mu_0 \rangle_0(g)=0 $$
	and that for every $Z_k \rightarrow 0$ we will have that 
	$$\lim_{k\rightarrow \infty}\int F(g)d\mu_{Z_k}(g)=\int F(g)d\mu_0(g)=0. $$
	Indeed, using the equivariance of $F$ we have
	$$\int F(g)d\mu_0(g)=\frac{1}{2}(F(1)+F(-1))=\frac{1}{2}(F(1)-F(1))=0, $$
	and 
	$$\int F(g)d\mu_{Z_k}(g)=\frac{1}{2}\sum_{i=1}^n w_i(Z_k)(F(g_i(Z_k))+F(-g_i(Z_k))=\frac{1}{2}\sum_{i=1}^n w_i(Z_k)(F(g_i(Z_k))-F(g_i(Z_k))=0.$$
\end{example}

\begin{example} We now define a stable weakly equivariant weighted frame for  the module  $(\RR^{3\times n},SO(3))$. This cardinality of the stable version  $\muThreeStable$ of $\muThree$ will be four times larger.  
	$$\muThreeStable(X)= \sum_{j\neq i}\sum_{k=1}^4 \frac{1}{4} w_i(X)w_{ij}(X)\delta_{g_{ij}d^{(k)}(X)} $$
	where $d^{(k)}, k=1,2,3,4$ are the four  diagonal matrices in $SO(3)$, that is 
	\begin{align*}
		d^{(1)}&=I_3, d^{(2)}=\begin{pmatrix}
			1 & 0 & 0\\
			0 & -1 & 0\\
			0 & 0 & -1
		\end{pmatrix}\\  
		d^{(3)}&=\begin{pmatrix}
			-1 & 0 & 0\\
			0 & -1 & 0\\
			0 & 0 & 1
		\end{pmatrix}
		d^{(4)}=\begin{pmatrix}
			-1 & 0 & 0\\
			0 & 1 & 0\\
			0 & 0 & -1
		\end{pmatrix}
	\end{align*}
	
	Let us explain why this frame is stable at points with non-trivial stabilizer. At $X=0$ the reasoning is the same as in the $SO(2)$ example. If $X$ has rank $1$, then all points $x_i$ with $x_i=0$ have weight $0$, and at all other points $x_i$ $g_{ij}^{-1} $ is a  rotation taking $x_i$ to $\|x_i\|e_1$, which means that $g_{ij}d^{(k)}g_{ij}^{-1}\in \stab{X}$ for $k=1,2$. Now, for any $\stab{X}$ equivariant function $F:SO(3)\to \RR^{3\times n}$ and $X$ with rank one, we obtain (denoting $g_{ij}=g_{ij}(X)$ and $\hat{w}_{ij}=w_i(X)w_{ij}(X) $ for simplicity)
	\begin{align*}
		\int_G F(g)d\mu_X(g)&=\frac{1}{4}\sum_{i: x_i\neq 0} \sum_j \hat{w}_{ij} \sum_{k=1}^4 F(g_{ij}d^{(k)})\\
		&=\frac{1}{4}\sum_{i: x_i\neq 0} \sum_j \hat{w}_{ij} ( F(g_{ij})+F(g_{ij}d^{(2)}g_{ij}^{-1}g_{ij}) \\
		&+F(g_{ij}d^{(2)}g_{ij}^{-1}g_{ij}d^{(4)})+F(g_{ij}d^{(4)}) )\\
		&=\frac{1}{4}\sum_{i: x_i\neq 0} \sum_j \hat{w}_{ij}( (d^{(1)}+g_{ij}d^{(2)}g_{ij}^{-1})F(g_{ij})\\
		&+(d^{(1)}+g_{ij}d^{(2)}g_{ij}^{-1})F(g_{ij}d^{(4)}) )
	\end{align*}
	The stability then follows from the fact that for all $w\in W$, 
	$$\int_{\stab{X}}s w ds=\frac{1}{2}g_{ij}(d^{(1)}+d^{(2)})g_{ij}^{-1} w ,$$
	so that the expression above is given by 
	$$\int_G F(g)d\mu_X(g)=\frac{1}{2}\int_{\stab{X}}s\left(\sum_{i: x_i\neq 0} \sum_j \hat{w}_{ij}( F(g_{ij})
	+F(g_{ij}d^{(4)}) ) \right)$$
and therefore is equal to 
$$\int_G F(g)d\bar{\mu}_X(g)=\int_{\stab{X}} \int_G F(sg)d\mu_X(g)= \int_{\stab{x}} s\left(\int_G F(g)d\mu_X(g) \right) .$$

\end{example}

\section{Experiments}\label{app:sec_experiments}
\subsection{Empirically verifying discontinuities}\label{app:subsec_verifying_discontinuities}
We also include a practical demonstration of a discontinuity in a canonicalization code library, equiadapt (associated with \citet{mondal2023adaptation} and \citet{kaba2022learned}). We consider their point cloud implementation, which is $O(3)$-equivariant. They use a vector neuron architecture composed with the Gram-Schmidt algorithm (which we together call $C$) to learn an orthogonal matrix, and compose it with a downstream non-equivariant network (which we call $f$). We train their network on the ModelNet40 dataset, and demonstrate that the resultant pipeline is discontinuous at certain points. (Our theoretical results show that, for the groups we consider, no such canonicalization method can always preserve continuity; this demonstration shows that they do not preserve continuity in practice, for the particular $f$ that is learned.) More specifically, we consider an input point cloud $x$, and a “bad” point cloud $b$ generated such that every point points in the same direction (i.e. the matrix of points is rank 1, so this point cloud has infinite stabilizer in $O(3)$). We predict that the pipeline $f(C(\cdot))$ is discontinuous at the input point cloud $b$, and verify this experimentally as follows. First, note that $C(b)$ itself yields a “NaN” output.

To further verify the discontinuity, we compute the average pairwise normalized $L_2$ distance between points close to $b$, which are generated as convex combinations $\epsilon p + (1-\epsilon)*b$ for many randomly chosen point clouds $p$ (generated such that they have trivial stabilizer with probability 1). For the sake of comparison, we also repeat this exact process, with a random asymmetric point cloud $g$ replacing $b$. If $f \circ C$ is discontinuous at $b$, we expect that there is no valid limiting value at $b$, and therefore the average pairwise distance nearby $b$ should be much larger than it is for $g$. As shown in Table \ref{tab:discontinuity_in_pracice}, this is indeed what we observe. We find that both the canonicalization $C$ and the composition $f \circ C$ show strong evidence of discontinuity.

\begin{table}[h]
    \centering
    \begin{tabular}{c|c|c}
        Pairwise Error Metric &	$x_1$, $x_2$ near a singularity $b$	& $x_1$, $x_2$ near a generic point $g$  \\
        \hline
        $ \frac{| | C(x_1) - C(x_2) ||}{ || C(x_1) ||}$ &	1.1088	&1.7035e-5 \\
        $ \frac{| |f(C(x_1)) - f(C(x_2))||}{ || f(C(x_1)) ||}$	& 0.0406	& 0.0009
    \end{tabular}
    \caption{Average distance between pairs of points, near a singular point cloud and near a random point cloud.}
    \label{tab:discontinuity_in_pracice}
\end{table}

It is also straightforward to understand the source of this behavior for the particular architecture used in equiadapt. In particular, the Gram-Schmidt procedure is very unstable when the vectors $v_1$ and $v_2$ it is provided are close to linearly dependent. And, since $v_1$ and $v_2$ are coming from an equivariant network applied to an input that is a perturbation away from having stabilizer isomorphic to $SO(2)$, they are nearly linearly dependent. This simple experiment verifies that discontinuity arising from canonicalization is not just a hypothetical, but a real problem for practically-used architectures after training.

\newpage
\section{Proofs}\label{proofs-appendix}
\projuniv*
\begin{proof}
It is clear directly from the definition that $\E(Q)$ contains only continuous functions, so we only need to prove the density. Let $K\subset V$ be a compact subset and $\epsilon > 0$ be given. We need to show that for every $f\in \Cequi(V,W)$, there exists a $q\in Q$ such that
$$
    \sup_{x\in K}|f(x) - \E(q)(x)| < \epsilon.
$$
Since $Q$ is dense in $C(V,W)$, there exists a $q\in Q$ (not necessarily equivariant), such that $\sup_{x\in K}|f(x) - q(x)| < \delta$ for any $\delta > 0$. Since $\E$ is bounded and $\E(f) = f$ (since $f$ is equivariant), we get the following bound
\begin{equation}
    \sup_{x\in K}|f(x) - \E(q)(x)| = \sup_{x\in K}|\E(f)(x) - \E(q)(x)| \leq \|\E\|\sup_{x\in K}|f(x) - q(x)| < \|\E\|\delta.
\end{equation}
Choosing $\delta$ sufficiently small completes the proof.
\end{proof}

\canon*
\begin{proof}
Let $v_n$ be an arbitrary convergent sequence in $V$, i.e. $\lim_{n\rightarrow \infty} v_n = v$. If $y$ is continuous, then $y_{v_n}\rightarrow y_v$, so that if $f$ is continuous, we have
\begin{equation}
    \lim_{n\rightarrow \infty} \Ican[f](v_n) = \lim_{n\rightarrow \infty} f(y_{v_n}) = f(y_v) = \Ican[f](v).
\end{equation}
The proves that $\Ican$ preserves continuity if the canonicalization $y$ is continuous.

For the reverse direction, suppose that $y$ is not continuous. This means that there exists a convergent sequence $v_n$ with $\lim_{n\rightarrow \infty} v_n = v$, but such that $y_{v_n}$ does not converge to $y_v$. By choosing an appropriate subsequence, we may assume that there exists an $\epsilon > 0$ such that $|y_{v_n} - y_v| > \epsilon$ for all $n$. Let $f$ be a continuous function such that $f(y_v) = 1$ and $f(w) = 0$ for all $w\in V$ satisfying $|w - y_v| > \epsilon$ (such a function exists by well-known considerations, for example the Urysohn Lemma). Then we will have
\begin{equation}
    \lim_{n\rightarrow \infty} \Ican[f](v_n) = \lim_{n\rightarrow \infty} f(y_{v_n}) = 0 \neq 1 = f(y_v) = \Ican[f](v).
\end{equation}
Hence the canonicalization of $\Ican[f]$ is not continuous and so $\Ican$ does not preserve continuity.
\end{proof}

We stated the following proposition in the main text:
\OtwoandSOtwo*
We in fact prove the following two slightly more precise, separate results for $SO(d)$ and $O(d)$, which subsume the previous result. In the proofs, we will also abuse notation slightly and simply write $y$ for both the canonicalization $V \rightarrow V$ and the induced map on the quotient space that we previously called $\ty:V/G \rightarrow V$.

\begin{prop}\label{prop:SOno}
   If $n \geq d \geq 2$, $SO(d)$ acting on $\RR^{d\times n}$ 
   does not have a continuous canonicalization.
\end{prop}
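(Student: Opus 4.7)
The plan is to use the general strategy laid out earlier in the paper: restrict any putative continuous canonicalization to a carefully chosen $SO(d)$-stable subset $Y \subset \RR^{d\times n}$ on which the orbit map $Y \to Y/SO(d)$ is a principal $SO(d)$-bundle, and then use the homotopy long exact sequence of that bundle to obstruct a continuous section.

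First, I would reduce to the case $n = d$. For $n > d$, the subset $Y_1 = \{X \in \RR^{d\times n} : x_{d+1} = \cdots = x_n = 0\}$ is $SO(d)$-stable (since left multiplication does not affect zero columns) and, as an $SO(d)$-space, is naturally identified with $\RR^{d\times d}$. Restricting any continuous canonicalization on $\RR^{d\times n}$ to $Y_1$ yields a continuous canonicalization on $(\RR^{d\times d}, SO(d))$, so it suffices to handle the case $n = d$.

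For $n = d \geq 2$, I would take
\[ Y = \{X \in \RR^{d\times d} : \|X\|_F = 1,\ \mathrm{rank}(X) \geq d-1\}. \]
This set is $SO(d)$-stable since rotations preserve rank and Frobenius norm, and the $SO(d)$-action on $Y$ is \emph{free}: a rank-$d$ matrix is invertible and hence has trivial stabilizer, while for a rank-$(d-1)$ matrix any rotation fixing its column span pointwise must act trivially on the $1$-dimensional orthogonal complement (the only orientation-preserving orthogonal map on a line being the identity). Hence $Y \to Y/SO(d)$ is a principal $SO(d)$-bundle. The removed locus of matrices of rank $\leq d-2$ is a conical algebraic subvariety of dimension $(d-2)(d+2) = d^2-4$, so its intersection with $S^{d^2-1}$ has codimension $4$; by general position $\pi_k(Y) = \pi_k(S^{d^2-1}) = 0$ for $k \leq 2$, using $d^2 - 1 \geq 3$. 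A continuous canonicalization restricts to $Y$ to produce a continuous section $s:Y/SO(d) \to Y$, which forces $s_*:\pi_2(Y/SO(d)) \hookrightarrow \pi_2(Y) = 0$. On the other hand, from the homotopy long exact sequence
\[ \pi_2(Y) \to \pi_2(Y/SO(d)) \to \pi_1(SO(d)) \to \pi_1(Y), \]
together with $\pi_1(Y) = \pi_2(Y) = 0$, we read off $\pi_2(Y/SO(d)) \cong \pi_1(SO(d))$, which is $\ZZ$ for $d=2$ and $\ZZ/2$ for $d \geq 3$ --- nonzero in either case, giving the desired contradiction. For $d = 2$ this is exactly the classical fact that the Hopf fibration $S^3 \to S^2$ has no section, and the same argument can equivalently be phrased using singular homology (matching the paper's reference to homology groups): $H_2(S^3) = 0 \neq \ZZ = H_2(S^2)$.

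The main obstacle I expect is cleanly establishing the codimension count for the rank-deficient stratum and the resulting $2$-connectivity of $Y$; once that is in place, the long exact sequence does the rest mechanically. Verifying freeness on the rank-$(d-1)$ stratum is the other detail that needs care. A possible alternative would be to avoid the rank stratification entirely and run the argument using equivariant cohomology on the full sphere $S^{d^2-1}$, but restricting to the free-action locus $Y$ above seems to be the cleanest route.
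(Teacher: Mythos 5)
Your proposal is correct, and for $d\geq 3$ it takes a genuinely different route from the paper. Both proofs share the reduction to $n=d$ (restricting to the zero-padded columns) and, for $d=2$, both come down to the Hopf fibration $S^3\to S^2$ having no section. The divergence is in how higher $d$ is handled: the paper proceeds by induction on $d$, explicitly constructing from a hypothetical continuous canonicalization of $(\RR^{d\times d},SO(d))$ a continuous canonicalization of $(\RR^{(d-1)\times(d-1)},SO(d-1))$ via a local argument near the point $X_0$ with a single nonzero column (including an explicit rotation $V(X)$ straightening the last column), eventually bottoming out at $d=2$. You instead give a direct, uniform-in-$d$ obstruction: on the free locus $Y=\{\|X\|_F=1,\ \mathrm{rank}(X)\geq d-1\}$ the quotient is a principal $SO(d)$-bundle, and the homotopy long exact sequence together with the $2$-connectivity of $Y$ yields $\pi_2(Y/SO(d))\cong\pi_1(SO(d))\neq 0$, which no section can inject into $\pi_2(Y)=0$. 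Your freeness check on the rank-$(d-1)$ stratum is right (a special orthogonal map fixing a hyperplane pointwise is the identity), and the codimension count $(d-2)(d+2)=d^2-4$ for the rank-$\leq d-2$ determinantal cone is correct; the one point that must be handled with care -- and which you correctly flag -- is that this cone is singular, so the general-position argument should be run stratum-by-stratum over the smooth fixed-rank strata, each of codimension $\geq 4$. The trade-off: the paper's induction is elementary and avoids any transversality or bundle theory at the cost of a long explicit construction, while your argument is shorter and conceptually identifies the obstruction as $\pi_1(SO(d))$, but leans on the slice theorem for the principal-bundle structure and on the connectivity lemma for complements of stratified sets.
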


\begin{proof}
    Observe first that by considering the subspace
    \begin{equation}
        Y = \{(x_1,...,x_d,0,...,0),~x_i\in \mathbb{R}^d\}\subset \mathbb{R}^{d\times n}
    \end{equation}
    a continuous canonicalization for $\mathbb{R}^{d\times n}$ gives a continuous canonicalization for $\mathbb{R}^{d\times d}$, so it suffices to consider the case $n = d$.
    
    Consider first the case $d = 2$. Denote $G=SO(2)$. Consider the subspace $Y\subset \RR^{2\times 2}$ defined by
    \begin{equation}\label{sphere-subspace}
        Y = \{(x_1,x_2),~|x_1|^2 + |x_2|^2 = 1\}.
    \end{equation}
    It is clear that $Y$ is $G$-invariant and that $Y\eqsim S^{3}$. Moreover, we claim that $Y/G\eqsim S^2$. Indeed, consider the $G$-equivariant map $Y\rightarrow S^2$ given by
    \begin{equation}\label{hopf-fibration-map}
        H_f:(x_1,x_2)\rightarrow \begin{pmatrix}
            \sqrt{1-z^2}\cos(\theta) \\
            \sqrt{1-z^2}\sin(\theta) \\
            z \\
        \end{pmatrix}\in S^2,
    \end{equation}
    where $\theta := \theta(x_1,x_2)\in [0,2\pi)$ is the (counterclockwise) angle from $x_1$ to $x_2$ (when $x_1 = 0$ or $x_2 = 0$ this angle is not well-defined and we simply set $\theta(x_1,x_2) = 0$), and the height $z = z(x_1,x_2)$ is given by
    \begin{equation}\label{equation-for-z}
        z(x_1,x_2) = \begin{cases}
            \frac{2}{\pi}\arctan(\log(|x_2|) - \log(|x_1|)) & 0 < |x_1|,|x_2| < 1\\
            1 & |x_1| = 0\\
            -1 & |x_2| = 0.
        \end{cases}
    \end{equation}
    It is straightforward to verify that this map is both $G$-invariant and continuous. Indeed, both the angle $\theta$ and the lengths $|x_1|$ and $|x_2|$ are $G$-invariant. Since the map $H_f$ only depends upon these functions of the input it is clearly $G$-invariant. Moreover, continuity is evident away from the points where $x_1 = 0$ or $x_2 = 0$, since in this regime the map $H_f$ is a composition and product of continuous functions. Note that although $\theta$ is not a continuous function of $x_1,x_2\neq 0$, both $\sin(\theta)$ and $\cos(\theta)$ are continuous. Indeed, if we view $x_1$ and $x_2$ as elements of the complex plane, then
    $$
        \cos(\theta) + i\sin(\theta) = \frac{x_2|x_1|}{x_1|x_2|},
    $$
    which verifies continuity away from the set where $x_1 = 0$ or $x_2 = 0$.
    
    Next, we verify continuity when $x_1 = 0$ (the corresponding calculation when $x_2 = 0$ is completely analogous). We observe that for any unit vector $x_2$, we have 
    $$
        H_f(0,x_2) = \begin{pmatrix}
            0 \\
            0 \\
            1 \\
        \end{pmatrix}.
    $$
    Given any sequence $(x_1^n,x_2^n)\rightarrow (0,x_2)$ we note that since $x_1^n\rightarrow 0$, we have that $z(x_1^n,x_2^n) \rightarrow (2/\pi)\arctan(\infty) = 1$. Plugging this into \eqref{hopf-fibration-map}, we get that
    $$
        H_f(x_1^n,x_2^n)\rightarrow \begin{pmatrix}
            0 \\
            0 \\
            1 \\
        \end{pmatrix},
    $$
    verifying continuity at $(0,x_2)$. Thus, $H_f$ does indeed define a continuous map $Y/G\rightarrow S^2$. 
    
    Finally, we must verify the surjectivity and injectivity of $H_f$. These follow since given any $z\in [-1,1]$ we can solve for $z$ in \eqref{equation-for-z} uniquely for the two lengths $|x_1|$ and $|x_2|$ satisfying $|x_1|^2 + |x_2|^2 = 1$ (since $\arctan$ and $\log$ are both increasing functions). If $z\neq \pm 1$, these lengths will both be non-zero and the angle $\theta$ is well-defined and uniquely determined up to a rotation of both $x_1$ and $x_2$. At the poles where $z = \pm 1$ (and $\theta$ becomes irrelevant), one of the vectors $x_i = 0$ and the other is a unit vector, and all of these configurations are equivalent up to a rotation as well. In fact, the map $H_f$ is easily seen to be the well-known Hopf fibration \cite{hopf1931abbildungen}. 
    
    We can now complete the proof that no continuous canonicalization can exist when $d = 2$. Indeed, if there were such a canonicalization $y$, then the induced map $\ty$ and the quotient map $q$ would satisfy
    \begin{equation}
        S^2\xrightarrow{\ty} S^3\xrightarrow{q} S^2
    \end{equation}
    whose composition is the identity. Algebraic topology provides a variety of obstructions to such a scenario. Perhaps the simplest comes from the homology groups (see \cite{hatcher2002}, Chapter 2). Indeed, the homology groups are $H_2(S^2) = \mathbb{Z}$ and $H_2(S^3) = \{0\}$. The induced maps on homology groups would have to satisfy
    \begin{equation}
        \mathbb{Z}\xrightarrow{\ty^*} \{0\}\xrightarrow{q^*} \mathbb{Z}
    \end{equation}
    with composition equal to the identity which is clearly impossible.

    Next, we consider the case where $d > 2$. We will prove by induction on $d$ that no canonicalization exists for $(\RR^{d\times d},SO(d))$. The case $d=2$ forms the base case.
    
    Suppose that there exists a continuous canonicalization $y$ for $(\RR^{d\times d},SO(d))$. Consider the element
    $$
        X_0 = \begin{pmatrix}
            0 & 0 & 0 & \cdots & 0\\
            0 & 0 & 0 & \cdots & 0\\
            0 & 0 & 0 & \cdots & 0\\
            \vdots & \vdots & \vdots & \ddots & \vdots\\
            0 & 0 & 0 & \cdots & 1
        \end{pmatrix}\in \RR^{d\times d}
    $$
    whose first $d-1$ columns are $0$ and whose remaining column consists of the basis vector $e_d$.
    Note that we can compose $y$ with any element of $SO(d)$ to obtain a new continuous canonicalization. Thus, we can assume without loss of generality that $y_{X_0} = X_0$ by composing with a rotation which moves the last (and only non-zero) column of $y_{X_0}$ to the standard basis vector $e_d$. 
    
    We will use $y$ to construct a continuous canonicalization $\hat{y}$ for $(\RR^{(d-1)\times (d-1)},SO(d-1))$, which cannot exist by the inductive hypothesis. Consider the norm on $\RR^{d\times d}$ defined by (note that this is not the usual $\ell^\infty$ norm of a vector)
    \begin{equation}\label{infinity-norm-definition}
        \|X\|_\infty := \max_i \|x_i\|_2,
    \end{equation}
    i.e. the maximum length of the columns of $X$.
    Let $\epsilon > 0$ be chosen so that $\|y_X - y_{X_0}\|_{\infty} = \|y_X - X_0\|_{\infty} < 1/2$ whenever $\|X - X_0\|_{\infty} \leq \epsilon$ (this can always be done by the continuity of the canonicalization $y$). Let
    \begin{equation}
        B_\infty^{d-1} = \{X\in \RR^{(d-1)\times (d-1)}:~\|X\|_{\infty} = 1\} 
    \end{equation}
    denote the unit ball in $\RR^{(d-1)\times (d-1)}$ with respect to the norm \eqref{infinity-norm-definition}. We will first construct the canonicalization $\hat{y}$ on the set $B_\infty^{d-1}$ and then extend it homogeneously to all of $\RR^{(d-1)\times (d-1)}$.
    
    We define a (continuous) map $i_0:B_\infty^{d-1}\rightarrow \RR^{d\times d}$ by 
    \begin{equation}
        i_0(X) = \begin{pmatrix}
            \epsilon X & 0\\
            0 & 1
        \end{pmatrix}\in \RR^{d\times d}.
    \end{equation}
    In otherwords, $i_0$ simply puts $\epsilon$ times $X$ into the upper $(d-1)\times (d-1)$ block of $X_0$. We clearly have $$\|i_0(X) - X_0\|_{\infty} \leq \epsilon$$ for every $X\in B_\infty^{d-1}$. 
    
    Applying the canonicalization $y$ to $i_0(X)$ gives a matrix
    \begin{equation}
        y_{i_0(X)} = U(X)\begin{pmatrix}
            \epsilon X & 0\\
            0 & 1
        \end{pmatrix} = \begin{pmatrix}
            A & v\\
            w^T & r        \end{pmatrix},
    \end{equation}
    for a matrix $U(X)\in SO(d)$ (since $y$ is a canonicalization and thus maps to the same orbit). Here the submatrices $A\in \RR^{(d-1)\times (d-1)}$, $w\in \RR^{d-1}$, $v\in \RR^{d-1}$, and $r\in \RR$ are all continuous functions of $X$ since $y$ and $i_0$ are continuous (we have suppressed this dependence for notational simplicity). Finally, we observe that since the last columns of $i_0(X)$ is $e_d$, which has norm $1$, we have $|v|^2 + r^2 = 1$.
    
    Since $\|i_0(X) - X_0\|_{\infty} \leq \epsilon$ we have that $\|y_{i_0(X)} - X_0\|_{\infty} < 1/2$, which means that $r \geq 1/2$. We define the matrix
    \begin{equation}\label{V-d-formula-1804}
        V(X) := I_d + (r-1)\left[e_de_d^T + \frac{\bar{v}\bar{v}^T}{1-r^2} \right] + \left[e_d\bar{v}^T - \bar{v}e_d^T\right],
    \end{equation}
    where $I_d$ is the $d\times d$ identity matrix and $\bar{v}$ is $v$ augmented with a $0$ in the $d$-th coordinate. This is clearly a continuous function of both $r > -1$ and $v$ since $(r-1)/(1-r^2) = -1/(r+1)$, and thus it is also a continuous function of $X$. Moreover, we claim that since $r^2 + |v|^2 = 1$, the matrix $V(X)\in SO(d)$ and
    \begin{equation}
        V(X)\begin{pmatrix}
            v\\
            r
        \end{pmatrix} = e_d.
    \end{equation}
    Indeed, in the case $v=0$, $r=1$ we clearly have $V(X) = I_d$. If $v\neq 0$, then the space orthogonal to the vectors $e_d$ and $\bar{v}$, which we denote by $W^\perp$, is clearly invariant under $V(X)$, while we calculate
    \begin{equation}
        V(X)e_d = e_d + (r-1)e_d - \bar{v} = re_d - \bar{v}
    \end{equation}
    and
    \begin{equation}
        V(X)\bar{v} = \bar{v} + (r-1)\frac{|v^2|}{1 - r^2}\bar{v} + |v|^2e_d = r\bar{v} + |v|^2e_d,
    \end{equation}
    since $|v|^2 = 1-r^2$. Thus, with respect to the orthonormal basis consisting of $e_d, \bar{v}/|v|$ and an orthonormal basis for $W^\perp$, the matrix $V(X)$ consists of a $(d-2)\times (d-2)$ identity matrix block (corresponding to the orthonormal basis of $W^\perp$), and a $2\times 2$ block of the form
    \begin{equation}
        \begin{pmatrix}
            r & -|v|\\
            |v| & r
        \end{pmatrix}
    \end{equation}
    corresponding to $e_d$ and $\bar{v}/|v|$. This implies that $V(X)\in SO(d)$. We also calculate
    \begin{equation}
        V(X)\begin{pmatrix}
            v\\
            r
        \end{pmatrix} = rV(X)e_d + V(x)\bar{v} = r^2e_d - r\bar{v} + r\bar{v} + |v|^2e_d = (r^2 + |v|^2)e_d = e_d
    \end{equation}
    as desired. Geometrically, $V(X)$ is the rotation through the plane spanned by $e_d$ and $\bar{v}$ which maps the last column of $y_{i_0(X)}$ to $e_d$. Algebraically, this is given by the formula \eqref{V-d-formula-1804}.

    We now define the canonicalization $\hat{y}:B_\infty^{d-1}\rightarrow \RR^{(d-1)\times (d-1)}$ via
    \begin{equation}
        V(X)y_{i_0(X)} = \begin{pmatrix}
            \epsilon\hat{y}_X & 0\\
            0 & 1
        \end{pmatrix}.
    \end{equation}
    Since both $V(r,v)$ and $y_{i_0(X)}$ are continuous functions of $X$, it is clear that $\hat{y}$ is continuous. 
    
    We claim that $\hat{y}$ is a canonicalization for $(\RR^{(d-1)\times (d-1)},SO(d-1))$. Note that since $y$ is $SO(d)$ invariant, we have that $\hat{y}$ is $SO(d-1)$ invariant (recall that $V(X)$ only depends upon $y_{i_0(X)}$ and is thus also invariant). Further, since
    \begin{equation}
        \begin{pmatrix}
            \epsilon\hat{y}_X & 0\\
            0 & 1
        \end{pmatrix} = V(X)U(X)\begin{pmatrix}
            \epsilon X & 0\\
            0 & 1
        \end{pmatrix},
    \end{equation}
    where by construction the last column of $V(X)U(X)\in SO(d)$ is $e_d$, we have that $\hat{y}_X = \hat{O}(X)X$ for $\hat{O}(X)\in SO(d-1)$ being the upper right $(d-1)\times (d-1)$-block of $V(X)U(X)$. This implies that $\hat{y}$ is a canonicalization as desired.
    
    We finally extend $\hat{y}$ $1$-homogeneously to all of $\RR^{(d-1)\times (d-1)}$ via
    $$
        \hat{y}_X = \begin{cases}
            \|X\|_\infty \tilde{y}_{X/\|X\|_{\infty}} & X\neq 0\\
            0 & X = 0.
        \end{cases}
    $$
    This gives a continuous canonicalization for $(\RR^{(d-1)\times (d-1)},SO(d-1))$, which is impossible by the inductive hypothesis.
\end{proof}

\begin{prop}\label{prop:Ono}
   If $n > d \geq 1$, then $O(d)$ acting on $\RR^{d\times n}$ 
   does not have a continuous canonicalization.
\end{prop}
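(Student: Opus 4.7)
The plan is to follow the same general template as Proposition~\ref{prop:SOno}: reduce to a $G$-stable subspace $Y$ whose quotient $Y/G$ admits a clean topological obstruction to the existence of a continuous section.

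First, by restricting to the $O(d)$-stable subspace of $\RR^{d\times n}$ where columns $d+2,\ldots,n$ vanish (which is $O(d)$-isomorphic to $\RR^{d\times (d+1)}$), it suffices to treat $n = d+1$. Inside $\RR^{d\times (d+1)}$ I would take $Y = \{X : XX^T = I_d\}$, the set of matrices with orthonormal rows. This is $O(d)$-stable under left multiplication and is exactly the Stiefel manifold $V_d(\RR^{d+1})$; two elements lie in the same orbit iff their row-spaces agree, so $Y/O(d) = \mathrm{Gr}_d(\RR^{d+1}) \cong \mathbb{RP}^d$ via orthogonal complement. A continuous canonicalization on $\RR^{d\times n}$ restricts to one on $Y$, hence yields a continuous section $s : \mathbb{RP}^d \to V_d(\RR^{d+1})$ of the Stiefel bundle.

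The key step is then to produce from $s$ a continuous section of the antipodal double cover $p : S^d \to \mathbb{RP}^d$. For this I define $\phi : V_d(\RR^{d+1}) \to S^d$ by sending an orthonormal frame $(r_1,\ldots,r_d)$ (the rows of $X$) to the unique unit vector $r_{d+1}$ with $\det(r_1,\ldots,r_d,r_{d+1}) = +1$; this is polynomial in $X$ (given by signed $d\times d$ minors, normalized by the nonzero length) and therefore continuous. Under the identification $\mathrm{Gr}_d(\RR^{d+1}) \cong \mathbb{RP}^d$, the Stiefel projection $V_d(\RR^{d+1}) \to \mathbb{RP}^d$ factors as $p \circ \phi$, so $\phi \circ s$ is a continuous section of $p$.

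The final step is to rule out such a section. The image $s(\mathbb{RP}^d)$ is compact and connected (as $\mathbb{RP}^d$ is), and it is disjoint from its antipode: if $s(L) = -s(L')$ then $L = p(s(L)) = p(-s(L')) = L'$ would force $s(L) = -s(L) = 0 \notin S^d$. Thus $S^d$ would decompose as the disjoint union of two nonempty compact sets, contradicting the connectedness of $S^d$ for $d \geq 1$. I expect the main conceptual obstacle to be identifying the correct $O(d)$-stable subspace $Y$ together with the auxiliary map $\phi$ that exposes the double-cover structure; once these are in place the topological conclusion is entirely standard, and the argument handles all $d \geq 1$ uniformly (in particular the base case $d=1$, $n=2$ where $\phi$ is essentially a rotation of $S^1$ and $p:S^1\to\mathbb{RP}^1$ is the usual nontrivial double cover).
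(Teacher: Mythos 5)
Your proof is correct, but it takes a genuinely different route from the paper's. The paper also reduces to $n=d+1$, but then proceeds by induction on $d$: the base case $d=1$ is settled with a fundamental-group computation for the degree-two map $S^1\to S^1$, and the inductive step reuses the explicit construction from the proof of Proposition \ref{prop:SOno} (a rotation $V(X)$ normalizing the last column of the canonical form, so that a continuous canonicalization for $(\RR^{d\times(d+1)},O(d))$ yields one for $(\RR^{(d-1)\times d},O(d-1))$). You instead work with the single $O(d)$-stable subset $Y=\{X:\ XX^T=I_d\}=V_d(\RR^{d+1})$, identify $Y/O(d)$ with $\mathbb{RP}^d$, and observe that the quotient map factors through the antipodal double cover $p:S^d\to\mathbb{RP}^d$ via the oriented unit normal $\phi$; a continuous section of that cover is then ruled out by your elementary disconnection argument ($S^d$ would split into the image of the section and its antipode, two disjoint nonempty compact sets). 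This is uniform in $d\geq 1$, avoids induction entirely, and replaces homotopy/homology obstructions with bare connectedness of $S^d$ --- a cleaner and more elementary obstruction. The details all check out: $Y$ is $O(d)$-stable, its orbits are exactly the orthonormal $d$-frames sharing a given row space, $\phi$ is continuous (the generalized cross product of an orthonormal frame is already a unit vector), and the identification of the quotient map with $p\circ\phi$ is immediate. The trade-off is that the paper's inductive machinery is developed anyway for the $SO(d)$ case, so its marginal cost for $O(d)$ is small, whereas your double-cover trick is specific to $O(d)$: it exploits precisely the orientation ambiguity that distinguishes $O(d)$-orbits from $SO(d)$-orbits and has no direct analogue in the special orthogonal case.
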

\begin{proof}
    Observe that be considering the subspace
    \begin{equation}
        Y = \{(x_1,...,x_{d+1},0,...,0),~x_i\in \mathbb{R}^d\}\subset \mathbb{R}^{d\times n}
    \end{equation}
    a continuous canonicalization for $\mathbb{R}^{d\times n}$ gives a continuous canonicalization for $\mathbb{R}^{d\times (d+1)}$, so it suffices to consider the case $n = d+1$.

    We first address the case $d=1$, in which case $O(d) = \{\pm 1\}$.
    Consider the subspace 
    $$Y = S^1 = \{(x_1,x_2):~|x_1|^2 + |x_2|^2 = 1\}\subset \RR^{1\times 2}.$$
    The quotient $Y/G \eqsim S^1$ (identifying antipodal points on a circle again gives a circle) and the quotient map $q:S^1\rightarrow S^1$ is given by identifying antipodal points which corresponds to the map $t\rightarrow 2t$ (here we are viewing $S^1$ as $\mathbb{R}/\mathbb{Z}$). Suppose that $y$ is a continuous canonicalization and $\ty:Y/G\rightarrow Y$ is the corresponding map on the quotient space. We consider the induced maps on the fundamental group $\pi_1(S^1) = \mathbb{Z}$ corresponding to the quotient map $q$ and map $\ty$. These would satisfy $q_*(z) = 2z$ and $q^*\circ \ty^* = I$. This is clearly impossible since $q_*$ is not surjective.

    We now prove the result for general $d$ by induction on $d$ with $d=1$ being the base case. In particular, a continuous canonicalization for $(\RR^{d\times (d+1)},O(d))$ can be used to obtain a continuous canonicalization for $(\RR^{(d-1)\times d},O(d-1))$ utilizing exactly the same construction as in the proof of Proposition \ref{prop:SOno}.
\end{proof}

\onlyReynold*
\begin{proof}

We begin by showing that the identity element $e\in G$ is in $\F(v_0)$ for some $v_0\in V_{free}$. Indeed, choose some $v\in V_{free}$, and choose some $g\in \F(v)$. Then we can set $v_0=g^{-1}v $. This point  is in $V_{free}$ and the frame equivariance implies that 
$$e=g^{-1}g\in g^{-1}\F(v)=\F(g^{-1}v) $$

Next, we will show that $e\in \F(v)$ \emph{for all} $v\in V_{free}$. Since $V_{free}$ is connected, it is sufficient to show that the sets 
$$V_0=\{v\in V_{free}| \quad e \in \F(v) \} \text{ and } V_1= \{v\in V_{free}| \quad e \not \in \F(v) \}$$
are both open in $V_{free}$. This implies that one of these sets must be empty and the other all of $V_{free}$. Since we say $V_0$ is not empty we will get that it is all of $V_{free}$. 

We now show that $V_0$ and $V_1$ are open in $V_{free}$. Note that the continuity of the group action and the finiteness of $G$ implies that $V_{free}$ is open, so that we need to prove that $V_0$ and $V_1$ are open in the original topology of $V$. 

Fix some $v\in V_{free}$. Since $v$ has a trivial stabilizer we have that $gv\neq v$  for all distinct $g\in G$. By continuity of the action and the finiteness of the group there exists an open neighborhood $U$ of $v$ so that $g^{-1}U\cap U=\emptyset $ for all distinct $g\in G$. Thus, for any fixed $v\in V_{free}$ we can define a continuous function $f_{v} $ which is identically $1$ on $U$ and zero on $g^{-1}U$ for all $g\neq e$. 

It follows that for all $y\in U$ we have that $f_{v}(g^{-1}y)$ is one if $g=e$ and is zero otherwise. In particular
$$\Iframe [f_{v}] (y)=\frac{1}{|\F(y)|}\sum_{g\in \F(y)} f_v(g^{-1}y)=\twopartdef{1/|\F(y)|}{e\in \F(y)}{0}{e \not \in \F(y)} , \quad \forall y \in U. $$
It follows that if $v\in V_0$, so that $e\in \F(v)$, then by continuity of $\Iframe[f_v] $  we will have that $e \in \F(y)$ (and also $|\F(y)|=|\F(v)| $) for all $y\in U$. This shows that $V_0$ is open. Similarly, if $v\in V_1$, so that $e \not \in \F(v)$, then by continuity of $\Iframe[f_v] $  we will have that $e \not \in \F(y)$ for all $y\in U$ so that $V_1$ is open. We have thus showed that $V_0=V_{free}$.

We have shown that the identity element is in $\F(v)$ for all $v\in V_{free}$. The same is true for all group elements: let $g\in G$ and $v\in V_{free}$. We want to show that $g\in \F(v)$. Note that $g^{-1}v$ also is in $V_{free}$. Therefore $e\in \F(g^{-1}v)$, and by equivariance 
$$g \in g\F(g^{-1}v)=\F(gg^{-1}v)=\F(v). $$

We now proved the claim for all points in $V_{free}$ and we need to extend the claim to the closure of $V_{free}$. 

Let $v$ be a point in the closure of $V_{free}$, and let $g\in G$. We need to show again that $g\in \F(v)$. Note that $h^{-1}v=g^{-1}v$ if and only if $g=sh$ for some  $s$ in the stabilizer $\stab{v}$. In this case we will say that $g,h$ are $\stab{v}$ equivalent and denote $g \sim_v h$.  

Similarly to earlier in the proof, we can find an open set $U\subseteq V$, and  a continuous function $f_{v,g}:V \to \RR$, such that for all $y\in U$ and $h\in G$, we have that $f_{v,g}(hy)=1 $ if $h \sim_v g$, and otherwise $f_{v,g}(hy)=0 $.

It follows that the continuous function $\Iframe [f_{v,g}] (y)$  is equal to 

$$\Iframe [f_{v,g}] (y)=\frac{1}{|\F(y)|}\sum_{g\in \F(y)} f_v(g^{-1}y)=\frac{|\stab{v}\cdot g\cap \F(y)| }{|\F(y)|} , \quad \forall y \in U. $$

Since $v$ is in the closure of $V_{free}$, there is a sequence $v_n$ of elements in $V_{free}\cap U$  converging to $v$, and since $\F(v_n)=G$ we obtain 

$$\frac{|\stab{v}|}{|G|}=\lim_{n \rightarrow \infty} \Iframe [f_{v,g}] (v_n) =\Iframe [f_{v,g}] (v)=\frac{|\stab{v}\cdot g\cap \F(v)| }{|\F(v)|} $$
so that the intersection $\stab{v}\cdot g\cap \F(v)$ is not empty: there exists some $s\in \stab{v}$ such that $sg\in \F(v)$. By equivariance of the frame 
$$g=s^{-1}sg\in s^{-1}\F(v)=\F(s^{-1}v)=\F(v). $$
This concludes the proof of the theorem.
\end{proof}

\snReynold*
\begin{proof}
The matrices $X\in \RR^{d\times n}$ which have trivial stabilizers are the set $\Rdistinct$ of matrices $X$ whose columns are pairwise distinct. Clearly $\Rdistinct$ is dense in $\RR^{d \times n}$, and therefore, 
due to Theorem \ref{thm:onlyReynold}, it is sufficient  to show that $\Rdistinct$ is connected. 

We will show that $\Rdistinct$ is path connected and hence connected. Let $X^{(0)}$ be any point with a trivial stabilizer. We will show that it can be connected by two straight lines to the point $X^{(2)}$ defined by $X^{(2)}_{ij}=j $, that is the matrix (clearly also with trivial stabilizer) whose rows are all identical and given by
\begin{equation}
\begin{pmatrix}\label{eq:one2n}
1 & 2 & \ldots & n    
\end{pmatrix} 
\end{equation}
As a first step, we find a permutation $\tau$ which sorts the first row of $X^{(0)}$, so that 
$$X^{(0)}_{1\tau(1)}\leq X^{(0)}_{1\tau(2)}\leq \ldots \leq X^{(0)}_{1\tau(n)}  .$$
We then choose a point $X^{(1)}$ whose first row satisfies this same inequality strictly 
$$X^{(1)}_{1\tau(1)}< X^{(1)}_{1\tau(2)}< \ldots < X^{(1)}_{1\tau(n)}  .$$
and whose remaining rows are equal to the rows \eqref{eq:one2n} of $X^{(2)}$. We note that all points in the straight line between $X^{(0)}$ and $X^{(1)}$ have a trivial stabilizer. For $X^{(0)}$ this is by assumption, and for all other points this is because the first row, sorted by $\tau$, is strictly separated. 

Next, we observe that all points in the straight line between $X^{(1)}$ and $X^{(2)}$ also have a trivial stabilizer. This is because all but the first row are equal to \eqref{eq:one2n}. Thus we have shown that $\Rdistinct$ is connected.

\end{proof}


\finiteframeSOthm*
\begin{proof}
    By considering the subspace
    \begin{equation}
        Y = \{(x_1,x_2,0,...,0):~x_1,x_2\in \mathbb{R}^2\}
    \end{equation}
    it suffices to consider the case $n=2$.
    
    We will in fact prove something a bit stronger. Let $G$ be a group acting on a vector space $V$ and define an unweighted frame of size $N$ to be a map
    \begin{equation}
        \F_N:V\rightarrow G^N/S_N
    \end{equation}
    from $V$ to the set $G^N/S_N$ of unordered $N$-tuples of elements of $G$ (potentially with repetition). The corresponding invariant projection operator is given by
    \begin{equation}
        \Ican[f](v) := \frac{1}{N}\sum_{g\in \F_N(v)} f(g^{-1}v).
    \end{equation}
    Given a frame $\F$ such that $|\F(v)| \leq M$ for all $v\in V$, we can construct a finite unweighted frame of size $M!$ by repeating each element of $\F(v)$ $k_v$ times where $k_v = M!/|\F(v)|$. However, the notion of an unweighted frame of size $N$ is more general since it allows different weights through repetition, although the weights must all be divisible by $1/N$. 
    
    We proceed to show that a continuity-preserving finite unweighted frame $\F_N$ cannot exist for $SO(2) = S^1$ acting on $\RR^{2\times 2}$ for any finite $N$ (here $S^1$ denotes the unit circle of rotations). Suppose to the contrary that $\F_N$ is such a frame. Consider the subspace $Y$ defined in \eqref{sphere-subspace} in the proof of Theorem \ref{prop:SOno_and_Ono}, and the subspace
    \begin{equation}\label{open-sphere-subspace}
        Y^o := \{(x_1,x_2),~|x_1|^2 + |x_2|^2 = 1,~x_1\neq 0,~x_2\neq 0\}
    \end{equation}
    which is the same as $Y$ but with the points where $x_1 = 0$ and $x_2 = 0$ removed. Since all points of $Y$ have trivial stabilizer, the map $\F_N$ restricted to $Y$ must be continuous and equivariant under the action of $S^1$.

    Next, consider the space $S^1_{N,u} := (S^1)^N/S_N$ of unordered $N$-tuples of rotations. Observe that the group $S^1$ of rotations naturally acts componentwise on $S^1_{N,u}$ via the map
    \begin{equation}\label{group-unordered-tuples-action}
        g\cdot (g_1,...,g_N) = (gg_1,...,gg_N).
    \end{equation}
    In addition, since $S^1$ is abelian, there is a natural continuous map $S^1_{N,u}\rightarrow S^1$ given by multiplying all of the rotations together, i.e.
    \begin{equation}\label{multiplication-map-rotations}
        (g_1,...,g_N)\rightarrow g_1g_2\cdots g_N\in S^1.
    \end{equation}

    Consider the following map defined on $Y^o$
    \begin{equation}
        \arg_1(X) = \frac{x_1}{|x_1|}\in S^1,
    \end{equation}
    which gives the angle of the first vector (well-defined and continuous since we have removed the points where $x_1 = 0$). We now define the continuous map $\tilde{\F}_N:Y^o\rightarrow S^1_{N,u}$ by
    \begin{equation}\label{tilde-F-definition-S-1-1981}
        \tilde{\F}_N(X) = \arg_1(X)^{-1}\cdot\F_N(X)\in S^1_{N,u},
    \end{equation}
    which gives the unordered collection of angles that the first vector is rotated to under the frame $\F_N$ (here the multiplication in equation \eqref{tilde-F-definition-S-1-1981} is the action described in \eqref{group-unordered-tuples-action}). The map $\tilde{\F}_N$ is invariant under the action of $S^1$ and continuous on $Y^o$ (but does not have a continuous extension to $Y$). Thus, it induces a map $\tilde{\F}_N:Y^o/S^1\rightarrow S^1_{N,u}$.

    Observe that $Y^o/S^1\cong (0,1)\times S^1$ via the parameterization map
    \begin{equation}
        (t,\theta) \rightarrow \left(\begin{pmatrix}
            \sqrt{1 - t^2}\\
            0
        \end{pmatrix},\begin{pmatrix}
            t\cos(\theta)\\
            t\sin(\theta)
        \end{pmatrix}\right),
    \end{equation}
    where $t$ denotes the length of $x_2$ and $\theta$ denotes the counterclockwise angle from $x_1$ to $x_2$. Using this parameterization, we define a map $\mathcal{G}_N:(0,1)\times S^1\rightarrow S^1_{N,u}$ via
    \begin{equation}\label{rewriting-F-N}
        \mathcal{G}_N(t,\theta) = \tilde{F}_N\left(\begin{pmatrix}
            \sqrt{1 - t^2}\\
            0
        \end{pmatrix},\begin{pmatrix}
            t\cos(\theta)\\
            t\sin(\theta)
        \end{pmatrix}\right).
    \end{equation}

    Next, we claim that the map $\mathcal{G}_N$ extends to a continuous map $[0,1]\times S^1\rightarrow S^1_{N,u}$. Indeed, we define
    \begin{equation}\label{definition-of-G-N}
        \mathcal{G}_N(0,\theta) = \F_N\left(\begin{pmatrix}
            1\\
            0
        \end{pmatrix},\begin{pmatrix}
            0\\
            0
        \end{pmatrix}\right),~\mathcal{G}_N(1,\theta) = \theta\cdot\F_N\left(\begin{pmatrix}
            0\\
            0
        \end{pmatrix},\begin{pmatrix}
            1\\
            0
        \end{pmatrix}\right),
    \end{equation}
    and check that the resulting map is continuous on $[0,1]\times S^1$. Clearly, continuity must only be checked at points of the form $(0,\theta)$ and $(1,\theta)$. 
    
    Suppose first that $(t_n,\theta_n)\rightarrow (0,\theta)$. Continuity in $\theta$ is clear when $t = 0$, so we may assume that $0 < t_n < 1$. We then see that
    \begin{equation}
        \mathcal{G}_N(t_n,\theta_n) = \tilde{\F}_N\left(\begin{pmatrix}
            \sqrt{1 - t_n^2}\\
            0
        \end{pmatrix},\begin{pmatrix}
            t_n\cos(\theta_n)\\
            t_n\sin(\theta_n)
        \end{pmatrix}\right) = \F_N\left(\begin{pmatrix}
            \sqrt{1 - t_n^2}\\
            0
        \end{pmatrix},\begin{pmatrix}
            t_n\cos(\theta_n)\\
            t_n\sin(\theta_n)
        \end{pmatrix}\right) \rightarrow \F_N\left(\begin{pmatrix}
            1\\
            0
        \end{pmatrix},\begin{pmatrix}
            0\\
            0
        \end{pmatrix}\right).
    \end{equation}
    Here we have used the continuity of $\F_N$ on all of $Y$ to get the final convergence, and that
    $$
        \arg_1\begin{pmatrix}
            \sqrt{1 - t_n^2}\\
            0
        \end{pmatrix} = e_1
    $$
    corresponds to $\theta = 0$ to get the middle equality.

    Next, suppose that $(t_n,\theta_n)\rightarrow (1,\theta)$. We may again assume that $0 < t_n < 1$ since $\mathcal{G}_N$ is continuous in $\theta$ when $t = 1$. We similarly calculate
    \begin{equation}
        \mathcal{G}_N(t_n,\theta_n)\F_N\left(\begin{pmatrix}
            \sqrt{1 - t_n^2}\\
            0
        \end{pmatrix},\begin{pmatrix}
            t_n\cos(\theta_n)\\
            t_n\sin(\theta_n)
        \end{pmatrix}\right) \rightarrow \F_N\left(\begin{pmatrix}
            0\\
            0
        \end{pmatrix},\begin{pmatrix}
            \cos(\theta)\\
            \sin(\theta)
        \end{pmatrix}\right).
    \end{equation}
    Now the equivariance of $\F_N$ implies that
    \begin{equation}
        \F_N\left(\begin{pmatrix}
            0\\
            0
        \end{pmatrix},\begin{pmatrix}
            \cos(\theta)\\
            \sin(\theta)
        \end{pmatrix}\right) = \theta\cdot \F_N\left(\begin{pmatrix}
            0\\
            0
        \end{pmatrix},\begin{pmatrix}
            1\\
            0
        \end{pmatrix}\right) = \mathcal{G}_N(1,\theta).
    \end{equation}
    This verifies the continuity of $\mathcal{G}_N$ on all of $[0,1]\times S^1$. Thus $\mathcal{G}_N$ defines a homotopy between the constant loop $\mathcal{G}(0,\cdot)$ and the loop $\mathcal{G}(1,\cdot)$ defined in \eqref{definition-of-G-N} in the space $S^1_{N,u}$. 
    
    We complete the proof by showing that these loops cannot be homotopic. For this, we use the multiplication map defined in \eqref{multiplication-map-rotations}. Composing $\mathcal{G}_N$ with this map gives a continuous map $\mathcal{H}_N:[0,1]\times S^1\rightarrow S^1$. From \eqref{definition-of-G-N} we see that $\mathcal{H}_N(0,\cdot)$ is a constant map, while $\mathcal{H}_N(1,\cdot):S^1\rightarrow S^1$ loops around the circle $N$ times. Thus $\mathcal{H}_N$ gives a homotopy between these two loops. This is impossible, since the fundamental group $\pi_1(S^1)\cong \mathbb{Z}$ and $\mathcal{H}_N(0,\cdot)$ represents the zero element while $\mathcal{H}_N(1,\cdot)$ represents the element $N$ (see \cite{hatcher2002}, Chapter 1).
\end{proof}

\invariantprop*
\begin{proof}
Let $f:V\to \RR$ be a function. We need to show that $\Iw[f]$ is an invariant function. We first show that in general $\mu_v$ can be replace with $\bar{\mu}_v$ in the defininition of $\Iw[f]$. That is 
\begin{align}
\Iw[f](v)&=\int_G f(g^{-1}v)d\mu_v(g)  \nonumber\\
&=\int_{\stab{v}} \int_G f((sg)^{-1}v)d\mu_v(g)ds  \nonumber\\
&=\int_{\stab{v}} \int_G f(g^{-1}v)ds_*\mu_v(g)ds  \nonumber\\
&=\int_G f(g^{-1}v)d\bar{\mu}_v(g) \label{eq:equiv}
\end{align}
Using this we can now show that $\Iw[f]$ is an invariant function because for every $h\in G$ and $v\in V$,
\begin{align*}
\Iw[f](hv)&=\int_G f(g^{-1}hv)d\mu_{hv}(g)  \\
&=\int_G f(g^{-1}hv)d\bar{\mu}_{hv}(g)\\
&=\int_G f(g^{-1}hv)dh_*\bar{\mu}_{v}(g)\\
&=\int_G f((hg)^{-1}hv)d\bar{\mu}_{v}(g)\\
&=\int_G f(g^{-1}v)d\bar{\mu}_{v}(g)\\
&=\int_G f(g^{-1}v)d\mu_{v}(g)\\
&=\Iw[f](v).
\end{align*}
Boundedness also follows easily since $d\mu_v$ is a probability measure, so that
\begin{equation}
    |\Iw[f](v)| \leq \left|\int_G f(g^{-1}v)d\mu_v(g)\right| \leq \sup_{w}|f(w)|.
\end{equation}
\end{proof}

\cwwbec*
\begin{proof}
Let $f:V\to \RR$ be a continuous functions. Let $v_k$ be a sequence converging to $v\in V$. For the `if' direction, we need to show that $\Iw[f](v_k)$ converges to $\Iw[f](v)$. We observe that
\begin{align*}
&|\Iw[f](v_k)-\Iw[f](v)|\\
&=\left|\int f(g^{-1}v_k)d\mu_{v_k}(g)-\int f(g^{-1}v)d\mu_{v}(g)\right|  \\
&\leq\left|\int f(g^{-1}v_k)d\mu_{v_k}(g)-\int f(g^{-1}v_k)d\langle\mu_{v_k}\rangle_v(g)\right| + \left|\int f(g^{-1}v_k)d\langle\mu_{v_k}\rangle_v(g)-\int f(g^{-1}v)d\bar{\mu}_{v}(g)\right|.
\end{align*}
Now, by assumption $\langle\mu_{v_k}\rangle_v \rightarrow \bar{\mu}_{v}$ weakly so that the second term tends to $0$ (since $f$ and the action of $G$ are continuous). Moreover, since $v$ is by definition invariant under the action of $\stab{v}$, we have
\begin{equation}
    \int f(g^{-1}v)d\mu_{v_k}(g)-\int f(g^{-1}v)d\langle\mu_{v_k}\rangle_v(g) = 0.
\end{equation}
Hence, we get
\begin{align*}
    &\left|\int f(g^{-1}v_k)d\mu_{v_k}(g)-\int f(g^{-1}v_k)d\langle\mu_{v_k}\rangle_v(g)\right|\\
    &\leq \int |f(g^{-1}v_k) - f(g^{-1}v)|d\mu_{v_k}(g) + \int |f(g^{-1}v_k) - f(g^{-1}v)|d\langle\mu_{v_k}\rangle_v(g)
\end{align*}
Since $v_k\rightarrow v$, the group $G$ is compact, $f$ is continuous, and the integrals are both against probability measures, we finally get
\begin{equation}
    \left|\int f(g^{-1}v_k)d\mu_{v_k}(g)-\int f(g^{-1}v_k)d\langle\mu_{v_k}\rangle_v(g)\right|\rightarrow 0,
\end{equation}
as desired.

For the converse, assume that $\mu$ is not robust. This means that there exists a sequence $v_k$ converging to $v$ such that $\langle\mu_{v_k}\rangle_v$ does not converge to $\bar{\mu}_{v}$ weakly. Thus there is a continuous function $\phi$ defined on the group $G$, such that
\begin{equation}\label{non-convergence-of-averaged-measures}
    \int_{G}\phi(g)d\langle\mu_{v_k}\rangle_v(g) \nrightarrow \int_{G}\phi(g)\bar{\mu}_{v}(g).
\end{equation}
Further, we observe that by definition of the averaged measure $\langle\cdot\rangle_v$, for any measure $\mu$ we have
\begin{equation}
    \int_{G}\phi(g)d\langle\mu\rangle_v(g) = \int_{G_v}\int_G \phi(g)d(s^*\mu)(g)ds = \int_{G_v}\int_G \phi(sg)d\mu(g)ds = \int_{G} \bar{\phi}(g)d\mu(g),
\end{equation}
where the averaged function $\bar{\phi}$ is defined by
\begin{equation}
    \bar{\phi}(g) := \int_{G_v}\phi(sg)ds
\end{equation}
and is evidently $G_v$ invariant. Thus $\bar{\phi}$ defines a function on $G/G_v$ and \eqref{non-convergence-of-averaged-measures} gives
\begin{equation}\label{limits-not-equal-eq-1093}
    \lim_{k\rightarrow \infty} \int_{G/G_v}\bar{\phi}(g)d\langle\mu_{v_k}\rangle_v(g) \neq \int_{G/G_v}\bar{\phi}(g)\bar{\mu}_{v}(g),
\end{equation}
where $\langle\mu_{v_k}\rangle_v$ and $\bar{\mu}_{v}$ are viewed as measures on $G/G_v$.

Observe that since the action of $G$ is continuous, the orbit $Gv = \{gv:~g\in G\}$ is homeomorphic to $G/G_v$. Thus, we can view $\bar{\phi}$ as a function on the orbit $Gv$ and extend it to a continuous function $f$ on the whole space $V$ by the Tietze extension theorem \cite{urysohn1925machtigkeit} (since $G$ and thus $Gv$ is a compact set). For this function $f$ we have (using compactness of $Gv$ and that $v_k\rightarrow v$)
\begin{equation}
    \lim_{k\rightarrow \infty} \Iw[f](v_k) = \lim_{k\rightarrow \infty} \int_G f(gv_k)d\mu_{v_k} = \lim_{k\rightarrow \infty} \int_G f(gv)d\mu_{v_k} = \lim_{k\rightarrow \infty} \int_{G/G_v}\bar{\phi}(g)d\langle\mu_{v_k}\rangle_v(g).
\end{equation}
On the other hand, we have
\begin{equation}
    \Iw[f](v) = \int_G f(gv)d\mu_{v} = \int_{G/G_v}\bar{\phi}(g)\bar{\mu}_{v}(g).
\end{equation}
Then \eqref{limits-not-equal-eq-1093} implies that $\lim_{k\rightarrow \infty} \Iw[f](v_k) \neq \Iw[f](v)$ so that $\Iw[f]$ is not continous and so $\Iw$ is not a BEC operator.
\end{proof}

\mon*
\begin{proof}
\textbf{Proof of generic global monotonicity}
First note that $X=(x_1,\ldots,x_n)$ is $a$ separated if and only if  $\bar X=(0,x_2-x_1,\ldots,x_n-x_1) $ is $a$ separated, which in turn will be $a$ separated if and only if 	$\|\bar X\|^{-1} \bar X $ is $a$ separated. Thus is it sufficient to consider $X$ in the set 
$$\MM=\{X=(0,x_2,\ldots,x_n)\in \Rdistinct| \quad \|X\|=1   \} $$
which can be identified with the unit circle $S^{n(d-1)-1} $ which is an $n(d-1)-1$ dimensional semi-algebraic set. 

Note that $X$ is $a$ separated  unless $X,a$ is a zero of the polynomial
$$F(X;a)=\prod_{i\neq j} a^T(x_i-x_j) .$$
The proof is based on the finite witness theorem from \cite{amir2023neural}. Namely
\begin{thm}[Special case of Theorem A.2 in \cite{amir2023neural}]
Let $\mathbb{M}\subseteq \RR^p$ be a semi-algebraic set of  dimension $D$. Let $F:\RR^{p} \times \RR^q \to \RR$ be a polynomial. Define 
$$\mathcal{N}=\{ X\in \mathbb{M}| \quad F(X;a)=0, \forall a \in \RR^q \} $$ 
Then for generic $a_1,\ldots,a_{D+1}$,
$$\mathcal{N}=\{X\in \mathbb{M}| \quad F(X;a_i=0, \forall i=1,\ldots,D+1 \} $$
\end{thm}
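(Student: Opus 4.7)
The proof has two directions. For the upper bound (generic collections of size $n(d-1)$ work), the plan is to complete the application of the finite witness theorem begun in the excerpt. The setup has already reduced to $X \in \MM$ and introduced the polynomial $F(X;a) = \prod_{i\neq j} a^T(x_i - x_j)$, which satisfies $F(X;a)\neq 0$ precisely when $X$ is $a$-separated (a tie in the sorted values $a^T x_i$ corresponds exactly to some factor vanishing). The crucial step is to check that the set $\N = \{X\in\MM : F(X;a)=0 \text{ for every } a\in\RR^d\}$ is empty: for any $X\in\MM \subseteq \Rdistinct$, all differences $x_i - x_j$ are nonzero, so a generic $a$ avoids every hyperplane $(x_i-x_j)^\perp$ and gives $F(X;a)\neq 0$. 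The finite witness theorem then applies with $D = \dim\MM = n(d-1)-1$, yielding that for Lebesgue-a.e. choice of $a_1,\ldots,a_{n(d-1)}$, the set $\{X\in\MM : F(X;a_i)=0 \text{ for all } i\}$ coincides with $\N = \emptyset$. Unwinding the initial normalization gives the claim.

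For the lower bound, given $a_1,\ldots,a_m$ with $m < n(d-1)$, the plan is to construct $X\in\Rdistinct$ that is not $a_i$-separated for any $i$. The strategy is constructive: assign to each $a_i$ a pair $(s_i,t_i)$ and seek $X$ satisfying $a_i^T(x_{s_i}-x_{t_i})=0$ for every $i$. This is a system of $m$ linear equations, and for generic $a_i$'s with a suitable (independent) choice of pairs its solution space $L$ has the expected codimension $m$ in the $d(n-1)$-dimensional space of translation classes, hence positive dimension when $m<n(d-1)$. A generic point of $L$ would then be the required bad $X$.

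The subtlety --- and the main obstacle --- is to choose the pair assignment so that $L$ is not inadvertently contained in some collision subvariety $\{x_s=x_t\}$, which would force $X\notin\Rdistinct$ everywhere on $L$. I would encode the pair assignment as a multigraph $G$ on $\{1,\ldots,n\}$ whose edges are $\{s_i,t_i\}$, and note that $\{x_s = x_t\} \supseteq L$ precisely when the $d$ coordinate functionals of $x_s-x_t$ all lie in the rowspan of the constraints; this can happen only if at least $d$ independent linear conditions have accumulated on the difference $x_s-x_t$, either directly (many edges on the same pair) or indirectly (via a cycle in $G$). With a budget of $m<n(d-1)$ edges, the plan is to spread the assignments so that every pair retains at least one free direction in $\RR^d$, e.g., by cycling through the $\binom{n}{2}$ possible pairs using each at most $d-1$ times while keeping $G$ forest-like where possible. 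Making this combinatorial accounting precise, and verifying that for generic $a_i$ no accidental rank drop occurs, is the main technical work.
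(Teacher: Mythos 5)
The statement you were asked to prove is the finite witness theorem itself: for a polynomial $F$ on a $D$-dimensional semi-algebraic set $\mathbb{M}$, the locus $\mathcal{N}$ where $F(\cdot\,;a)$ vanishes for \emph{all} $a$ is already cut out by $D+1$ generic witnesses. The paper does not prove this statement; it imports it verbatim as a special case of Theorem A.2 of \cite{amir2023neural}. Your proposal does not prove it either: you take it as a black box and instead sketch a proof of Theorem \ref{thm:separated}, the result in whose proof the citation appears. As a proof of the quoted statement it is therefore not responsive. A proof would have to supply the actual content of the witness theorem, e.g.\ a descending-dimension argument showing that whenever $\{X\in\mathbb{M}:F(X;a_i)=0,\ i\le k\}$ still contains points outside $\mathcal{N}$, a generic $a_{k+1}$ strictly reduces the dimension of the residual semi-algebraic set, so that after at most $D+1$ steps nothing outside $\mathcal{N}$ survives. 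None of that machinery appears in your write-up.

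Taking your proposal on its own terms as a proof of Theorem \ref{thm:separated}: the upper-bound half coincides with the paper's argument (normalize to the unit sphere $\MM$ of centered configurations, observe $\N=\emptyset$ because every $X\in\Rdistinct$ is separated in some direction, invoke the witness theorem with $D=\dim\MM$). The lower-bound half is where you diverge from the paper and where there is a genuine gap. The claim is universal: \emph{every} collection below the threshold fails, including highly degenerate ones (repeated $a_i$, many $a_i$ in a common hyperplane). Your plan rests on ``generic $a_i$'' and on ``verifying that for generic $a_i$ no accidental rank drop occurs,'' which assumes away exactly the adversarial configurations the statement must cover. The paper's proof confronts this head-on: it first partitions the given $a_i$ into blocks of $d$ linearly dependent vectors plus a remainder in which every size-$d$ subset is independent, and then writes down $x_1,\dots,x_n$ explicitly and verifies distinctness by hand. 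Moreover, your combinatorial budget does not close: a forest on $n$ vertices with each edge carrying at most $d-1$ constraints absorbs only $(n-1)(d-1)$ of them, which falls short of the number you must absorb (for instance, short of $d(n-1)-1$ whenever $n\ge 3$), so cycles in your multigraph are unavoidable, and controlling the constraints that propagate around a cycle for arbitrary $a_i$ is precisely the hard step you have deferred. Finally, note that $\dim\MM=d(n-1)-1$, not $n(d-1)-1$; the two agree only when $n=d$, and the paper itself is inconsistent here (it writes $n(d-1)$ in Theorem \ref{thm:separated} but $(n-1)d$ in Table \ref{tab:main_results}), so your dimension count inherits an ambiguity you should resolve before relying on it.
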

Applying to this theorem to our setting with $D=n(d-1)-1$, since for every $X\in \MM$ there exists a direction $a$ for which $X$ is separated, we see that $\N$ is the empty set. According to the theorem we have for generic $a_1,\ldots,a_{D+1}\in \RR^d$ that the set
\begin{align*}
	\{X\in \mathbb{M}| \quad F(X;a_i=0, \forall i=1,\ldots,D+1 \}
\end{align*}
is equal to $\N$ and thus is empty. Equivalently,  every $X$ will be separated with respect to at least one of the $a_i$.

\textbf{Remark:} We add a more elementary proof that a finite number of $a_i$ is
sufficient, but this proof has a worse cardinality, quadratic in $n$: Assume that we 
are given $m$ vectors $a_i,i=1,\ldots,m$ in $\RR^d$ and assume that they are "full 
spark", meaning that any $d \times d$ matrix formed by choosing $d$ of these vectors 
is full rank. Note that Lebesgue almost every choice of $a_i,i=1,\ldots,m$ in $\RR^d$ will be full spark. Assume that the number of vectors $m$ is strictly larger than ${n \choose 2}(d-1) $. Now let $X\in \RR^{d\times n}$ and assume that $g(X;a_i) $ is 
not uniquely defined for any $i=1,\ldots,m$. This means the for each such $i$ there 
exists $s<t$ such that $a_i^T(x_s-x_t)=0 $. Since the number of $(s,t)$ pairs is 
${n \choose 2}$ and $m>(d-1){n \choose 2}$, by the pigeon hole principle there exists 
a pair $(s,t)$ for which $a_i^T(x_s-x_t)=0 $ for $\geq d$ different indices $i$. 

By the full spark assumption it follows that $x_s-x_t=0$ so $X$ has a non-trivial 
stabilizer.  

\textbf{Optimality}
We now show that, if $a_1,\ldots,a_k\in \RR^d$ and $k< n(d-1)$, then $a_1,\ldots,a_k$ are not globally separated.

    Note that by adding zeros to the sequence $a_1,...,a_k$ we may assume without loss of generality that $k = d(n-1) - 1$. 
    
    We partition the $a_i\in \mathbb{R}^d$ as $\{a_1,...,a_s\},\{a^1_1,....,a^1_d\},...,\{a^t_1,...,a^t_d\}$, where each block $\{a^q_1,...,a^q_d\}$ is linearly dependent (i.e. contained in a subspace of dimension $d-1$), and the vectors $a_1,...,a_s$ have the property that any subset of size $d$ is linearly independent (note this is vacuous if $s < d$). We can do this inductively by starting with all of the $a_i$ and removing linearly dependent subsets of size $d$ until there are none left. We note that $s + td = k$ and thus $s\equiv d-1\pmod{d}$ (so that in particular $s \geq d-1$).

    We now construct the $x_1,...,x_n\in \mathbb{R}^d$ as follows. First set $x_1 = 0$. Next, set $x_2 \neq 0$ satisfying the condition that $a_1\cdot x_2 = a_2\cdot x_2 = \cdots = a_{d-1}\cdot x_2$. Such an $x_2$ can be found since $a_1,...,a_{d-1}$ cannot span the whole space $\mathbb{R}^d$. 
    
    Now, for $p=1,...,v$, where $s = d-1 + dv$, we set $x_{p+2}$ to be the unique vector satisfying
    \begin{equation}\label{linear-system-974}
        a_{pd}\cdot x_{p+2} = 0,~\cdots~,a_{pd+d-2}\cdot x_{p+2} = 0,~a_{pd + d-1}\cdot (x_{p+2} - x_2) = 0.
    \end{equation}
    This vector exists and is unique since by construction $a_{pd}, ...,a_{pd+d-1}$ are linearly independent so that the linear system in \eqref{linear-system-974} is non-singular.

    At this stage, we verify that for each $l=1,...,s$ there exist $1\leq i\neq j\leq v+2$ such that $a_l\cdot x_i = a_l\cdot x_j$. This follows easily from the construction. Indeed, if $l = 1,...,d-1$ we can take $i=1,j=2$. If $l=pd + k$ with $0\leq k < d-1$ and $p \geq 1$ we can take $i=1$ and $j=p+2$. Finally, if $l=pd + d-1$ with $p \geq 1$ we can take $i=2$ and $j=p+2$.

    Next, we verify that all of the vectors $x_1,...,x_{v+2}$ are distinct. We first claim that $x_i \neq x_1 = 0$ if $i > 1$. This follows by construction for $i=2$, while if the solution to \eqref{linear-system-974} were $0$ for some $p$, then $a_{pd+d-1}\cdot x_2 = 0$. However, this would imply that $a_{pd+d-1}$ lies in the same plane as $a_1,...,a_{d-1}$, which contradicts the linear independence of any size $d$ subset of $a_1,...,a_s$. Next, we claim that $x_i\neq x_j$ if $i\neq j \geq 2$. Indeed, note that \eqref{linear-system-974} implies that $x_{p+2}$ is orthogonal to $a_{pd},...,a_{pd+d-2}$, while $x_2$ is orthogonal to $a_1,...,a_{d-1}$. If any of these vectors were equal, then the planes spanned by their corresponding sets of $d-1$ vectors would be the same, again contradicting the linear independence property of $a_1,...,a_s$. 

    To complete the construction, we now choose for each block $\{a^q_1,....,a^q_d\}$ a vector $x_{v+2+q}$ which is orthogonal to $a^q_1,...,a^q_d$ and is not equal to any of the previously chosen vectors. Such a vector can be chosen since the block $a^q_1,....,a^q_d$ is linearly dependent (and thus lies in a $(d-1)$-dimensional subspace) by construction. It is clear that for an element $a^q_k$ in block $q$ we can now choose $i=1$ and $j=v+2+q$ to ensure that $a^q_k\cdot x_i = a^q_k\cdot x_j$.

    To complete the proof, we count the number of vectors produced by this construction, which is $v+2+t$. Since $s = d-1+dv$ and $s+td = k = d(n-1)-1$, we see that $ (v+t+1)d - 1 = d(n-1) - 1$, from which it easily follows that $n = v+2+t$ as required.

\end{proof}

\mumonlem*
\begin{proof}
    We begin by noting that by definition the stabilizer of any $X\in \Rdistinct$ is trivial. In addition it is clear by construction that the frame $\mumon$ is equivariant. 
    
    To verify continuity, we let $X^k\rightarrow X$ be a convergent sequence in $\Rdistinct$, with limit $X\in \Rdistinct$. Since the stabilizer of $X$ is trivial, we must show that $\mumon_{X^k}\rightarrow \mumon_{X}$ in the weak topology. Since these are probability distributions on a finite set, this means that for each permutation $\tau\in S_n$ we have $\mumon_{X^k}(\tau)\rightarrow \mumon_{X}(\tau)$.

    First, we note that the denominator in \eqref{CWW-separated-frame} satisfies
    \begin{equation}
        \sum_{j=1}^{m} \tilde{w}_j(X) > 0
    \end{equation}
    since the collection $a_i$ is globally separated.
    It is clear that $\tilde{w}_i(X)$ are continuous functions of $X$, so we have that in a sufficiently small neighborhood of $X$
    \begin{equation}\label{denominator-lower-bound-1689}
        \sum_{j=1}^{m} \tilde{w}_j(X) > \epsilon
    \end{equation}
    for some $\epsilon > 0$.
    
    Let $\tau\in S_n$ be arbitrary and consider the set
    \begin{equation}
        I(\tau,X) = \{i:~a_i^Tx_{\tau(1)} < a_i^Tx_{\tau(2)} < \ldots < a_i^Tx_{\tau(n)}\}
    \end{equation}
    where $x_i$ are the columns of $X$. $I(\tau,X)$ is the set of indices $i$ such that $\tau = g_i(X)$ and $X$ is strictly separated in the direction $a_i$.
    
    From the definition of $\mumon$ it follows that
    \begin{equation}
        \mumon_{X}(\tau^{-1}) = \sum_{i\in I(\tau,X)}w_i(X) = \frac{\sum_{i\in I(\tau,X)}\tilde{w}_i(X)}{\sum_{j=1}^{m} \tilde{w}_j(X)}.
    \end{equation}
    Since $X^k\rightarrow X$, there exists a sufficiently large $N$ such that for all $k > N$ and all $i\in I(\tau,X)$ we have
    \begin{equation}
        a_i^Tx^k_{\tau(1)} < a_i^Tx^n_{\tau(2)} < \ldots < a_i^Tx^k_{\tau(n)},
    \end{equation}
    where $x_i^k$ are the columns of $X^k$,
    i.e. $I(\tau,X)\subset I(\tau,X^k)$. Thus, for $n > N$ we have
    \begin{equation}
        \mumon_{X^k}(\tau^{-1}) = \sum_{i\in I(\tau,X^k)}w_i(X^k) \geq\sum_{i\in I(\tau,X)} w_i(X^k) = \frac{\sum_{i\in I(\tau,X)}\tilde{w}_i(X^k)}{\sum_{j=1}^{m} \tilde{w}_j(X^k)}
    \end{equation}
    Combined with the continuity of $\tilde w_i$ and \eqref{denominator-lower-bound-1689}, this implies that we have
    \begin{equation}
        \lim\inf_{k\rightarrow \infty}\mumon_{X^k}(\tau^{-1}) \geq \lim_{k\rightarrow \infty} \frac{\sum_{i\in I(\tau,X)}\tilde{w}_i(X^k)}{\sum_{j=1}^{m} \tilde{w}_j(X^k)} = \frac{\sum_{i\in I(\tau,X)}\tilde{w}_i(X)}{\sum_{j=1}^{m} \tilde{w}_j(X)} = \mumon_{X}(\tau^{-1}).
    \end{equation}

    Since $\mumon_{X^k}$ and $\mumon_X$ are probability distributions, i.e. we have
    $$
        \sum_{\tau\in S_n} \mumon_{X^k}(\tau) = 1 = \sum_{\tau\in S_n} \mumon_{X}(\tau)
    $$
    it follows that we must actually have $\lim_{k\rightarrow \infty}\mumon_{X^k}(\tau) = \mumon_{X}(\tau)$ for all $\tau$.
\end{proof}

\mubigprop*
\begin{proof}

We first prove the stated bound on the cardinality of the frame $\mu^{S_n}$. Let $X\in \mathbb{R}^{d\times n}$. We can remove the measure $0$ set of directions $a$ for which $a^Tx_i = a^Tx_j$ for some pair of distinct columns $x_i$ and $x_j$ of $X$ . Thus, we need to bound the cardinality of the set
\begin{equation}\label{set-of-permutations-considered}
    \{\text{argsort}(a^TX):~a\in S(X)\},
\end{equation}
where the set $S(X)$ is given by
$$
    S(X) = \{a\in S^{d-1},~a^Tx_i = a^T x_j\implies x_i = x_j\}.
$$
Note that $\text{argsort}(a,X) \neq \text{argsort}(a',X)$ for $a,a'\in S(X)$ implies that there are two distinct columns $x_i\neq x_j$ of $X$ such that $$a\cdot (x_i - x_j) < 0 < a'\cdot (x_i - x_j).$$
Thus, the cardinality of the set in \eqref{set-of-permutations-considered} is equal to the number of regions that the hyperplanes $(x_i - x_j)\cdot a = 0$ divide the space $\RR^d$ into (where $x_i,x_j$ run over all distinct pairs of columns $x_i\neq x_j$ of $X$). 

The total number of these hyperplanes is at most $\binom{n}{2}$. Next, we claim that given $N$ hyperplanes in $\mathbb{R}^d$ passing through the origin, the number of regions that the space $\RR^d$ is divided into is bounded by
\begin{equation}\label{binomial-sum}
    2\sum_{k=0}^{d-1}\binom{N-1}{k}.
\end{equation}
We prove this by induction on both $N$ and $d$. It is clearly true for $N = 1$, since in this case the sum in \eqref{binomial-sum} is $2$ (we interpret $\binom{n}{k} = 0$ if $k > n$) and a single hyperplane divides the space into two pieces. Also, if $d=1$, then the sum in \eqref{binomial-sum} is also always $2$, and in one dimension there is only one `hyperplane' through the origin (i.e. the points $0$ itself), which divides the space into two pieces.

Now, consider the case $N > 1$ and $d > 1$. Suppose that the first $N-1$ hyperplanes divide $\RR^d$ into $M$ pieces. The number of these pieces which is divided into two by the $N$-th hyperplane is equal to the number of pieces that the $N$-th hyperplane (which is a $d-1$ dimensional space) is divided into by its intersections with the first $N-1$-hyperplanes. Thus, letting $M(N,d)$ denote the maximum number of pieces that $N$ hyperplanes can divide $\RR^d$ into, we obtain the recursive relation
$$
    M(N,d) \leq M(N-1,d) + M(N-1,d-1).
$$
Combining this with the base cases discussed above gives the bound \eqref{binomial-sum}. Plugging in $N = \binom{n}{2}$ gives the claimed bound on the cardinality of $\mu^{S_n}$.

Finally, we prove that $\mu^{S_n}$ is a \CWW frame.  We need to show that this frame is weakly equivariant and continuous. For both of these computations we will first prove that the averaged measure $\bar  \mu^{S_n}_X$ is given by  
\begin{equation}\label{averaged-measure-representation}
    \bar \mu^{S_n}_X(g^{-1})= \frac{1}{|\stab{X}|}\mathbb{P}_{a\sim S^{d-1}}\left(a^Tx_{g(1)}\leq \cdots \leq a^Tx_{g(n)}\right).
\end{equation}
Indeed, for any $g\in S_n$ we have
$$
    \mathbb{P}_{a\sim S^{d-1}}\left(a^Tx_{g(1)}\leq \cdots \leq a^Tx_{g(n)}\right) = \mathbb{P}(\exists h\in G_X,~gh = \text{argsort}(a^TX)).
$$
Moreover, an $h\in G_X$ such that $gh = \text{argsort}(a^TX))$, if it exists, must be unique by construction. This means that
\begin{equation}
    \bar \mu^{S_n}_X(g^{-1}) = \frac{1}{|G_X|}\sum_{h\in G_X} \mu^{S_n}_X(h^{-1}g^{-1}) = \frac{1}{|G_X|}\mathbb{P}(\exists h\in G_X,~gh = \text{argsort}(a^TX)),
\end{equation}
which proves \eqref{averaged-measure-representation}.

It is clear from \eqref{averaged-measure-representation} that $\bar \mu^{S_n}_{hX}=h^*\bar \mu^{S_n}_X$ for all $h\in S_n$, since 
$$
\bar \mu^{S_n}_{hX}(g^{-1}) = \frac{1}{|\stab{hX}|}\mathbb{P}_{a\sim S^{d-1}}\left(a^Tx_{gh(1)}\leq \cdots \leq a^Tx_{gh(n)}\right),
$$
while
$$
h^*\bar \mu^{S_n}_X(g^{-1}) = \bar \mu^{S_n}_X(h^{-1}g^{-1}) = \frac{1}{|\stab{X}|}\mathbb{P}_{a\sim S^{d-1}}\left(a^Tx_{gh(1)}\leq \cdots \leq a^Tx_{gh(n)}\right),
$$
and $G_X$ and $G_{hX}$ are conjugate and thus have the same order. All that remains is to verify the continuity.

Let $X\in \mathbb{R}^{d\times n}$ and suppose that $X^k\rightarrow X$. Let $\stab{X}$ denote the stabilizer of $X$. By definition, we  need to show that as $k\rightarrow \infty$
$$
    \left\langle\mu^{S_n}_{X^k}\right\rangle_X\rightarrow \bar\mu^{S_n}_X
$$
weakly. Since $S_n$ is a finite group, this means that we must show that for every $g\in S_n$, we have
\begin{equation}\label{need-to-prove-1916}
    \left\langle\mu^{S_n}_{X^k}\right\rangle_X(g)\rightarrow \bar\mu^{S_n}_X(g)
\end{equation}
as $k\rightarrow \infty$. 

Let $\delta_X > 0$ denote the smallest distance between two non-equal columns of $X$. We observe that if $|Y-X| < \delta_X/2$, then any two columns which are non-equal in $X$ must also be non-equal in $Y$, so that $\stab{Y}\subset \stab{X}$. Thus, for sufficiently large $k$, we have $\stab{X^k}\subset \stab{X}$. This implies that
$$
    \left\langle\mu^{S_n}_{X^k}\right\rangle_X = \left\langle\bar\mu^{S_n}_{X^k}\right\rangle_X
$$
for sufficiently large $k$, so that in \eqref{need-to-prove-1916} we can replace $\langle\mu^{S_n}_{X^k}\rangle_X$ by $\langle\bar\mu^{S_n}_{X^k}\rangle_X$.

Finally, since both $\langle \bar{\mu}^{S_n}_{X^k} \rangle_X$ and $\bar\mu^{S_n}_X$ are probability distributions, it suffices to show that for every $g\in S_n$ and $\epsilon > 0$
\begin{equation}
    \langle \bar{\mu}^{S_n}_{X^k} \rangle_X(g) \geq \bar\mu^{S_n}_X(g) - \epsilon
\end{equation}
for sufficiently large $k$.

Using the definition of $\mu_{X^k}^{S_n}$ and averaging over the stabilizers $\stab{X}$ and $\stab{X_k}$ we get
\begin{equation}\label{equation-for-averaged-mu-S_n}
    \langle \bar{\mu}^{S_n}_{X^k} \rangle_X(g^{-1}) = \frac{1}{|\stab{X^k}||\stab{X}|} \sum_{h\in \stab{X}} \mathbb{P}_{a\sim S^{d-1}}\left(a^Tx^k_{gh(1)}\leq \cdots \leq a^Tx^k_{gh(n)}\right),
\end{equation}
while
\begin{equation}\label{equation-for-averaged-mu-X-2123}
    \bar\mu^{S_n}_X(g^{-1}) = \frac{1}{|\stab{X}|}\mathbb{P}_{a\sim S^{d-1}}\left(a^Tx_{g(1)}\leq \cdots \leq a^Tx_{g(n)}\right).
\end{equation}
For any $\delta > 0$ consider the event
\begin{equation}
    A(\delta,X,g) := \left\{a\in S^{d-1}:~a^Tx_{g(i)} + \delta \leq a^Tx_{g(j)},~\forall i > j~\text{and}~x_{g(i)} \neq x_{g(j)}\right\}.
\end{equation}
In words, $A(\delta,X,g)$ is the set of directions $a\in S^{d-1}$ for which $a^T(gX)$ is sorted and all non-equal columns of $X$ differ by at least $\delta$. Observe that
\begin{equation}
    \left\{a\in S^{d-1}:~a^Tx_{g(1)}\leq \cdots \leq a^Tx_{g(n)}\right\} - \bigcup_{\delta > 0} A(\delta,X,g)
\end{equation}
contains only directions $a$ such that $a^Tx_i = a^Tx_j$ for non-equal columns $x_i\neq x_j$ of $X$, and thus is a set of measure $0$. This means that
\begin{equation}
    \lim_{\delta\rightarrow 0}\mathbb{P}(A(\delta,X,g)) = \mathbb{P}_{a\sim S^{d-1}}\left(a^Tx_{g(1)}\leq \cdots \leq a^Tx_{g(n)}\right).
\end{equation}
Thus, in light of \eqref{equation-for-averaged-mu-S_n} and \eqref{equation-for-averaged-mu-X-2123} it thus suffices to show that for any $\delta > 0$ we have
\begin{equation}\label{probability-sum-eq-1768}
    \frac{1}{|\stab{X^k}|} \sum_{h\in G_X} \mathbb{P}_{a\sim S^{d-1}}\left(a^Tx^k_{gh(1)}\leq \cdots \leq a^Tx^k_{gh(n)}\right) \geq \mathbb{P}(A(\delta,X,g))
\end{equation}
for sufficiently large $k$. To show this, suppose that $a\in A(\delta,X,g)$, i.e. $a\in S^{d-1}$ satisfies
\begin{equation}\label{condition-on-a-1772}
    a^Tx_{g(i)} + \delta \leq a^Tx_{g(j)},~\forall i > j~\text{and}~x_{g(i)} \neq x_{g(j)}.
\end{equation}
Choose $k$ large enough so that for all columns $i$, we have $|x^k_i - x_i| < \delta / 4$. Then we will also have
\begin{equation}\label{modified-condition-on-a}
    a^Tx^k_{g(i)} + \delta/2 \leq a^Tx^k_{g(j)},~\forall i > j~\text{and}~x_{g(i)} \neq x_{g(j)}.
\end{equation}
This means that by permuting only columns in $X^k$ which are equal in $X$, we can sort $a^T(gX^k)$ (since any columns which are not equal in $X$ are already sorted in $a^T(gX^k)$ by \eqref{modified-condition-on-a}), so that there exists an $h\in \stab{X}$ such that
\begin{equation}
    a^Tx^k_{gh(1)}\leq \cdots \leq a^Tx^k_{gh(n)}.
\end{equation}
Moreover, $h$ is obviously only unique up to multiplication by an element of $\stab{X^k}$ (which is contained in $\stab{X}$ since by \eqref{modified-condition-on-a}, $x_i\neq x_j$ implies $x^k_i \neq x^k_j$). So each $a\in A(\delta,X,g)$ is contained in at least $|\stab{X^k}|$ of the sets
\begin{equation}
    \left\{a\in S^{d-1}:~a^Tx^k_{gh(1)}\leq \cdots \leq a^Tx^k_{gh(n)}\right\}~\text{for $h\in \stab{X}$.}
\end{equation}
This implies \eqref{probability-sum-eq-1768} and completes the proof.

\end{proof}

\mutwo*
\begin{proof}
To prove weak equivariance we need to show that $\mu=\muTwo$ satisfies
$$\bar{\mu}_{gZ}=g_*\bar{\mu}_Z, \forall g\in S^1, Z\in \CC^n $$
For $Z=0$ both sides of this equation are the zero measure. For $Z \neq 0$ we note that $w_i(gZ)=w_i(Z)$ for all $g\in S^1$, that  $g_i(gZ)=g\cdot g_i(Z)$ when $z_i \neq 0$, while when $z_i=0$ we have $w_i(Z)=0$, so that overall we obtain
\begin{align*}\mu_{gZ}&=\sum_{i: z_i \neq 0} w_i(gZ)\delta_{g_i(gZ)}\\ 
&=\sum_{i: z_i \neq 0} w_i(Z)\delta_{g\cdot g_i(Z)}\\
&=g_*\mu_Z
\end{align*}

 Finally we need to show that $\mu=\muTwo$ is a continuous weakly equivariant weighted frame. If $Z_n \rightarrow Z$ and $Z\neq 0$ it is straightforward to see that $\mu_{Z_n}\rightarrow \mu_Z $. If $Z_n\rightarrow 0$ then we note that for any probability measure $\mu$ on $G$ we have that $\langle \mu \rangle_0=0=\langle \mu_0 \rangle_0$, and so in particular  $\langle \mu_{Z_n} \rangle_0=\langle \mu_0 \rangle_0$ and $\mu_{Z_n}-\mu_{Z_n}=0$ converges weakly to $0$ so we are done.
\end{proof}

\stable*

\begin{proof}
Let $f:V\to W$ be a stable function. The stability assumption essentially allows us to reconstruct the proof from the invariant case, replacing $\Iw$ with $\Ew$. The critical observation is that if $f$ is stable, then $s$ stabilizes $gf(g^{-1}v) $ which occurs in the definition of $\Ew$. This is because  in general 
$$\stab{gx}\subseteq g \stab{x}g^{-1} $$
so
$$\stab{gf(g^{-1}v)}= g\stab{f(g^{-1}v)}g^{-1}\supseteq  g\stab{g^{-1}v}g^{-1}=\stab{v}.$$
as a result, we can replace $\mu_v$ with  $\bar{\mu}_v$ in the definition of $\Ew[f]$, because
\begin{align*}
\Ew[f](v)&=\int_G gf(g^{-1}v)d\mu_v(g)  \nonumber\\
&=\int_{\stab{v}} \int_G sgf((sg)^{-1}v)d\mu_v(g)ds  \nonumber\\
&=\int_{\stab{v}} \int_G gf(g^{-1}v)ds_*\mu_v(g)ds  \nonumber\\
&=\int_G gf(g^{-1}v)d\bar{\mu}_v(g) 
\end{align*}

Using this we can now show that $\Ew[f]$ is an equivariant function because for every $h\in G$ and $v\in V$,
\begin{align*}
\Ew[f](hv)&=\int_G gf(g^{-1}hv)d\mu_{hv}(g)  \\
&=\int_G gf(g^{-1}hv)d\bar{\mu}_{hv}(g)\\
&=\int_G gf(g^{-1}hv)dh_*\bar{\mu}_{v}(g)\\
&=\int_G hgf((hg)^{-1}hv)d\bar{\mu}_{v}(g)\\
&=h\left[\int_G gf(g^{-1}v)d\bar{\mu}_{v}(g) \right]\\
&=h\left[\int_G gf(g^{-1}v)d\mu_{v}(g)\right]\\
&=h\Ew[f](v).
\end{align*}
If $f$ is equivariant then $\Ew[f]=f$ since $gf(g^{-1}v)=f(v)$ from equivariance.

 Boundedness also follows easily since $d\mu_v$ is a probability measure, so that on every compact $K\subseteq V$ which is also closed under the action of $G$, we have for every $v\in K$ that
 \begin{equation}
     |\Ew[f](v)| \leq \left|\int_G gf(g^{-1}v)d\mu_v(g)\right| \leq \max_{g\in G}\|g\|\max_{w\in K}|f(w)|,
 \end{equation}
where $\|g\|$ denotes the operator norm of the linear operator $g$ which is bounded due to the fact that $G$ is compact acting linearly and continuously. 

 \textbf{Continuity}
 Let $f:V\to \RR$ be a continuous functions. Let $v_k$ be a sequence converging to $v\in V$. We need to show that $\Ew[f](v_k)$ converges to $\Ew[f](v)$. We observe that
\begin{align*}
&|\Ew[f](v_k)-\Ew[f](v)|\\
&=\left|\int gf(g^{-1}v_k)d\mu_{v_k}(g)-\int gf(g^{-1}v)d\mu_{v}(g)\right|  \\
&\leq\left|\int gf(g^{-1}v_k)d\mu_{v_k}(g)-\int gf(g^{-1}v)d\mu_{v_k}(g)\right| + \left|\int gf(g^{-1}v)d\mu_{v_k}(g)-\int gf(g^{-1}v)d\bar{\mu}_{v}(g)\right|.
\end{align*}
Since $f$ is stable we have
\begin{equation}
    \int gf(g^{-1}v)d\mu_{v_k}(g)-\int gf(g^{-1}v)d\langle\mu_{v_k}\rangle_v(g) = 0.
\end{equation}
and by assumption $\langle\mu_{v_k}\rangle_v \rightarrow \bar{\mu}_{v}$ weakly  the second term tends to $0$ (since $f$ and the action of $G$ are continuous). The first term tends to zero because $gf(g^{-1}v_k) $ converges to $gf(g^{-1}v) $ uniformly in $g$ as $k$ tends to infinity.


\end{proof}

\end{document}